%% file: iclr2025_conference.tex
\documentclass{article} % For LaTeX2e
\usepackage{iclr2025_conference,times}

\input{math_commands.tex}

\usepackage{pifont}
\usepackage{hyperref}
\usepackage{cleveref}
\usepackage{amssymb}
\usepackage{url}
\usepackage{graphicx} 
\usepackage{fontawesome5}
\usepackage{xspace}
\usepackage{wrapfig}
\usepackage{amsthm}
\usepackage{xcolor}
\usepackage[table]{xcolor}
\definecolor{darkgreen}{RGB}{0,128,0}
\definecolor{darkred}{RGB}{139,0,0}
\definecolor{mygray}{gray}{0.35}
\definecolor{lightgray}{gray}{0.9}

\newtheorem{remark}{Remark}

\newtheorem{theorem}{Theorem}

\usepackage{multirow}
\usepackage{graphicx}
\usepackage[normalem]{ulem}
\usepackage{booktabs} 
\Crefname{section}{section}{sections}
\Crefname{section}{Section}{Sections}
\newcommand{\ie}{\textit{i.e.}\xspace}
\newcommand{\eg}{e.g.\xspace}
\newcommand{\iconfire}{\raisebox{-0.2ex}{\includegraphics[height=1.7ex]{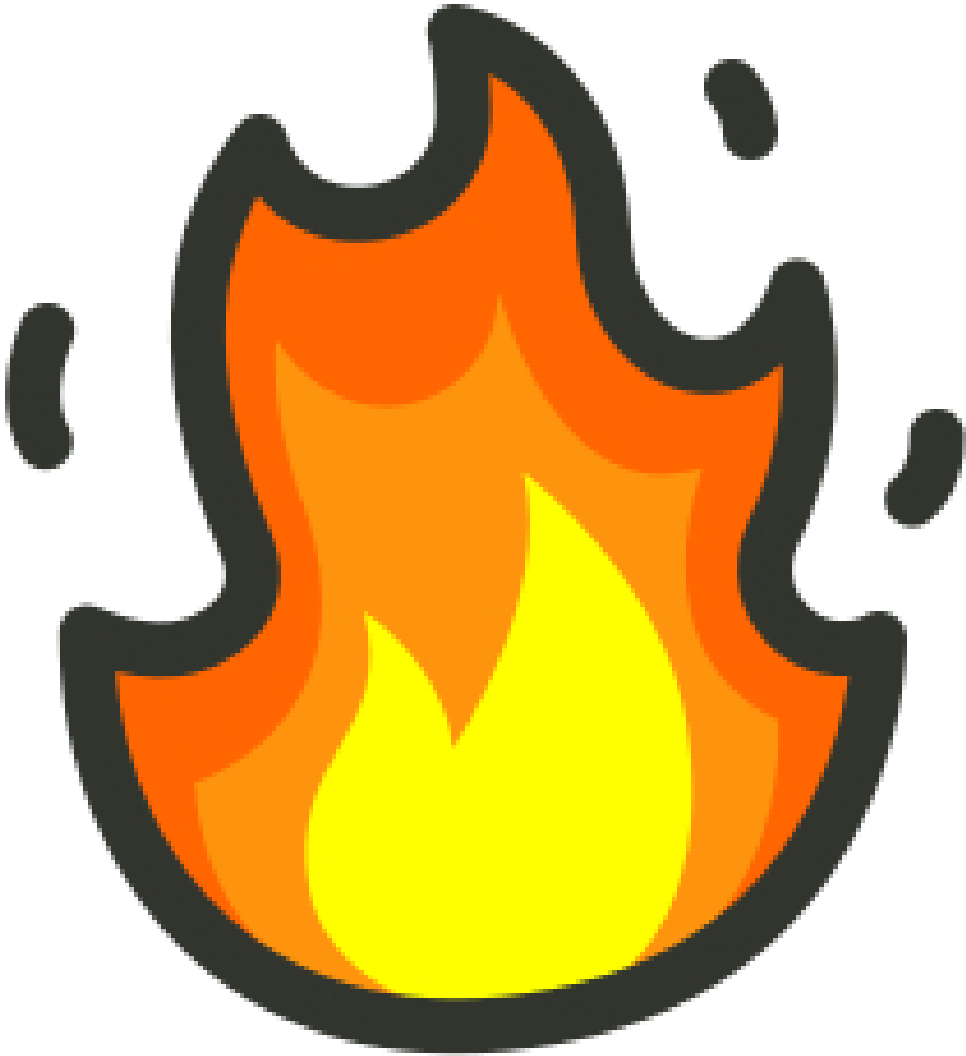}}}
\newcommand{\iconsnow}{\raisebox{-0.28ex}{\includegraphics[height=1.7ex]{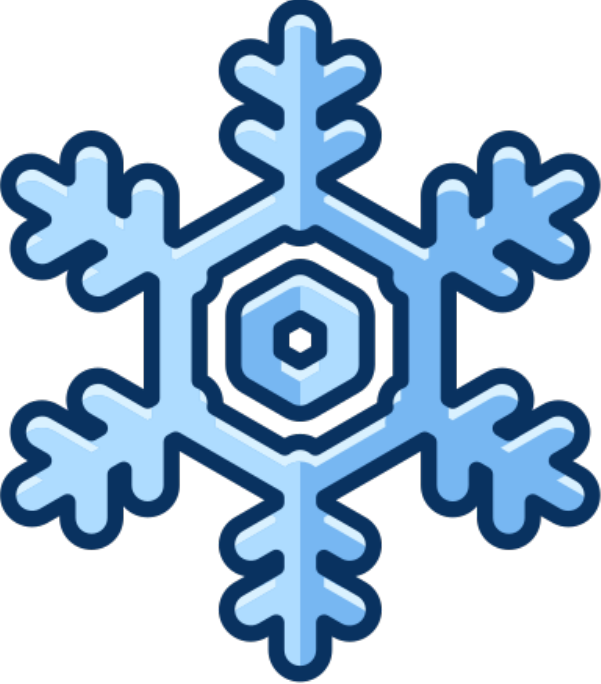}}}
\title{ASMIL: Attention-Stabilized Multiple Instance Learning for Whole Slide Imaging}

\author{%
\textbf{Linfeng Ye}\textsuperscript{1},
Shayan Mohajer Hamidi\textsuperscript{2},
Zhixiang Chi\textsuperscript{1},
Guang Li\textsuperscript{3},\\
\textbf{Mert Pilanci\textsuperscript{2},
Takahiro Ogawa\textsuperscript{3},
Miki Haseyama\textsuperscript{3},
Konstantinos N.~Plataniotis\textsuperscript{1}} \\
\textsuperscript{1}University of Toronto, \textsuperscript{2}Stanford University, \textsuperscript{3}Hokkaido University\\
\textsuperscript{1}\texttt{\{linfeng.ye,zhixiang.chi\}@mail.utoronto.ca}\\
\textsuperscript{1}\texttt{kostas@ece.utoronto.ca}\\
\textsuperscript{2}\texttt{\{smohajer,pilanci\}@stanford.edu} \\
\textsuperscript{3}\texttt{\{guang,ogawa,mhaseyama\}@lmd.ist.hokudai.ac.jp} 
}
\iclrfinalcopy % Uncomment for camera-ready version, but NOT for submission.
\begin{document}

\maketitle

\begin{abstract}

Attention-based multiple instance learning (MIL) has emerged as a powerful framework for whole slide image (WSI) diagnosis,  leveraging attention to aggregate instance-level features into bag-level predictions. Despite this success, we find that such methods exhibit a new failure mode: unstable attention dynamics.  Across four representative attention-based MIL methods and two public WSI datasets, we observe that attention distributions oscillate across epochs rather than converging to a consistent pattern, degrading performance. This instability adds to two previously reported challenges: overfitting and over-concentrated attention distribution. To simultaneously overcome these three limitations, we introduce attention-stabilized multiple instance learning (ASMIL), a novel unified framework. ASMIL uses an anchor model to stabilize attention, replaces softmax with a normalized sigmoid function in the anchor to prevent over-concentration, and applies token random dropping to mitigate overfitting. Extensive experiments demonstrate that ASMIL achieves up to a 6.49\% F1 score improvement over state-of-the-art methods. Moreover, integrating the anchor model and normalized sigmoid into existing attention-based MIL methods consistently boosts their performance, with F1 score gains up to 10.73\%. All code and data are publicly available at \url{https://github.com/Linfeng-Ye/ASMIL}.
\end{abstract}

\section{Introduction} \label{sec:intro}
Computational pathology, at the intersection of digital imaging, machine learning, and clinical diagnostics, has transformed modern workflows \citep{verghese2023computational}. Advances in whole slide imaging (WSI) now allow glass slides to be digitized into gigapixel images \citep{evans2001method}, which are central to cancer diagnosis and treatment planning. WSIs preserve rich spatial context and enable large-scale sharing, but their extreme size and sparsity create major challenges: diagnostically relevant regions often occupy only a tiny fraction of the slide, and exhaustive pixel- or tile-level annotations are infeasible in practice. As a result, most datasets provide only weak slide-level labels, making it critical to design methods that learn effectively under weak supervision.

This weakly supervised setting naturally motivates multiple instance learning (MIL) \citep{NIPS1990_e46de7e1,10.1016/S0004-3702(96)00034-3,10.5555/302528.302753}. In MIL, a bag of instances is mapped to a single bag-level label. For WSIs, the image is divided into tiles, each treated as an instance, while only the slide-level label is required. This dramatically reduces annotation costs and makes large-scale WSI datasets more practical for research and clinical use.

Early approaches to MIL-based WSI analysis focused on simple aggregation strategies, such as clustering instance features \citep{xu2014weakly} or applying global pooling layers \citep{kraus2016classifying}. A major breakthrough came with the introduction of attention-based MIL (ABMIL) \citep{ilse2018attention}, which provided theoretical guidance for neural network-based MIL algorithms and introduced a permutation-invariant attention mechanism to aggregate instance information into bag-level representations. ABMIL established a strong baseline for WSI analysis \citep{shao2025do} and, importantly, enhanced interpretability through visualized attention scores, which is an essential property for clinical adoption. Building on this foundation, subsequent works have refined ABMIL to further improve performance, scalability, and robustness \citep{xiong2021nystromformer, shao2021transmil, zhang2022dtfd, Tang_2023_ICCV, zhang2024attention}. In particular, TransMIL replaces independent instance weighting with a transformer encoder that explicitly models inter-instance relations within a bag \citep{shao2021transmil}. As a result, attention-based MIL has become the de facto choice for WSI subtyping not only because it aggregates instance features but also because its attention maps are used as clinical evidence of model interpretability.

Despite its success, attention-based MIL still suffers from three major problems, which we denote as \textbf{(PI)}, \textbf{(PII)}, and \textbf{(PIII)}, and elaborate on in the sequel.

A critical yet underexplored aspect of MIL-based WSI analysis is the convergence behavior of attention mechanisms during training. The gigapixel scale of WSIs, coupled with weak supervision, high variability, and sparsity, makes it difficult for models to consistently identify informative tiles among thousands of candidates. Our investigation reveals that existing MIL algorithms often fail to converge stably on WSI datasets. To the best of our knowledge, we are the first to identify and systematically analyze \textbf{(PI)} \emph{unstable attention dynamics}, where attention distributions for individual WSIs oscillate substantially across epochs instead of converging into consistent patterns. To quantify this phenomenon, we measure the Jensen-Shannon divergence \citep{cover1999elements} between consecutive attention distributions of the same WSI, as illustrated for TransMIL \citep{shao2021transmil} in \Cref{fig:ABMIL_fluctuation}. Additional experiments across methods and datasets are provided in \Cref{Appendix:AttentionDynamics}. This persistent oscillation results in unstable training and degraded performance, reflected in higher cross-entropy values compared to our proposed method.

Beyond this new limitation identified in our study, prior work has highlighted two additional challenges. One is \textbf{(PII)} \emph{over-concentrated attention distribution} \citep{zhang2024attention,lu2021data}, where models allocate excessive importance to only a few tiles, thereby harming generalization and interpretability. The other is \textbf{(PIII)} \emph{overfitting} \citep{zhang2022dtfd,lin2023interventional}, a common issue in histopathology WSI classification caused by the limited number of available training samples.

In this paper, we aim to simultaneously address the challenges \textbf{(PI)}–\textbf{(PIII)}. To stabilize attention distribution and the training process, we introduce an \emph{anchor model}, which has the same architecture as the online model's attention module and receives the same input, but is updated via an exponential moving average (EMA) instead of by backpropagation. Acting as a stable reference, the anchor provides smoother and more consistent attention distributions. To transfer this stability, we encourage the online model to mimic the anchor by minimizing the Kullback–Leibler (KL) divergence between their attention distributions. To mitigate over-concentration, which we attribute to the exponential sensitivity of the softmax function, we replace softmax in the anchor branch with a normalized sigmoid function (NSF), as defined in \Cref{Eq:NSF}. Finally, we propose a simple yet effective token dropout strategy that regularizes the model and reduces overfitting. Together with the anchor model, these components form a unified framework called attention-stabilized multiple instance learning (ASMIL), which improves both the stability and generalization of MIL-based WSI analysis.

In summary, this paper’s contributions are as follows:

\noindent $\bullet$ We are the first to identify and systematically analyze the problem of \textit{unstable attention dynamics} in attention-based MIL for WSI analysis. This overlooked issue not only limits predictive performance but also undermines interpretability, since fluctuating attention distributions prevent consistent identification of the tissue regions that drive the model’s decisions.

\noindent $\bullet$ To overcome this instability, we introduce an anchor model that stabilizes attention distribution throughout training. The anchor model is updated using an exponential moving average of the online model, which ensures stable training dynamics and improves both performance and interpretability.

\noindent $\bullet$ We show \textit{mathematically} that replacing softmax with an NSF alleviates attention over-concentration. Since applying the NSF to the online model causes vanishing gradients, we apply it to the anchor model instead, ensuring stable and well-distributed attention.

\noindent $\bullet$ To mitigate overfitting, we introduce token dropout, which randomly discards a portion of feature tokens during training while retaining all tokens during inference.

\noindent $\bullet$ By integrating these innovations, we present attention-stabilized MIL (ASMIL), a novel MIL-based WSI analysis algorithm. Through comprehensive experiments on multiple public WSI datasets, we demonstrate that ASMIL achieves state-of-the-art performance in subtyping and localization tasks.

\textit{Paper Organization.} The remainder of this paper is structured as follows: \Cref{Sec:relatedWork} reviews related work on MIL and attention mechanisms in WSI analysis; \Cref{Sec:NotationandPreliminaries} presents the preliminaries and motivation of our approach; \Cref{section:METHODOLOGY} details the ASMIL framework; \Cref{Sec:Experiments} presents the experimental setup and results; and finally \Cref{Sec:Conclusion} concludes the paper with future research directions.

\begin{figure}[ht]
\begin{center}
%\framebox[4.0in]{$\;$}
\includegraphics[width=\linewidth]{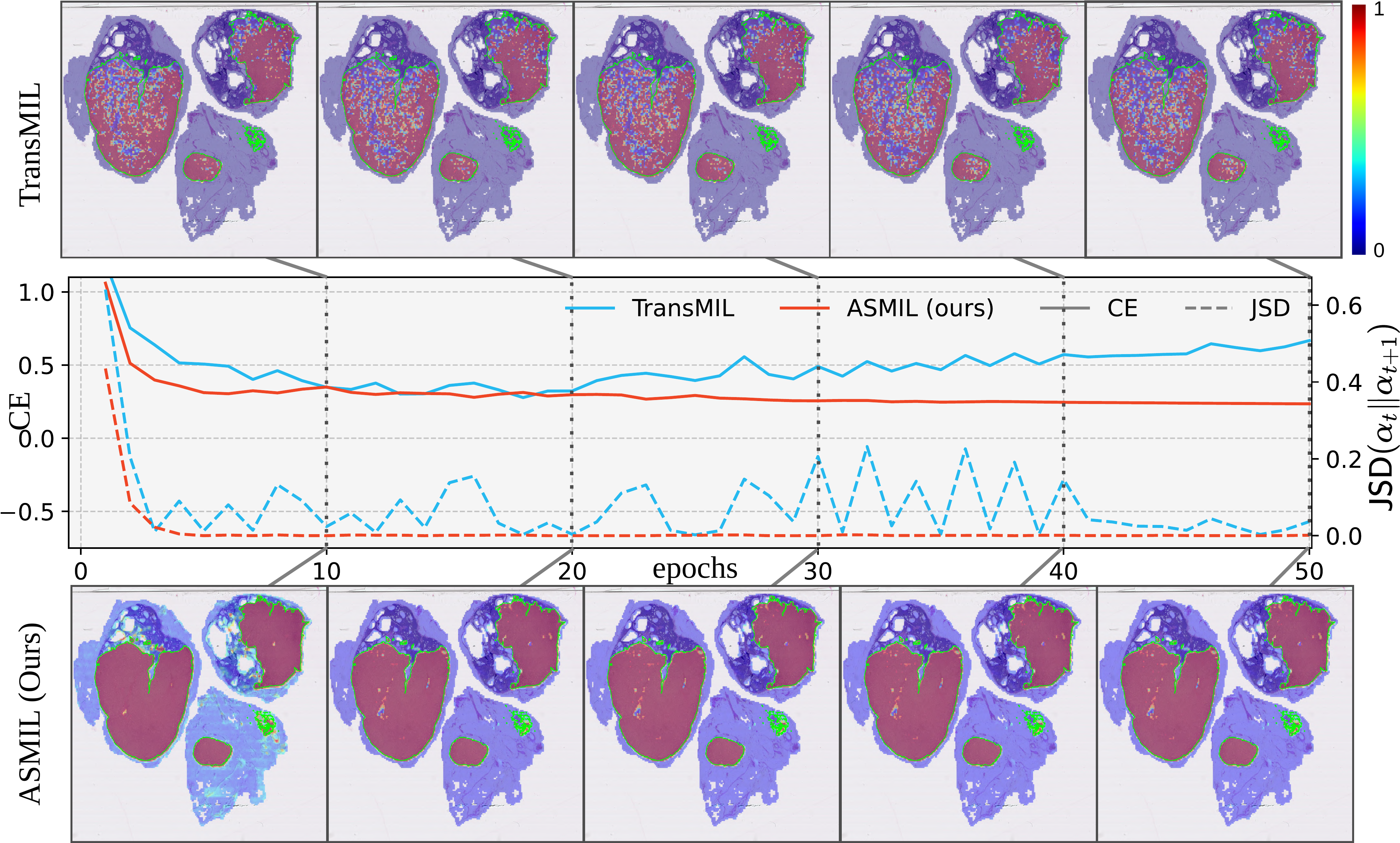}
\end{center}
\vspace{-0.4cm}
\caption{ Visualization of attention dynamics on a tumor WSI for TransMIL \citep{shao2021transmil} vs. ASMIL (our method). The green contours in the figures indicate the annotated tumor regions.
\textbf{Top}: TransMIL attention distribution at selected training iterations.
\textbf{Middle}: Jensen-Shannon divergence (JSD) between attention distributions at successive steps and the cross entropy loss (CE), comparing TransMIL (blue) and ASMIL (red).
\textbf{Bottom}: Attention distribution from ASMIL over different training iterations. Due to the weakly supervised nature of WSI subtyping datasets, TransMIL’s attention patterns never converge during training, further, it focuses on only a subset of cancerous regions. In contrast, our method ($i$) produces stable attention distributions throughout training and ($ii$) consistently highlights cancerous regions. }
\vspace{-0.5cm}
\label{fig:ABMIL_fluctuation}
\end{figure}
\section{Related Work}
\label{Sec:relatedWork}
Early weakly supervised approaches in computational pathology leveraged multi-view convolutional neural network ensembles and basic MIL pooling to transition from patch-level labels to slide-level predictions \citep{7950690,8363642}. As datasets scaled and slide-level supervision became the norm, methods shifted from fixed pooling to attention mechanisms that make aggregation learnable. Building on this trend, attention-based MIL \citep{ilse2018attention} introduced learnable instance weights and generated heatmaps from slide-level labels, achieving breast and colon cancer classification on par with fully supervised methods at scale. Complementary to weighting instances, subsequent work reduced morphological redundancy in tile representations, \citet{song2024morphological} used a Gaussian mixture model, and sped up inference by skipping irrelevant patches \citep{dong2025fast}. {\citet{li2021dual} propose DSMIL, a dual-stream MIL framework that selects a critical instance via max-pooling and then applies a trainable non-local, distance-based attention from this instance to all others to form bag embeddings for WSI classification. Subsequent works extend this line of research by leveraging multi-scale fusion to aggregate information across resolutions \citep{9522980, guo2023higt, 10849962, buzzard2024paths, li2019attention}.}

{Several works further refine training strategies for attention-based MIL. To prevent the attention distribution from collapsing onto a few input patches and to obtain more faithful attention maps, \citet{zhang2024attention} stochastically masks the top-$K$ instances, while \citet{zhang2025AEM} adds an entropy regularization term that explicitly flattens the attention distribution. In a complementary direction, \citet{fourkioti2024camil} introduces neighbor-constrained attention to suppress noise in the feature maps. Because WSI datasets usually contain only a few hundred training samples, many methods focus on mitigating overfitting, for example, by introducing bag splitting to create pseudo-bags \citep{zhang2022dtfd}, designing efficient instance-based classifiers \citep{10530149}, and performing hard-negative mining with EMA teachers \citep{Tang_2023_ICCV}. {\citet{lu2021data} introduce clustering-constrained attention multiple-instance learning (CLAM), which replaces max-pooling with class-specific attention pooling and adds instance-level clustering supervision so that weakly supervised slide-level MIL can be both data-efficient and interpretable on WSIs}, or using contrastive critical-instance branches \citep{9578683}. Recently, \citet{zhu2025how, zhu2023pdl} systematically studied the effect of random dropping in MIL and proposed to randomly remove the top-$K$ instances with the highest attention weights together with $G \times k$ similar tokens during training, which mitigates overfitting and encourages convergence to flatter regions of the loss landscape, thereby improving generalization. Since our anchor leverages an EMA update, we relate it to EMA/teacher models and provide additional details in \Cref{Appendix:EMAliter}.}

% {In contrast to these prior works, we trace over-concentrated attention distributions to the exponential nature of the softmax function and identify a new failure mode, which we term unstable attention dynamics, that has not been explicitly addressed in the existing MIL literature. We propose a principled solution by introducing an EMA-updated anchor model to stabilize the attention dynamics, and we further mitigate overfitting through token random dropping.}

% In contrast to the prior works, we trace \textbf{(PII)} to softmax and address it with an NSF in the anchor, while \textbf{(PIII)} is mitigated by random token dropping as a regularizer.Most importantly, \textbf{(PI)} \emph{unstable attention dynamics} has not been previously identified or addressed in the literature. We are the first to diagnose this phenomenon and propose a principled solution. 

\section{Preliminaries and Motivation}
\label{Sec:NotationandPreliminaries}
\subsection{Notation}
\label{section:Notation}
Scalars are denoted by non-bold letters (\eg, $a, \beta$), vectors by bold lowercase letters (\eg, $\boldsymbol{a}$), and matrices by bold uppercase letters (\eg, $\boldsymbol{A}$). The $i$-th entry of a vector $\boldsymbol{a}$ is written as $\boldsymbol{a}_i$. A $C$-dimensional probability simplex is denoted by $\Delta^C$. For two distributions $P_1, P_2 \in \Delta^C$, the Kullback–Leibler divergence (KL divergence) is defined as $\mathsf{KL} (P_1 \| P_2 ) = \sum_{c=1}^C P_1 [c]\log\!{\frac{P_1 [c]}{P_2 [c]} }$.

% The attention mechanism in the transformer is based on the scaled dot-product attention. Given query matrix $Q$, key matrix $K$, and value matrix $V$, the attention output is computed as: $Attention(Q,K,V) = \psi(\frac{QK^T}{\sqrt{d_k}})V$, where the $Q, K, V\in \mathbb{R}^{n\times d_k}$ are the query, key and value matrices, respectively. We refer $\frac{QK^T}{\sqrt{d_k}}$ as the attention score, while $\psi(\frac{QK^T}{\sqrt{d_k}})$ is the attention distribution. 

% \subsection{Transformer Attention Mechanism}
% A Transformer computes pairwise attention scores between queries $Q$, keys $K$, and values $V$. The attention output is $
% \text{Attention}(Q,K,V) = \psi(S)V, \quad S = \tfrac{QK^\top}{\sqrt{d_k}}$, where $S$ denotes the attention scores and $\psi$ is the softmax that normalizes them into attention distribution, representing the relative importance of each token. During training, learnable tokens are appended to the input and act as queries. Updated by gradient descent, these tokens learn to aggregate globally relevant features across layers, yielding compact, task-specific summaries of the input.

\subsection{Multiple Instance Learning with Attention}
\label{sec:mil_attention}

In MIL, supervision is provided only at the bag level. A slide is represented as a bag
\(X=\{\boldsymbol{x}_i\}_{i=1}^{N}\) with unknown instance labels. After a pretrained encoder, we obtain instance embeddings
\(\{\boldsymbol{h}_i\}_{i=1}^{N}\).

Attention-based MIL assigns a scalar \emph{attention score} to each embedding via a learnable scorer \(f_{\boldsymbol{\theta}}\):
\begin{align}
z_i \;=\; f_{\boldsymbol{\theta}}(\boldsymbol{h}_i), \qquad \boldsymbol{z}=(z_1,\dots,z_N)\in\mathbb{R}^{N}.
\end{align}
Scores are normalized into an \emph{attention distribution} on the probability simplex \(\Delta^{N}\) using a softmax:
\begin{align}
\alpha_i \;=\; \frac{\exp(z_i)}{\sum_{j=1}^{N}\exp(z_j)}, \qquad
\sum_{i=1}^{N}\alpha_i = 1, \qquad \boldsymbol{\alpha}=(\alpha_1,\dots,\alpha_N)\in\Delta^{N}.
\end{align}
The slide-level representation, $\boldsymbol{h}_{\mathrm{bag}} \;=\; \sum_{i=1}^{N}\alpha_i\,\boldsymbol{h}_i$, is a convex combination of instance features weighted by the attention distribution and is passed to a classifier to produce the bag-level prediction.
\subsection{Motivation}
\label{section:Motivation}

MIL is effective for WSI analysis, but its weak supervision and small WSI dataset sizes introduce three failure modes: unstable attention dynamics, over-concentrated attention, and overfitting. 

\noindent $\bullet$ \textbf{(PI) \textit{Unstable attention dynamics}.} {Under bag-level supervision,} we empirically observe that attention distribution oscillates across epochs rather than converging to a consistent pattern. {To the best of our knowledge, this phenomenon has not been previously identified or explicitly addressed in the literature.} To quantify stability, we measure the Jensen-Shannon divergence (JSD) between consecutive attention distributions for the same WSI. Let $\boldsymbol{\alpha}_t\in\Delta^N$ denote the attention over $N$ tiles at epoch $t$. With $\mathsf{KL}(\cdot \| \cdot)$ denoting the KL divergence and $\bar{\boldsymbol{\alpha}}=\tfrac12(\boldsymbol{\alpha}_t+\boldsymbol{\alpha}_{t+1})$, {we define}
\begin{align}
\mathsf{JSD}(\boldsymbol{\alpha}_t\|\boldsymbol{\alpha}_{t+1})=\tfrac12 \mathsf{KL}(\boldsymbol{\alpha}_t\|\bar{\boldsymbol{\alpha}})+\tfrac12 \mathsf{KL}(\boldsymbol{\alpha}_{t+1}\|\bar{\boldsymbol{\alpha}}).
\end{align}
As shown in \Cref{fig:ABMIL_fluctuation}, TransMIL \citep{shao2021transmil} exhibits large JSD fluctuations, indicating a lack of stable convergence. Similar behavior appears in other attention-based MIL models; additional results are provided in \Cref{Appendix:AttentionDynamics}.

\noindent $\bullet$ \textbf{(PII) \textit{Over-concentration of attention}.} Complementary to instability,
prior works report that ABMIL often assigns most mass to a few tiles, which harms generalization and interpretability \citep{zhang2024attention, zhang2025AEM}. {Distinct from previous approaches, we attribute these over-concentrated attention distributions to the exponential nature of the softmax function.}

\noindent $\bullet$ \textbf{(PIII) \textit{Overfitting}.} 
WSI datasets typically contain only a few slides per class and highly redundant tiles \citep{zhang2022dtfd}. High-capacity neural-network-based MIL models can memorize spurious tile-level patterns, leading to poor out-of-distribution performance. {To alleviate this, we introduce a random token drop mechanism specialized for our method.}

In the next section, we present our proposed methodology, which simultaneously addresses the three problems \textbf{(PI)}, \textbf{(PII)}, and \textbf{(PIII)}.
\section{METHODOLOGY}
\label{section:METHODOLOGY}
To address the limitations of attention-based MIL, we propose a framework illustrated in \Cref{fig:MyMILStr}. Our methodology addresses \textbf{(PI)} by stabilizing attention through an anchor model, tackles \textbf{(PII)} by replacing softmax with an NSF in the anchor, and mitigates \textbf{(PIII)} by token random dropping to regularize training. The next subsections detail each component and the overall objective.

\subsection{Stabilizing Attention distributions via an Anchor Model}
\begin{wrapfigure}[33]{r}{0.62\textwidth}
\vskip -0.2in
\includegraphics[width=0.61\textwidth]{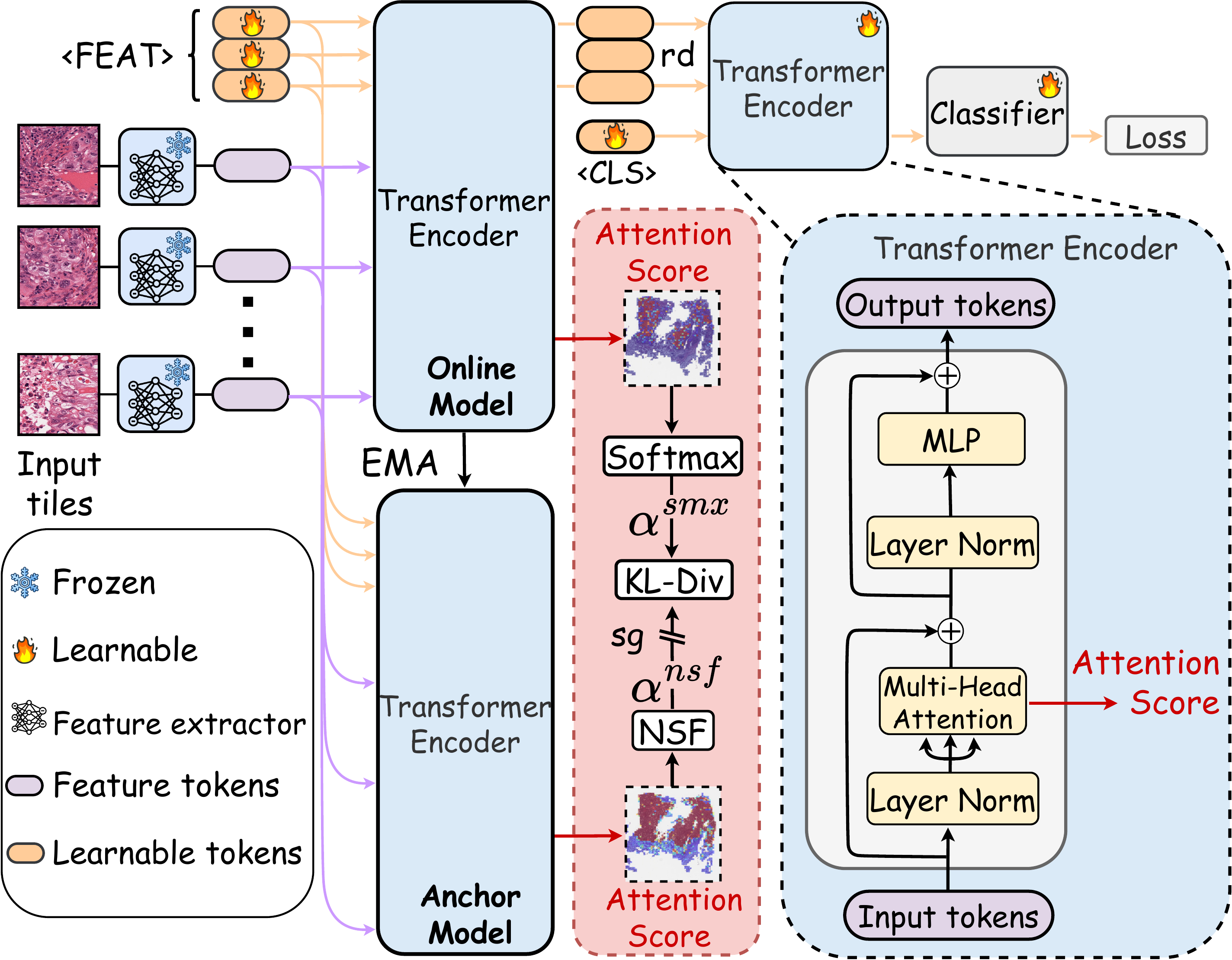}
\vskip -0.05in
\caption{Overview of ASMIL. Each WSI is divided into tiles and embedded into vision tokens using a pretrained encoder. These tokens, along with trainable FEAT tokens, feed into both online and anchor encoders. The anchor encoder’s attention scores over the FEAT tokens are transformed into a probability vector using an NSF, while the online encoder applies a softmax. To stabilize training and prevent the online model’s attention from becoming overly concentrated, we compute the KL divergence between the two distributions. Gradients are blocked to the anchor encoder using a stop-gradient (sg) operator, and its parameters are updated via EMA from the online encoder. During training, we randomly drop (rd) $N$ FEAT tokens, feed the remaining tokens into a second transformer with a trainable [CLS] token, and train a classifier on its output. \iconfire\ and \iconsnow\ indicate learnable and frozen components, respectively.} 
\label{fig:MyMILStr}
\end{wrapfigure}

As discussed in \Cref{section:Motivation}, weak supervision in MIL often leads to unstable attention distributions that fluctuate across epochs, preventing convergence. To mitigate this, we introduce an \textit{anchor model} that mirrors the attention block of the online model. The anchor serves as a stable reference by being updated through an EMA of the online model’s parameters. Specifically, at training step $t$, the anchor parameters $\boldsymbol{\theta}'_t$ are updated as
\begin{align} \label{eq:emaeq}
\boldsymbol{\theta}'_t \leftarrow m \boldsymbol{\theta}'_{t-1} + (1-m)\boldsymbol{\theta}_t,
\end{align}
where $\boldsymbol{\theta}_t$ are the online model’s parameters and $m \in [0,1)$ is the EMA factor. Both the anchor and online models receive the same inputs, but \textbf{\underline{only}} the online model is updated by backpropagation, the anchor is updated via EMA. The goal is to align the online attention distribution to the anchor distribution, which yields a stabilization loss. 

In \Cref{Sec:TSNE_bag_feature}, we show that standard attention-based MIL yields poorly separated bag-level feature clusters during training because attention distributions do not converge reliably. Introducing the anchor model stabilizes attention, improves convergence, and produces clearly separated bag-level clusters.
\begin{remark}
    \textbf{Why an anchor model instead of a single regularizer.} Scalar penalties on attention, such as entropy, $\ell_2$, or temperature, are content-agnostic and act only on the current batch. They cannot encode relational structure among instances. An EMA \emph{anchor model} yields a data-dependent attention distribution conditioned on the bag. Encouraging the online attention to stay close to this target performs functional regularization that captures inter-instance relations and stabilizes training, which a scalar regularizer cannot do.
\end{remark}
The anchor is discarded at inference, adding no extra FLOPs or latency. In the next subsection, we describe how we further improve the anchor’s attention using an NSF, which alleviates over-concentration before applying this stabilization loss.

\subsection{Preventing Attention Concentration with NSF in the Anchor Model}

% In conventional transformer architectures, the softmax function maps self-attention scores \(\boldsymbol{z}\in\mathbb{R}^{N}\) to a probability vector. However, softmax often produces over-concentrated attention, where few tokens dominate and gradients for the remaining tokens vanish. Temperature scaling is an incomplete remedy: small temperatures preserve concentration, while large temperatures flatten the distribution so aggressively that weak tokens receive undue weight. We therefore seek a mechanism that equalizes attention among genuinely informative tokens while suppressing weak ones.

In conventional transformer architectures, the softmax function maps self-attention scores \(\boldsymbol{z}\in\mathbb{R}^{N}\) to a probability vector. However, softmax often produces over-concentrated attention, in which a few tokens dominate while the weights of the remaining tokens vanish. Temperature scaling is an incomplete remedy: small temperatures preserve concentration, while large temperatures flatten the distribution so aggressively that weak tokens receive undue weight. We therefore seek a mechanism that equalizes attention among genuinely informative tokens while suppressing weak ones.

We compare softmax with normalized sigmoid function (NSF).\footnote{We discuss alternatives to NSF, including entmax and softmax with temperature scaling in \Cref{appendix:NSFAlter}.} For \(\boldsymbol{z}=(z_1,\dots,z_N)\), define
\begin{align}
\alpha_i^{\mathrm{smx}}(\boldsymbol{z};T) \;=\; \frac{e^{z_i/T}}{\sum_{j=1}^{N} e^{z_j/T}}, 
\qquad
\alpha_i^{\mathrm{nsf}}(\boldsymbol{z}) \;=\; \frac{\sigma(z_i)}{\sum_{j=1}^{N} \sigma(z_j)},
\qquad
\sigma(t)=\frac{1}{1+e^{-t}}. \label{Eq:NSF}
\end{align}
For thresholds $\tau>0$ and bandwidth \(\gamma\ge 0\), let \(\mathcal{S}(\tau,\gamma,\mathcal{H},\mathcal{L})\) be the set of score vectors with ``high'' indices \(\mathcal{H}\) satisfying \(z_i\in[\tau,\tau+\gamma]\) for \(i\in\mathcal{H}\) and ``low'' indices \(\mathcal{L}\) satisfying \(z_j\le -\tau\) for \(j\in\mathcal{L}\). Denote \(h \triangleq |\mathcal{H}|\) and \(\ell \triangleq |\mathcal{L}|\). The following theorem (proof deferred to \Cref{app:NSF_proof}) formalizes the selective flattening property of NSF and shows that softmax cannot match it with a single temperature.

\begin{theorem}[NSF achieves selective flattening; softmax cannot with a single \(T\)]\label{thm:nsf_vs_smx}
 Fix \(\tau>0\), $\gamma\ge 0$, and index sets \(\mathcal{H},\mathcal{L}\) with \(h\ge 1\), \(\ell\ge 1\). For any \(\boldsymbol{z}\in \mathcal{S}(\tau,\gamma,\mathcal{H},\mathcal{L})\):

\textbf{(A) NSF bounds.} For any \(i,h'\in\mathcal{H}\) and any \(j\in\mathcal{L}\),
\begin{align}
\frac{\alpha_i^{\mathrm{nsf}}(\boldsymbol{z})}{\alpha_{h'}^{\mathrm{nsf}}(\boldsymbol{z})}
= \frac{\sigma(z_i)}{\sigma(z_{h'})}
\;\le\; \frac{\sigma(\tau+\gamma)}{\sigma(\tau)}
= \frac{1+e^{-\tau}}{1+e^{-(\tau+\gamma)}}
\;\le\; 1 + e^{-\tau},~~~~
\alpha_j^{\mathrm{nsf}}(\boldsymbol{z})
\le\; \frac{\sigma(-\tau)}{h\,\sigma(\tau)}
\;=\; \frac{e^{-\tau}}{h}.
\end{align}
Hence, NSF equalizes the high tokens up to a factor \(1+e^{-\tau}\) and suppresses 
lows % each low token
to at most \(e^{-\tau}/h\). As \(\tau\to\infty\) with fixed \(\gamma\), ratios among high tokens approach \(1\) and low-token weights vanish.

\textbf{(B) Softmax incompatibility with one temperature.} Suppose we desire suppression and equalization targets \((\varepsilon,\kappa)\) on \(\mathcal{S}(\tau,\gamma,\mathcal{H},\mathcal{L})\):
\begin{align*}
\text{(Suppression)}\quad \alpha_j^{\mathrm{smx}}(\boldsymbol{z};T) \le \varepsilon \;\; \forall j\in\mathcal{L},
\qquad
\text{(Equalization)}\quad 
\frac{\max_{i\in \mathcal{H}}\alpha_i^{\mathrm{smx}}(\boldsymbol{z};T)}{\min_{h'\in \mathcal{H}}\alpha_{h'}^{\mathrm{smx}}(\boldsymbol{z};T)} \le \kappa.
\end{align*}
Then \(T\) must satisfy \(T \le \dfrac{2\tau}{\log\!\bigl(\frac{h}{\varepsilon}\bigr)}\) and \(T \ge \dfrac{\gamma}{\log \kappa}\) simultaneously, which is impossible whenever \(\dfrac{\gamma}{\log \kappa} > \dfrac{2\tau}{\log\!\bigl(\frac{h}{\varepsilon}\bigr)}\). Thus, no single temperature achieves both targets for all \(\boldsymbol{z}\in\mathcal{S}(\tau,\gamma,\mathcal{H},\mathcal{L})\).
\end{theorem}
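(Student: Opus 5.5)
The plan is to treat the two parts separately. Part (A) is an elementary monotonicity argument for $\sigma$. Part (B) comes from two necessary conditions, each obtained by evaluating softmax on a worst-case member of $\mathcal{S}(\tau,\gamma,\mathcal{H},\mathcal{L})$.

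\textbf{Part (A).} The normalizer $\sum_k\sigma(z_k)$ is common to all coordinates, so $\alpha_i^{\mathrm{nsf}}/\alpha_{h'}^{\mathrm{nsf}}=\sigma(z_i)/\sigma(z_{h'})$.
\begin{itemize}
\item Since $\sigma$ is increasing and $z_i,z_{h'}\in[\tau,\tau+\gamma]$, this ratio is at most $\sigma(\tau+\gamma)/\sigma(\tau)=(1+e^{-\tau})/(1+e^{-(\tau+\gamma)})$.
\item That quantity is at most $1+e^{-\tau}$ because the denominator exceeds $1$.
\item For a low index $j$, drop all terms of the denominator except the high ones, which is allowed since $\sigma>0$. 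This gives $\sum_k\sigma(z_k)\ge h\,\sigma(\tau)$, while $\sigma(z_j)\le\sigma(-\tau)$.
\item Hence $\alpha_j^{\mathrm{nsf}}\le\sigma(-\tau)/(h\sigma(\tau))$. Using $\sigma(-\tau)/\sigma(\tau)=e^{-\tau}$, the bound equals $e^{-\tau}/h$.
\item The limiting statements follow by letting $\tau\to\infty$ with $\gamma$ fixed.
\end{itemize}

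\textbf{Part (B), equalization.} I read the targets as required uniformly over all $\boldsymbol z\in\mathcal{S}$. This is the interpretation the final sentence of the theorem uses.
\begin{itemize}
\item Assuming $h\ge 2$, pick $z_i=\tau+\gamma$ and $z_{h'}=\tau$ for two high indices.
\item The softmax ratio is then $e^{\gamma/T}$, so the requirement $e^{\gamma/T}\le\kappa$ forces $T\ge\gamma/\log\kappa$ (for $\kappa>1$).
\item I will note that the case $h=1$ makes equalization trivial, so the incompatibility claim implicitly needs $h\ge2$.
\end{itemize}

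\textbf{Part (B), suppression.} Here I need a worst-case configuration that pushes the low weight up.
\begin{itemize}
\item Place the high tokens at $z=\tau$ and a single low token at $z_j=-\tau$. Put any remaining low tokens even lower so they contribute negligibly; an infimum argument handles them.
\item Then $\alpha_j^{\mathrm{smx}}=e^{-\tau/T}/(e^{-\tau/T}+h e^{\tau/T}+\text{(rest)})$, which is approximately $e^{-2\tau/T}/(h+e^{-2\tau/T})$.
\item Requiring this to be at most $\varepsilon$ gives $e^{-2\tau/T}\le \varepsilon h/(1-\varepsilon)$. This yields $T\le 2\tau/\log\bigl((1-\varepsilon)/(\varepsilon h)\bigr)$.
\end{itemize}

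\textbf{Main obstacle.} The hard step is reconciling this necessary condition with the stated form $T\le 2\tau/\log(h/\varepsilon)$. The exact worst case carries $\log(1/(h\varepsilon))$-type factors rather than $\log(h/\varepsilon)$. So I would first try to show that the stated bound follows under the natural regime $\varepsilon\le 1/h$, or via a deliberately loose chain of inequalities, for example bounding the denominator using $h\le h^2$.

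If that fails, I would state the precise threshold $T_{\max}(\varepsilon,h,\tau)$ that the computation gives. I would then argue that the incompatibility conclusion persists whenever $\gamma/\log\kappa>T_{\max}$. The structure of the conclusion does not change: $T_{\max}$ grows like $\tau$ divided by a logarithm of $1/\varepsilon$, while equalization demands $T$ of order $\gamma$.

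Combining the two necessary conditions gives an empty feasible set of temperatures whenever the lower bound exceeds the upper bound, which finishes (B).
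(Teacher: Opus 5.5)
Your plan matches the paper's proof. Part (A) is the same monotonicity argument, using $\sigma(-t)=e^{-t}\sigma(t)$ and $\sum_k\sigma(z_k)\ge h\,\sigma(\tau)$. In (B) the paper uses the same two witness configurations and computes $\sup_{\boldsymbol z\in\mathcal{S}}\alpha_j^{\mathrm{smx}}(\boldsymbol z;T)=1/(h e^{2\tau/T}+1)$. It arrives at exactly your threshold, written there as $T\le 2\tau/\bigl(\log(\tfrac{1}{\varepsilon}-1)-\log h\bigr)$. Your caveat that equalization needs $h\ge 2$ is a genuine correction: the paper asserts that the high-to-high ratio reaches $e^{\gamma/T}$ without it. You should add one more caveat of the same kind. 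Your threshold is meaningful only for $\varepsilon<1/(h+1)$; otherwise suppression holds for every $T>0$ and gives no constraint. The paper omits this too.

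On your ``main obstacle'': drop the first option, because the stated form cannot be proved --- it is false. Your computation gives the exact supremum, so it already yields the strongest available necessary condition, $T\le T_{\max}=2\tau/\log\bigl((1-\varepsilon)/(h\varepsilon)\bigr)$. Since $(1-\varepsilon)/(h\varepsilon)<h/\varepsilon$, the stated bound $2\tau/\log(h/\varepsilon)$ is strictly smaller than $T_{\max}$. Loosening denominators (via $h\le h^2$) or restricting to $\varepsilon\le 1/h$ can only weaken the constraint you extract, so neither route can reach it.

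For a concrete counterexample, take $h=2$, $\varepsilon=1/101$, $\tau=\tfrac{1}{2}\log 50$, $\gamma=1$, $\kappa=e$, and $T=1$. Every $\boldsymbol z\in\mathcal{S}$ then has $\alpha_j^{\mathrm{smx}}\le 1/(1+2e^{2\tau})=1/101$ and high-to-high ratio at most $e^{\gamma}=\kappa$. Yet $\gamma/\log\kappa=1>\log 50/\log 202=2\tau/\log(h/\varepsilon)$, so the stated incompatibility claim fails.

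Your fallback is exactly what the paper does: it silently uses $T_{\max}$ in place of the stated bound. The paper then also instantiates $\varepsilon=e^{-\tau}/h$ and $\kappa=(1+e^{-\tau})/(1+e^{-(\tau+\gamma)})$, and shows the corrected incompatibility condition holds for all large $\tau$ when $\gamma>0$. That instantiation is what actually supports the claim that softmax cannot match NSF.
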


We further illustrate this effect in \Cref{fig:sigmax_motivation} by comparing attention maps with softmax and NSF using ABMIL \citep{ilse2018attention} on a cancer slide from the CAMELYON-16 dataset \citep{CAMELYON16}. Softmax yields a highly concentrated map that obscures broader context, whereas NSF produces a less concentrated attention map that highlights most cancerous regions.

A naive option is to apply NSF directly in the online model. 
In practice, this induces vanishing gradients and degrades performance; see \Cref{appendix:onlineNormalizedSigmoid}.
We therefore place NSF in the \emph{anchor} model as a stable prior, guiding the online model without hindering its learning dynamics.
\begin{wrapfigure}[18]{r}{0.6\textwidth}
\vspace{-0.5cm}
\includegraphics[width=\linewidth]{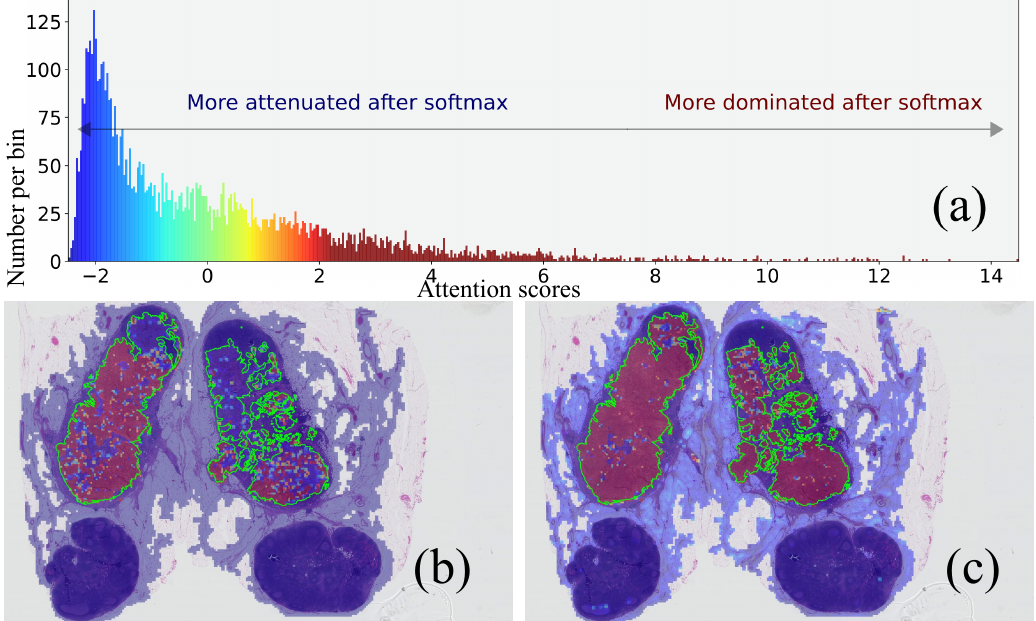}
\vspace{-0.8cm}
\caption{(a) Distribution of attention scores in ABMIL, which exhibits a long-tailed pattern. (b) Attention distribution obtained with the softmax function and (c) with the NSF. Unlike softmax, the normalized sigmoid suppresses large values in the long tail, yielding a less sparse and more interpretable attention distribution.}
\label{fig:sigmax_motivation}
\end{wrapfigure}
As attention distributions lie on the probability simplex, we use the KL divergence to align the online attention distribution with the NSF-based anchor distribution:
\begin{align} \label{eq:KLloss}
    \mathcal{L}_{\mathrm{AS}} \;=\; \mathsf{KL}\!\bigl(\boldsymbol{\alpha}^{\mathrm{nsf}} \,\|\, \boldsymbol{\alpha} \bigr),
\end{align}
where \(\boldsymbol{\alpha}\) is the online attention (softmax over \(\boldsymbol{z}\)) and \(\boldsymbol{\alpha}^{\mathrm{nsf}}\) is the anchor attention (NSF over the anchor scores). Using \(\tfrac{\partial \alpha_j}{\partial z_i}=\alpha_j(\delta_{ij}-\alpha_i)\) and treating \(\boldsymbol{\alpha}^{\mathrm{nsf}}\) as fixed, the gradient with respect to the online attention score \(z_i\) is
\begin{align}\label{eq:grad_LAS}
    \frac{\partial \mathsf{KL}(\boldsymbol{\alpha}^{\mathrm{nsf}}\,\|\,\boldsymbol{\alpha})}{\partial z_i}
    \;&=\; \sum_{j=1}^{N}\alpha^{\mathrm{nsf}}_j(\delta_{ij}-\alpha_i) \nonumber\\&=\; \alpha_i - \alpha^{\mathrm{nsf}}_i.
\end{align}
Thus, gradient descent moves the online attention toward the anchor distribution, promoting stability and discouraging over-concentration.
\begin{remark}
The anchor in ASMIL superficially resembles the teacher in MHIM-MIL \citep{Tang_2023_ICCV}: both are EMA-updated copies of the online model. Their roles, however, differ in two important ways. \textbf{(i)} MHIM-MIL uses the teacher to mine hard instances, whereas ASMIL uses the anchor to stabilize attention and prevent over-concentration. \textbf{(ii)} MHIM-MIL matches softmax bag-level features, while ASMIL directly matches attention distributions. \Cref{appendix:MHIMNotGood} discusses why softmax bag-level matching fails to stabilize attention maps.
\end{remark}
\subsection{Mitigating Overfitting with Token Random Dropping}
\label{Sec:TokenRandomDropping}
To reduce overfitting, {we designed a token-level regularizer, specialized for ASMIL, that operates on the trainable tokens used by the online model.} Let a WSI \(\boldsymbol{x}\) be partitioned into \(M\) tiles and embedded by a pretrained encoder into tile tokens \(\mathcal{T}=\{\boldsymbol{t}_1,\dots,\boldsymbol{t}_M\}\). We augment these with $N$ trainable FEAT tokens \(\mathcal{P}=\{\boldsymbol{p}_1,\dots,\boldsymbol{p}_N\}\) and feed the concatenation \([\mathcal{T};\mathcal{P}]\) into the online encoder. After the online encoder, only the FEAT tokens are retained. Since the number of FEAT tokens is much smaller than the tile tokens (\ie, $N\ll M$), this design acts as information aggregation via token reduction.

During training, we sample an independent Bernoulli mask over FEAT tokens and drop a fraction \(B\in[0,1)\) of them. Denote the kept set by \(\mathcal{P}_{\mathrm{keep}}\) with \(|\mathcal{P}_{\mathrm{keep}}|=\tilde{N}\sim \mathrm{Binomial}(N,1-B)\) and \(\mathbb{E}[\tilde{N}]=(1-B)N\). The remaining tokens, together with a trainable \(\mathrm{[CLS]}\) token, are passed to a second transformer to produce a bag representation \(\boldsymbol{h}_{\mathrm{bag}}\), which is then classified to obtain \(\hat{y}\).
At inference time, no tokens are dropped \((B=0)\). {Since ASMIL stabilize the attention via aligning the anchor model, which assumes a one-to-one correspondence, as thus general instance dropout method, such as MIL-Dropout \cite{zhu2025how}, could not be integrated easily. }

% This stochastic removal prevents co-adaptation among FEAT tokens and discourages the model from over-relying on a small subset of tokens, while preserving image content by keeping all FEAT tokens at inference. Empirically, this acts as an effective regularizer that improves generalization. In \Cref{Sec:RandomDropRate} we study the effect of \(B\) and observe a consistent peak in performance around \(B\approx 0.5\).

This stochastic removal prevents co-adaptation among FEAT tokens and discourages the model from over-relying on a subset of tokens, while preserving image content by keeping all FEAT tokens at inference. Empirically, this acts as an effective regularizer that improves generalization. In \Cref{Sec:RandomDropRate} we study the effect of \(B\) {and observe a consistent peak in performance around  \(B\approx 0.5\).}

\subsection{Overall Training Objective}
Based on the discussion thus far, we train with a joint objective that couples standard bag-level classification with attention stabilization:
\begin{align}
\mathcal{L} \;=\; \mathcal{L}_{\mathrm{CE}} \;+\; \beta\,\mathcal{L}_{\mathrm{AS}}, \label{eq:totalloss}
\end{align}
where the coefficient \(\beta>0\) balances the stabilization and classification objectives. In practice, to calculate  $\mathcal{L}_{\mathrm{AS}}$, \(\boldsymbol{\alpha}\) is computed by a softmax over the online scores, \(\boldsymbol{\alpha}^{\mathrm{nsf}}\) is computed by applying the NSF to the anchor scores, and the anchor model is treated as \textit{stop-gradient} while its parameters are updated via EMA. The KL divergence is taken over the attention distributions on the FEAT token set used for aggregation. This objective discourages attention concentration through \(\mathcal{L}_{\mathrm{AS}}\) and preserves task performance through \(\mathcal{L}_{\mathrm{CE}}\). {ASMIL can be easily applied to other tasks, including survival prediction by replacing the objective function and the classification head accordingly. During training, the online model is updated by gradient descent
\begin{align}
    \theta_{t+1} = \theta_t -\eta \nabla_\theta \mathcal{L},
\end{align}
where $\eta$ is the learning rate. $\mathcal{L}$ is computed as in \Cref{eq:totalloss}. The anchor model is then updated according to \Cref{eq:emaeq}. The gradient is only used to update the online model, while the anchor model influences learning through \Cref{eq:KLloss}. At inference time, ASMIL uses only the online model and discards the anchor model; therefore, the anchor does not increase the computational budget at inference.}
\begin{table}[]
\centering
\vspace{-0.3cm}
\caption{The F1 score and AUC of different MIL approaches across three WSI datasets. \textbf{Bold} and \underline{underlined} values denote the best and second-best results, respectively.}
\vspace{+0.05cm}
\label{Tab:subtyping}
\resizebox{\textwidth}{!}{
\begin{tabular}{clcccccc}
\toprule
\rowcolor{lightgray} \cellcolor{white}&  Dataset              & \multicolumn{2}{c}{CAMELYON-16}             & \multicolumn{2}{c}{CAMELYON-17} & \multicolumn{2}{c}{BRACS}   \\ \cline{2-8} 
Backbone   & Method & F1 score $\uparrow$ & AUC $\uparrow$ & F1 score $\uparrow$ & AUC $\uparrow$ & F1 score  $\uparrow$ & AUC $\uparrow$ \\ \bottomrule
\bottomrule
\multirow{12}{*}{\rotatebox{90}{\begin{tabular}[c]{@{}c@{}}ResNet-18\\ ImageNet Pretrained\end{tabular}}}
& ABMIL \textcolor{mygray}{\tiny ICML  \citeyear{ilse2018attention}} & $0.757\scriptscriptstyle\pm 0.020$          & $0.790\scriptscriptstyle\pm 0.027$          & $0.508\scriptscriptstyle\pm 0.032$   & $0.779\scriptscriptstyle\pm 0.021$   & $0.523\scriptscriptstyle\pm 0.028$          & $0.723\scriptscriptstyle\pm 0.035$      \\
& Clam-SB \textcolor{mygray}{\tiny Nature \citeyear{lu2021data}} & $0.742\scriptscriptstyle\pm 0.024$          & $0.763\scriptscriptstyle\pm 0.049$          &  $0.504\scriptscriptstyle\pm 0.012$          & $0.778\scriptscriptstyle\pm 0.024$   & $0.521\scriptscriptstyle\pm 0.046$          & $0.750\scriptscriptstyle\pm 0.039$      \\
& TransMIL \textcolor{mygray}{\tiny NeurIPS \citeyear{shao2021transmil}} & $0.643\scriptscriptstyle\pm 0.088$          & $0.706\scriptscriptstyle\pm 0.076$          &  $0.499\scriptscriptstyle\pm 0.082$          & $0.794\scriptscriptstyle\pm 0.053$   & $0.444\scriptscriptstyle\pm 0.040$          & $0.732\scriptscriptstyle\pm 0.043$       \\
& DSMIL \textcolor{mygray}{\tiny CVPR \citeyear{li2021dual}} & $0.736\scriptscriptstyle\pm 0.025$          & $0.773\scriptscriptstyle\pm 0.034$          &  $0.473\scriptscriptstyle\pm 0.052$          & $0.705\scriptscriptstyle\pm 0.022$   & $0.511\scriptscriptstyle\pm 0.052$          & $0.751\scriptscriptstyle\pm 0.028$      \\
& DTFD-MIL \textcolor{mygray}{\tiny CVPR \citeyear{zhang2022dtfd}} & $0.758\scriptscriptstyle\pm 0.051$          & $0.815\scriptscriptstyle\pm 0.063$ & $0.546\scriptscriptstyle\pm 0.010$          & $0.735\scriptscriptstyle\pm 0.011$   & $0.469\scriptscriptstyle\pm 0.016$          & $0.717\scriptscriptstyle\pm 0.032$        \\
& IBMIL \textcolor{mygray}{\tiny CVPR \citeyear{lin2023interventional}} & $0.777\scriptscriptstyle\pm 0.009$          & $0.799\scriptscriptstyle\pm 0.050$          & $0.533\scriptscriptstyle\pm 0.015$   & $0.813\scriptscriptstyle\pm 0.092$   & $0.510\scriptscriptstyle\pm 0.043$          & $0.726\scriptscriptstyle\pm 0.034$       \\
& MHIM-MIL \textcolor{mygray}{\tiny ICCV \citeyear{Tang_2023_ICCV}}& $0.752\scriptscriptstyle\pm 0.034$          & $0.772\scriptscriptstyle\pm 0.026$          & $0.56\scriptscriptstyle\pm 0.029$          & $0.815\scriptscriptstyle\pm 0.019$   & $0.511\scriptscriptstyle\pm 0.022$          & $0.775\scriptscriptstyle\pm 0.021$       \\
& ACMIL \textcolor{mygray}{\tiny ECCV \citeyear{zhang2024attention}}& $0.798\scriptscriptstyle\pm 0.029$          & $0.841\scriptscriptstyle\pm 0.030$          & $0.528\scriptscriptstyle\pm 0.053$   & $0.789\scriptscriptstyle\pm 0.046$     & $0.552\scriptscriptstyle\pm 0.048$          & $0.754\scriptscriptstyle\pm 0.008$        \\
& CAMIL \textcolor{mygray}{\tiny ICLR \citeyear{fourkioti2024camil}}& $0.778\scriptscriptstyle\pm 0.011$ & $0.812\scriptscriptstyle\pm 0.017$ & $0.503\scriptscriptstyle\pm 0.007$ & $0.806\scriptscriptstyle\pm 0.006$ & $0.569\scriptscriptstyle\pm 0.007$  &   $\underline{0.787}\scriptscriptstyle\pm 0.011$   \\
& AEM \textcolor{mygray}{\tiny MICCAI \citeyear{zhang2025AEM}}& $\underline{0.804}\scriptscriptstyle\pm 0.022$          & $\underline{0.859}\scriptscriptstyle\pm 0.031$         & $0.525\scriptscriptstyle\pm 0.043$   &   $0.828\scriptscriptstyle\pm 0.054$   & $0.554\scriptscriptstyle\pm 0.004$          & $0.764\scriptscriptstyle\pm 0.008$       \\
& HDMIL \textcolor{mygray}{\tiny CVPR \citeyear{dong2025fast}} & $0.790\scriptscriptstyle\pm 0.023$          & $0.856\scriptscriptstyle\pm 0.027 $         & $\underline{0.557}\scriptscriptstyle\pm 0.007 $   & $\mathbf{0.853}\scriptscriptstyle\pm 0.013$   & $\underline{0.578}\scriptscriptstyle\pm 0.012$          & $0.761\scriptscriptstyle\pm 0.011$      \\
& \texttt{ASMIL} (Ours)         & $\mathbf{0.814}\scriptscriptstyle\pm 0.052$          & $\mathbf{0.870}\scriptscriptstyle\pm 0.064$          & $\mathbf{0.564}\scriptscriptstyle\pm 0.020$   & ${\underline{0.851}\scriptscriptstyle\pm 0.061}$   & $\mathbf{0.601}\scriptscriptstyle\pm 0.072$          & $\mathbf{0.810}\scriptscriptstyle\pm 0.054$       \\ \hline
\multirow{12}{*}{\rotatebox{90}{\begin{tabular}[c]{@{}c@{}}VIT-S \\ SSL pretrained\end{tabular}}}
& ABMIL \textcolor{mygray}{\tiny ICML \citeyear{ilse2018attention}}& $0.914\scriptscriptstyle\pm 0.031$          & $0.945\scriptscriptstyle\pm 0.027$          & $0.522\scriptscriptstyle\pm 0.050$   & $0.853\scriptscriptstyle\pm 0.016$   & $0.680\scriptscriptstyle\pm 0.051$          & $0.866\scriptscriptstyle\pm 0.029$     \\
& Clam-SB \textcolor{mygray}{\tiny Nature \citeyear{lu2021data}}& $0.925\scriptscriptstyle\pm 0.085$          & $0.969\scriptscriptstyle\pm 0.024$          & $0.523\scriptscriptstyle\pm 0.020$   & $0.846\scriptscriptstyle\pm 0.020$   & $0.631\scriptscriptstyle\pm 0.034$          & $0.863\scriptscriptstyle\pm 0.005$     \\
& TransMIL \textcolor{mygray}{\tiny NeurIPS \citeyear{shao2021transmil}}& $0.922\scriptscriptstyle\pm 0.019$          & $0.943\scriptscriptstyle\pm 0.009$          & $0.554\scriptscriptstyle\pm 0.048$    & $0.792\scriptscriptstyle\pm 0.029$   & $0.631\scriptscriptstyle\pm 0.030$          & $0.841\scriptscriptstyle\pm 0.006$  \\
& DSMIL \textcolor{mygray}{\tiny CVPR \citeyear{li2021dual}}& $0.943\scriptscriptstyle\pm 0.007$          & $0.966\scriptscriptstyle\pm 0.009$          & $0.532\scriptscriptstyle\pm 0.064$   & $0.804\scriptscriptstyle\pm 0.032$   & $0.577\scriptscriptstyle\pm 0.028$          & $0.816\scriptscriptstyle\pm 0.028$   \\
& DTFD-MIL \textcolor{mygray}{\tiny CVPR \citeyear{zhang2022dtfd}}& $0.948\scriptscriptstyle\pm 0.007$          & $\underline{0.980}\scriptscriptstyle\pm 0.011$          & $0.627\scriptscriptstyle\pm 0.015 $  & $0.866\scriptscriptstyle\pm 0.012$   & $0.612\scriptscriptstyle\pm 0.080$          & $0.870\scriptscriptstyle\pm 0.022$     \\
& IBMIL \textcolor{mygray}{\tiny CVPR \citeyear{lin2023interventional}} & $0.912\scriptscriptstyle\pm 0.034$          & $0.954\scriptscriptstyle\pm 0.022$          & $0.557\scriptscriptstyle\pm 0.064$   & $0.850\scriptscriptstyle\pm 0.024$   & $0.645\scriptscriptstyle\pm 0.041$          & $0.871\scriptscriptstyle\pm 0.014$     \\
& MHIM-MIL \textcolor{mygray}{\tiny ICCV \citeyear{Tang_2023_ICCV}} & $0.932\scriptscriptstyle\pm 0.024$          & $0.970\scriptscriptstyle\pm 0.037$          & $0.541\scriptscriptstyle\pm 0.022$   & $0.845\scriptscriptstyle\pm 0.026$   & $0.625\scriptscriptstyle\pm 0.060$          & $0.865\scriptscriptstyle\pm 0.017$   \\
& ACMIL \textcolor{mygray}{\tiny ECCV \citeyear{zhang2024attention}} & $0.954\scriptscriptstyle\pm 0.012$          & $0.974\scriptscriptstyle\pm 0.012 $       & $0.562\scriptscriptstyle\pm 0.050$   & $0.863\scriptscriptstyle\pm 0.004$   & $0.722\scriptscriptstyle\pm 0.030$          & $0.888\scriptscriptstyle\pm 0.010$   \\
& CAMIL \textcolor{mygray}{\tiny ICLR \citeyear{fourkioti2024camil}} & $0.930\scriptscriptstyle\pm 0.009$ &  $0.963\scriptscriptstyle\pm 0.011$ & $0.633\scriptscriptstyle\pm 0.022$ &  $0.886\scriptscriptstyle\pm 0.034$ & $0.709\scriptscriptstyle\pm 0.011$  & $0.836\scriptscriptstyle\pm 0.014$ \\
& AEM \textcolor{mygray}{\tiny MICCAI \citeyear{zhang2025AEM}}& $0.947\scriptscriptstyle\pm 0.003$          & $0.974\scriptscriptstyle\pm 0.007 $         & $\underline{0.647}\scriptscriptstyle\pm 0.007 $   & $\underline{0.887}\scriptscriptstyle\pm 0.013$   & $\underline{0.742}\scriptscriptstyle\pm 0.030$          & $\underline{0.905}\scriptscriptstyle\pm 0.010$      \\
& HDMIL \textcolor{mygray}{\tiny CVPR \citeyear{dong2025fast}} & $\underline{0.958}\scriptscriptstyle\pm 0.013$          & $0.976\scriptscriptstyle\pm 0.017 $         & $0.571\scriptscriptstyle\pm 0.012 $   & $0.796\scriptscriptstyle\pm 0.022$   & $0.717\scriptscriptstyle\pm 0.033$          & $0.874\scriptscriptstyle\pm 0.010$      \\
& \texttt{ASMIL} (Ours)        & $\mathbf{0.965}\scriptscriptstyle\pm 0.020$          & $\mathbf{0.985}\scriptscriptstyle\pm 0.017 $         & $\mathbf{0.689}\scriptscriptstyle\pm 0.005$   & $\mathbf{0.898}\scriptscriptstyle\pm 0.010$   & $\mathbf{0.781}\scriptscriptstyle\pm 0.042 $         & $\mathbf{0.914}\scriptscriptstyle\pm 0.014$    \\ \hline
\end{tabular}}
\vspace{-0.4cm}
\end{table}
\section{Experiments}
\label{Sec:Experiments}
To demonstrate the effectiveness of ASMIL, we evaluate it on three well-known public WSI subtyping datasets: ($i$) CAMELYON-16 \citep{CAMELYON16}, ($ii$) CAMELYON-17 \citep{8447230}, and ($iii$) BRACS \citep{brancati2022bracs}. Details of the data splits, preprocessing, training setup, and baselines are provided in the \Cref{Appendix:ExperimentalDetails}. {We further evaluate ASMIL on survival prediction and non-WSI datasets in \Cref{Appendix:SP} and \Cref{Appendix:ASMILOtherDataset}, respectively.}

\subsection{Subtyping Performance}
\label{Sec:SubtypingPerformance}
{We compare ASMIL against eleven attention-based MIL baselines that are designed for WSIs}: CLAM-SB \citep{lu2021data}, TransMIL \citep{shao2021transmil}, DSMIL \citep{li2021dual}, DTFD-MIL \citep{zhang2022dtfd}, IBMIL \citep{lin2023interventional}, MHIM-MIL \citep{Tang_2023_ICCV}, ABMIL \citep{ilse2018attention}, ACMIL \citep{zhang2024attention}, CAMIL \citep{fourkioti2024camil}, AEM \citep{zhang2025AEM} and HDMIL \citep{dong2025fast}. Because WSI datasets are class-imbalanced, we report the F1 score and area under the ROC curve (AUC) for each dataset in \Cref{Tab:subtyping}\footnote{See \Cref{appendix:F1andAUC} for details on metric computation and interpretation.}.

Overall, ASMIL demonstrates superior performance, {achieving state-of-the-art performance on all datasets when paired with an in-domain ViT-SSL backbone, and remains competitive with the best baseline on ImageNet-pretrained ResNet-18 features.} 
% achieving the highest F1 score and AUC across all datasets. The only exception is CAMELYON-17 with features extracted by an ImageNet-pretrained ResNet-18, where the AUC lags the state-of-the-art by 0.002. 
On the BRACS dataset, our method attains an F1 score of 0.781 and an AUC of 0.914, exceeding the previous best results by 3.9 and 0.9 percentage points, respectively. This shows its effectiveness in capturing subtle histopathological features in heterogeneous subtyping tasks.

\begin{table}[]
\centering
% \vspace{-0.5cm}
\caption{Applying anchor model and NSF to other attention-based MIL methods.}
\label{Tab:ablationappltanchorNS}
\resizebox{\textwidth}{!}{\begin{tabular}{lcc|cccccc}
\toprule 
\rowcolor{lightgray}
Dataset                   & \multicolumn{2}{c|}{} & \multicolumn{2}{c}{CAMELYON-16} & \multicolumn{2}{c}{CAMELYON-17}  & \multicolumn{2}{c}{BRACS} \\ \hline
Method                    & Anchor      & NSF      & F1 score $\uparrow$& AUC $\uparrow$& F1 score $\uparrow$& AUC $\uparrow$& F1 score $\uparrow$& AUC $\uparrow$\\ \hline
\multirow{3}{*}{ABMIL \textcolor{mygray}{\tiny ICML \citeyear{ilse2018attention}}}    & \ding{55}  & \ding{55}& $0.914\scriptscriptstyle\pm0.031$   & $0.945\scriptscriptstyle\pm0.027$  &  $0.522\scriptscriptstyle\pm 0.050$   & $0.853\scriptscriptstyle\pm 0.016$     & $0.680\scriptscriptstyle\pm0.051$ & $0.866\scriptscriptstyle\pm0.029$    \\
                          & \ding{51}  & \ding{55}& $0.951\scriptscriptstyle\pm0.015$ & $0.963\scriptscriptstyle\pm0.008$  & $0.573\scriptscriptstyle\pm0.011$  & $0.871\scriptscriptstyle\pm0.010$   &$0.751\scriptscriptstyle\pm0.013$& $0.877\scriptscriptstyle\pm0.007$ \\[-0.4em]
                          & \tiny & \tiny& \tiny{\color{darkgreen}{$+0.037$}\quad\quad\quad\quad} & \tiny{\color{darkgreen}{$+0.018$}\quad\quad\quad\quad}  & \tiny{\color{darkgreen}{$+0.051$}\quad\quad\quad\quad} & \tiny{\color{darkgreen}{$+0.018$}\quad\quad\quad\quad} & \tiny{\color{darkgreen}{$+0.071$}\quad\quad\quad\quad}  & \tiny{\color{darkgreen}{$+0.011$}\quad\quad\quad\quad}  \\%[-0.3em]
                          & \ding{51}  & \ding{51}&$0.953\scriptscriptstyle\pm0.009$&$0.967\scriptscriptstyle\pm0.006$   & $0.574\scriptscriptstyle\pm0.010$  & $0.883\scriptscriptstyle\pm0.014$  &$0.753\scriptscriptstyle\pm0.009$&$0.887\scriptscriptstyle\pm0.014$ \\ [-0.4em]
                          & \tiny & \tiny& \tiny{\color{darkgreen}{$+0.039$}\quad\quad\quad\quad} & \tiny{\color{darkgreen}{$+0.022$}\quad\quad\quad\quad}  & \tiny{\color{darkgreen}{$+0.052$}\quad\quad\quad\quad} &  \tiny{\color{darkgreen}{$+0.030$}\quad\quad\quad\quad} & \tiny{\color{darkgreen}{$+0.073$}\quad\quad\quad\quad}  & \tiny{\color{darkgreen}{$+0.021$}\quad\quad\quad\quad} \\%[-0.3em]
                          \hline
\multirow{3}{*}{CLAM-SB  \textcolor{mygray}{\tiny Nature \citeyear{lu2021data}}}  & \ding{55}  & \ding{55}& $0.925\scriptscriptstyle\pm0.085$   & $0.969\scriptscriptstyle\pm0.024$ & $0.523\scriptscriptstyle\pm 0.020$   & $0.846\scriptscriptstyle\pm 0.020$ & $0.631\scriptscriptstyle\pm0.034$ & $0.863\scriptscriptstyle\pm0.005$  \\
                          & \ding{51}  & \ding{55}& $0.937\scriptscriptstyle\pm0.004$& $0.979\scriptscriptstyle\pm0.015$  & $0.547\scriptscriptstyle\pm0.006$  & $0.887\scriptscriptstyle\pm0.0014$  & $0.678\scriptscriptstyle\pm0.018$ & $0.866\scriptscriptstyle\pm0.007$ \\[-0.4em]
                          &\tiny  &\tiny & \tiny{\color{darkgreen}{$+0.012$}\quad\quad\quad\quad}  & \tiny{\color{darkgreen}{$+0.010$}\quad\quad\quad\quad} & \tiny{\color{darkgreen}{$+0.024$}\quad\quad\quad\quad} & \tiny{\color{darkgreen}{$+0.041$}\quad\quad\quad\quad} &   \tiny{\color{darkgreen}{$+0.047$}\quad\quad\quad\quad} & \tiny{\color{darkgreen}{$+0.003$}\quad\quad\quad\quad~}\\%[-0.3em]
                          & \ding{51}  & \ding{51}&$0.948\scriptscriptstyle\pm0.014$&$0.981\scriptscriptstyle\pm0.021$ &  $0.550\scriptscriptstyle\pm0.006$  & $0.886\scriptscriptstyle\pm0.0015$    &$0.679\scriptscriptstyle\pm0.013$ & $0.887\scriptscriptstyle\pm0.002$ \\ [-0.4em]
                          &\tiny  &\tiny & \tiny{\color{darkgreen}{$+0.023$}\quad\quad\quad\quad}  & \tiny{\color{darkgreen}{$+0.012$}\quad\quad\quad\quad} & \tiny{\color{darkgreen}{$+0.027$}\quad\quad\quad\quad} & \tiny{\color{darkgreen}{$+0.040$}\quad\quad\quad\quad} &   \tiny{\color{darkgreen}{$+0.048$}\quad\quad\quad\quad} & \tiny{\color{darkgreen}{$+0.024$}\quad\quad\quad\quad~} \\\hline%[-0.3em]
\multirow{3}{*}{TransMIL \textcolor{mygray}{\tiny NeurIPS \citeyear{shao2021transmil}}} & \ding{55}  & \ding{55}& $0.922\scriptscriptstyle\pm0.019$   & $0.943\scriptscriptstyle\pm0.009$   &  $0.554\scriptscriptstyle\pm 0.048$    & $0.792\scriptscriptstyle\pm 0.029$    & $0.631\scriptscriptstyle\pm0.030$ & $0.841\scriptscriptstyle\pm0.006$   \\
                          & \ding{51}  & \ding{55}& $0.931\scriptscriptstyle\pm0.001$   & $0.947\scriptscriptstyle\pm0.008$   &  $0.577\scriptscriptstyle\pm0.006$  & $0.824\scriptscriptstyle\pm0.012$    & $0.647\scriptscriptstyle\pm0.024$ & $0.853\scriptscriptstyle\pm0.021$ \\ [-0.4em]
                          & \tiny & \tiny& \tiny{\color{darkgreen}{$+0.009$}\quad\quad\quad\quad} & \tiny{\color{darkgreen}{$+0.004$}\quad\quad\quad\quad}   & \tiny{\color{darkgreen}{$+0.023$}\quad\quad\quad\quad}  & \tiny{\color{darkgreen}{$+0.032$}\quad\quad\quad\quad}    & \tiny{\color{darkgreen}{$+0.016$}\quad\quad\quad\quad}  & \tiny{\color{darkgreen}{$+0.012$}\quad\quad\quad\quad}  \\%[-0.3em]
                          & \ding{51}  & \ding{51}& $0.933\scriptscriptstyle\pm0.023$&$0.954\scriptscriptstyle\pm0.021$   &  $0.580\scriptscriptstyle\pm0.008$  & $0.829\scriptscriptstyle\pm0.010$  &$0.672\scriptscriptstyle\pm0.024$&$0.883\scriptscriptstyle\pm0.041$  \\ [-0.4em]
                          & \tiny & \tiny& \tiny{\color{darkgreen}{$+0.011$}\quad\quad\quad\quad} & \tiny{\color{darkgreen}{$+0.011$}\quad\quad\quad\quad}    & \tiny{\color{darkgreen}{$+0.026$}\quad\quad\quad\quad}  & \tiny{\color{darkgreen}{$+0.037$}\quad\quad\quad\quad}   & \tiny{\color{darkgreen}{$+0.041$}\quad\quad\quad\quad}  & \tiny{\color{darkgreen}{$+0.045$}\quad\quad\quad\quad}  \\%[-0.3em]
                          \hline
\multirow{3}{*}{DSMIL \textcolor{mygray}{\tiny CVPR \citeyear{li2021dual}}}    & \ding{55}  & \ding{55}& $0.943\scriptscriptstyle\pm0.007$   & $0.966\scriptscriptstyle\pm0.009$   &   $0.532\scriptscriptstyle\pm 0.064$   & $0.804\scriptscriptstyle\pm 0.032$    & $0.577\scriptscriptstyle\pm0.028$ & $0.816\scriptscriptstyle\pm0.028$ \\
                          & \ding{51}  & \ding{55}& $0.943\scriptscriptstyle\pm0.001$   & $0.974\scriptscriptstyle\pm0.007$  &  $0.544\scriptscriptstyle\pm0.038$  & $0.819\scriptscriptstyle\pm0.031$     & $0.609\scriptscriptstyle\pm0.012$ & $0.837\scriptscriptstyle\pm0.013$  \\[-0.4em]
                          & \tiny & \tiny& \tiny{$\pm0.000$\quad\quad\quad\quad} & \tiny{\color{darkgreen}{$+0.008$}\quad\quad\quad\quad}   & \tiny{\color{darkgreen}{$+0.012$}\quad\quad\quad\quad} & \tiny{\color{darkgreen}{$+0.015$}\quad\quad\quad\quad}   & \tiny{\color{darkgreen}{$+0.032$}\quad\quad\quad\quad}  & \tiny{\color{darkgreen}{$+0.021$}\quad\quad\quad\quad} \\%[-0.3em]
                          & \ding{51}  & \ding{51}&$0.942\scriptscriptstyle\pm0.026$&$0.985\scriptscriptstyle\pm0.022$  & $0.559\scriptscriptstyle\pm0.028$  & $0.823\scriptscriptstyle\pm0.019$   &$0.612\scriptscriptstyle\pm0.031$&$0.849\scriptscriptstyle\pm0.042$  \\ [-0.4em]
                          & \tiny & \tiny& \tiny{\color{darkred}{$-0.001$}\quad\quad\quad\quad} & \tiny{\color{darkgreen}{$+0.019$}\quad\quad\quad\quad}   & \tiny{\color{darkgreen}{$+0.027$}\quad\quad\quad\quad} &  \tiny{\color{darkgreen}{$+0.019$}\quad\quad\quad\quad}  & \tiny{\color{darkgreen}{$+0.035$}\quad\quad\quad\quad}  & \tiny{\color{darkgreen}{$+0.033$}\quad\quad\quad\quad} \\\hline%[-0.3em]
\end{tabular}}
\vspace{-0.4cm}
\end{table}

% We further evaluate performance on CAMELYON-16 and CAMELYON-17, datasets characterized by sparse tumor regions, where malignant tissue can constitute as little as 5\% of the slides \citep{cheng2021computational}. Here, ASMIL’s advantages become even more pronounced: on CAMELYON-16, we observe a 1.1\% increase in F1 score and a 0.5\% uplift in AUC compared to the strongest baseline; similarly, on CAMELYON-17, ASMIL improves the F1 score by 3.5\%, which highlights ASMIL’s efficacy under an ill-posed, weakly supervised task. We compare the computational cost of ASMIL with that of other benchmarks in \Cref{Appendix:CcostASMILVsOthers}. 

For CAMELYON-16 and CAMELYON-17 datasets with sparse tumor regions, where malignant tissue may occupy as little as $5\%$ of a slide \citep{cheng2021computational},the advantages are even more pronounced. on CAMELYON-16, we observe a 3.3\% increase in F1 score and a 1.6\% uplift in AUC compared to the strongest baseline; similarly, on CAMELYON-17, ASMIL improves the F1 score by 6.49\%, which highlights ASMIL’s efficacy under an ill-posed, weakly supervised task. We compare the computational cost of ASMIL with that of other benchmarks in \Cref{Appendix:CcostASMILVsOthers}.

\subsection{Integrating the Anchor Model and NSF with Other MIL Methods}
\label{appendix:CombineAnchorWithMILs} 
We regard the anchor model as a general plug-in module for attention-based MIL in WSI analysis. Accordingly, for each baseline we evaluate two variants while keeping all other components and hyperparameters fixed: ($i$) \textbf{+Anchor} (EMA-updated anchor with attention matching), and ($ii$) \textbf{+Anchor+NSF} (anchor updated by EMA and using NSF). The results are summarized in \Cref{Tab:ablationappltanchorNS}. As shown, adding the anchor model and the NSF consistently improves performance, with F1 score gains up to $10.73\%$ (for ABMIL on BRACS), except when adding the anchor to DSMIL on the CAMELYON-16  dataset, where the F1 score decreases by $0.001$ relative to the original model. The additional computational cost introduced by the anchor model is reported in \Cref{Appendix:AdditionalComputationalCostIntroducedbyAnchorModel}.

\begin{wraptable}[7]{r}{0.42\textwidth}
\vspace{-2cm}
\centering
\caption{Component-wise ablation of ASMIL on BRACS. We evaluate the contribution of the anchor model, NSF, and random drop (rd).}
\label{tab:component_ablation}
\resizebox{0.43\textwidth}{!}{%
\begin{tabular}{c|c|c|cc}
\toprule
\rowcolor{lightgray} Anchor & NSF & rd & F1 score $\uparrow$ & AUC $\uparrow$ \\ \hline
\ding{51} & \ding{51} & \ding{51}  & $\mathbf{0.781\scriptscriptstyle\pm 0.042}$ & $\mathbf{0.914\scriptscriptstyle\pm 0.014}$ \\
\ding{51} & \ding{51} & \ding{55}  & $0.765\scriptscriptstyle\pm 0.030$ & $0.903\scriptscriptstyle\pm 0.018$ \\
\ding{51} & \ding{55}  & \ding{51} & $0.759\scriptscriptstyle\pm 0.028$ & $0.895\scriptscriptstyle\pm 0.012$ \\
\ding{51} & \ding{55}  & \ding{55}  & $0.747\scriptscriptstyle\pm 0.026$ & $0.887\scriptscriptstyle\pm 0.015$ \\
% \ding{55}  & \ding{51} & \ding{51} & $0.741\scriptscriptstyle\pm 0.025$ & $0.880\scriptscriptstyle\pm 0.016$ \\
% \ding{55}  & \ding{51} & \ding{55}  & $0.736\scriptscriptstyle\pm 0.021$ & $0.873\scriptscriptstyle\pm 0.013$ \\
\ding{55}  & \ding{55}  & \ding{51} & $0.728\scriptscriptstyle\pm 0.019$ & $0.868\scriptscriptstyle\pm 0.010$ \\
\ding{55}  & \ding{55}  & \ding{55}  & $0.712\scriptscriptstyle\pm 0.020$ & $0.860\scriptscriptstyle\pm 0.012$ \\ \hline
\end{tabular}%
}
% \vspace{-0.5cm}
\end{wraptable}

\subsection{Localization}
We evaluate tumor localization on CAMELYON-16 both qualitatively and quantitatively. Qualitative heatmaps are shown in \Cref{fig:Localization}. Compared with baseline methods, ASMIL consistently highlights all cancerous regions. We attribute these gains to reduced over-concentration by the NSF in the anchor model, which yields more faithful attention distributions.

Following the official CAMELYON-16 and \citet{fourkioti2024camil}, we report lesion-level Free-Response ROC (FROC) \citep{miller1969froc, bunch1978free} the Dice coefficient on cancerous slides, and tile-level specificity on normal slides. To obtain the predicted masks, we use scaled attention distributions for CLAM \citep{lu2021data}, TransMIL \citep{shao2021transmil}, DSMIL \citep{li2021dual}, and CAMIL \citep{fourkioti2024camil}; tile-level logits for DTFD-MIL \citep{zhang2022dtfd}; and for ASMIL, the per-tile average of FEAT-token attentions. Quantitative results for FROC, Dice, and specificity, as well as additional attention-map visualizations, are provided in \Cref{appendix:local_vis}.
\label{sec:localization}
\begin{figure}
\begin{center}
\includegraphics[width=\linewidth]{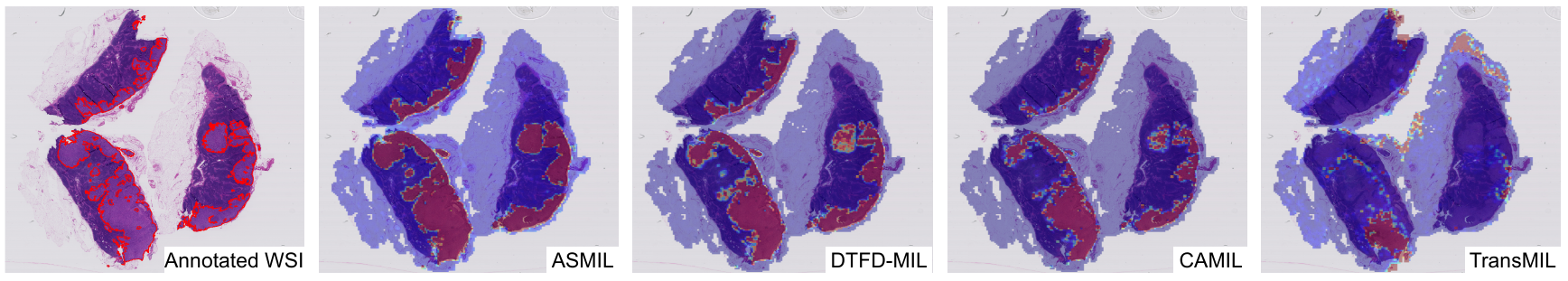}
\end{center}
\vspace{-0.5cm}
\caption{Visual comparison of attention maps on the CAMELYON-16 dataset. The left column shows the original WSI with ground-truth tumor annotations outlined in red; the remaining columns present attention maps for ASMIL (ours), DTFD-MIL, CAMIL, and TransMIL (left to right).}
\vspace{-0.7cm}
\label{fig:Localization}
\end{figure}

\subsection{Ablation Study}
\label{Sec:Ablation}
\label{Sec:AnchorModelUpdate}

Lastly, we evaluate the effect of the anchor model, NSF, and random drop (rd) by enabling or disabling them in all combinations. As shown in \Cref{tab:component_ablation}, the full model (all three enabled) achieves the best F1 score and AUC. Removing any component degrades performance, with the anchor model having the largest impact. Without all three, the model drops to the lowest scores, confirming that each component contributes to the overall effectiveness of ASMIL.  Additional ablations on the loss weight $\beta$, the number of trainable FEAT tokens, the EMA factor $m$, the anchor update frequency, and the random drop rate are reported in \Cref{Appendix:ablation}.

\section{Conclusion}

\label{Sec:Conclusion}
In this work, we identified a previously overlooked failure mode in attention-based MIL for WSI: unstable attention dynamics that hinder convergence. We proposed ASMIL, which stabilizes training via an anchor model, prevents over-concentration by using a normalized sigmoid in the anchor, and mitigates overfitting with token dropout. Across multiple WSI benchmarks, ASMIL improves classification performance and state-of-the-art localization performance. These results underscore the importance of jointly controlling attention stability, concentration, and overfitting in weakly supervised WSI analysis. We anticipate that the proposed anchor model and normalized sigmoid function will serve as building blocks for future MIL-based WSI analysis algorithms, ultimately facilitating more accurate and interpretable analysis of gigapixel pathology images. Due to space constraints, we defer the discussion of future work and limitations to \Cref{Appendix:LimitationsFutureWorks}.

\section*{Ethics Statement}
All WSI datasets used in this work are publicly available and were obtained from open-access websites. The usage of these datasets strictly follows the terms and conditions set by the dataset providers and adheres to established academic and research community standards. No personally identifiable information or sensitive patient data is involved.

\section*{Reproducibility Statement}
We have taken steps to ensure our results are reproducible. All model and algorithmic details, training procedures, hyperparameters, evaluation protocols, and metrics are specified in the main text. The appendix provides complete proofs, implementation notes, ablations, and additional qualitative results. An anonymized GitHub repository contains the source code and configuration files, and pre-trained checkpoints. All datasets used in our experiments are publicly available; download links, data splits, and preprocessing steps are documented in the repository and referenced in the appendix.

% \subsubsection*{Acknowledgments}
% Use unnumbered third level headings for the acknowledgments. All
% acknowledgments, including those to funding agencies, go at the end of the paper.
% \clearpage

\bibliography{iclr2025_conference}
\bibliographystyle{iclr2025_conference}
% \clearpage
\appendix

%\section{Related work1}
%EMA-based target networks are widely used during training. Mean Teacher \citet{tarvainen2017mean} maintains an exponential moving average (EMA) of the student parameters to enforce prediction consistency and effectively exploit limited labels. Building on this idea, BYOL \cite{chen2020exploring,grill2020bootstrap} adopts a non-contrastive Siamese framework in which a target branch is updated as an EMA “teacher,” and representation collapse is prevented through architectural asymmetry and a dedicated predictor head. DINO \citep{caron2021emerging,oquab2023dinov2} extends EMA-based self-distillation to Vision Transformers, while DINOv3 \cite{simeoni2025dinov3} further stabilizes dense features by introducing a Gram-anchoring objective that constrains patch–patch similarity structures over long training schedules. Collectively, these works establish EMA teachers as central to semi-supervised consistency and to stable, non-contrastive representation learning.

%Our approach differs in intent and mechanism. While ASMIL’s anchor model superficially resembles a moving-average teacher, it is introduced to stabilize attention distributions during supervised MIL training and to mitigate attention over-concentration, rather than to provide representation targets, enlarge the effective training set, or prevent collapse. The anchor serves as a moving reference for attention maps only, whereas patch embeddings and slide-level predictions are still learned directly from the supervised MIL objective.

\section{Related Work on EMA Models and Anchoring Strategies}
\label{Appendix:EMAliter}
EMA-based target networks are central in self-supervised representation learning. Mean Teacher \citet{tarvainen2017mean} maintains an EMA of student parameters and enforces prediction consistency with this temporal ensemble under limited supervision. BYOL \citep{grill2020bootstrap,wu2023metagcd} uses an EMA-updated target network to provide representation targets for an online network with an additional predictor, and avoids collapse through architectural asymmetry and the EMA update instead of negatives. DINO-style methods \citep{caron2021emerging,oquab2023dinov2} adapt EMA self-distillation to Vision Transformers, where an EMA teacher produces soft probability targets on multi-crop views; centering, sharpening, and a momentum schedule control the stability–adaptation trade-off of these targets.

{DINOv3 \citep{simeoni2025dinov3} revisits EMA teachers for dense prediction and studies how dense features drift or collapse under long training. It introduces \emph{Gram anchoring}, which aligns Gram matrices of patch--patch similarities between a student and its EMA teacher so that dense features remain close to a temporally smoothed reference. The EMA momentum and the strength of this anchoring loss jointly determine how strongly dense features are tied to the teacher versus how quickly they adapt.}

{ASMIL also maintains an EMA-updated copy of the model, but uses it in a different regime and on a different target. Training is fully supervised at the bag level, and the EMA branch does not supply pseudo-labels or representation targets. Instead, it defines a temporally smoothed \emph{attention distribution} over tiles, and the anchor enters the loss only through the KL term in Eq.~\eqref{eq:KLloss}, while the bag-level cross-entropy in Eq.~\eqref{eq:totalloss} provides all semantic supervision. The shared encoder and classifier parameters are optimized by standard backpropagation; the EMA update acts purely as a temporal regularizer on attention, in contrast to BYOL/DINO, which anchors global embeddings, and DINOv3, which anchors patch--patch similarity structure.}

{The same EMA hyperparameters induce an analogous stability–adaptation trade-off but at the level of attention rather than features. The EMA momentum in ASMIL sets how rapidly the anchor follows the online model, and the weight $\beta$ on the KL term controls how strongly attention is pulled toward the temporally smoothed reference. Unlike BYOL and DINO/DINOv3, where EMA model is designed to avoid global representation collapse, ASMIL uses EMA anchoring to reduce unstable and over-concentrated attention patterns observed under purely online MIL training, while the supervised objective already discourages trivial constant-attention solutions.}

% \section{Related Works on EMA Model in Representation Learning}
% \label{Appendix:EMAliter}

% EMA-based target networks are widely used during training. Mean Teacher \citet{tarvainen2017mean} maintains an exponential moving average (EMA) of the student parameters to enforce prediction consistency and effectively leverage limited labels. Building on this idea, BYOL \cite{chen2020exploring,grill2020bootstrap} employs a non-contrastive Siamese framework in which the target branch is an EMA “teacher,” and representation collapse is avoided via architectural asymmetry. DINO \citep{caron2021emerging,oquab2023dinov2} adapts self-distillation to Vision Transformers with an EMA teacher. {One contemporaneous work that is conceptually related to ASMIL is DINOv3 \cite{simeoni2025dinov3}, which introduces Gram anchoring on patch–patch similarities to prevent dense features from drifting during long training and thereby improve performance on dense vision tasks.}
% Collectively, these works establish EMA teachers as central to semi-supervised consistency and to stable, non-contrastive representation learning. 

% Our approach differs in intent and mechanism. While ASMIL’s anchor model superficially resembles a moving-average teacher, it is introduced to stabilize attention distributions during training and to mitigate attention over-concentration, rather than to provide supervisory targets or expand the effective training set. The anchor serves as a moving reference for attention distributions.

\section{Experimental Details}
\label{Appendix:ExperimentalDetails}

We train all models for 50 epochs with a batch size of 1, using Adam (weight decay $10^{-4}$) and a cosine learning rate schedule with an initial learning rate of $10^{-4}$. All reported results are averaged over five random seeds.

\subsection{WSI Pre-processing}

For all datasets, we used the publicly available CLAM WSI preprocessing toolbox \citep{lu2021data} to segment tissue regions and divide each slide into non-overlapping $256\times256$ patches at $20\times$ magnification. Tissue segmentation was performed automatically using Otsu’s thresholding. To reduce computational overhead and leverage previously learned representations, we adopted a ResNet-18 model \citep{resnet} pretrained on ImageNet \citep{ILSVRC15} and an open-source self-supervised ViT-small model \citep{kang2022benchmarking} as feature extractors\footnote{The checkpoint is available at \url{https://github.com/lunit-io/benchmark-ssl-pathology}.}. The ViT-small model was pretrained on 36,666 whole slide images from The Cancer Genome Atlas (TCGA) and the internally collected TULIP dataset. For consistency and fairness in the subtyping task, we used the same feature extractors across all baseline methods.

For the localization experiments, following \citet{TOURNIAIRE2023102763}, we used a ResNet-18 backbone pretrained with SimCLR \citep{chen2020simple}\footnote{The checkpoint is available at \url{https://github.com/binli123/dsmil-wsi}.}. This feature extractor maps each tile to a 1024-dimensional feature vector.

\subsection{Datasets}
 
CAMELYON-16 \citep{CAMELYON16} is a widely used publicly available WSI dataset designed for lymph node metastasis detection. It contains 270 training and 129 test slides collected from two medical centers, with detailed pixel-level annotations provided by expert pathologists. Notably, some slides include only partial annotations, making the dataset particularly challenging due to the presence of small or sparse metastatic regions. CAMELYON-16 has become a standard benchmark for evaluating weakly supervised and fully supervised algorithms in computational pathology.

CAMELYON-17 \citep{8447230} extends the scope of CAMELYON-16 by including a total of 1,000 WSIs from five medical centers, making it a more diverse and clinically representative dataset. Among these, 500 slides are publicly available and come with slide-level labels, while the remaining 500 are held out for challenge-based evaluations. The inclusion of data from multiple institutions introduces significant variability in staining and scanning conditions, making CAMELYON-17 a suitable benchmark for testing the generalization performance of WSI-based models.

The BRACS dataset \citep{brancati2022bracs} is a large-scale WSI dataset curated for the task of breast cancer subtype classification. It comprises 547 WSIs collected from several medical institutions and annotated by expert pathologists into clinically relevant categories: benign tumors, atypical tumors, and malignant tumors. These labels reflect the progression of breast lesions and are critical for diagnostic decision-making and treatment planning. BRACS captures a wide range of histological appearances and staining variations, making it a valuable resource for developing and benchmarking MIL and weakly supervised classification models in real-world clinical settings.

\subsection{Data Splits}

Following \citet{zhang2025AEM, zhang2024attention}, we partition the datasets as follows. For CAMELYON-16, the WSIs are divided into training, validation, and test sets. The 270 WSIs from Hospital 1 are split, five times, into training (90\%) and validation (10\%) subsets; the 130 WSIs from Hospital 2 are used as a test set. The official test set of 129 WSIs is used for final evaluation. For CAMELYON-17, we use 500 WSIs in total: 300 WSIs from three hospitals for training/validation (90\%, 10\%) and 200 WSIs from two other hospitals for testing to assess out-of-distribution (OOD) performance. For BRACS, we follow the official split: 395 slides for training, 65 for validation, and 87 for testing. The task is a three-class WSI classification—benign tumor, atypical tumor, and malignant tumor. All results are averaged over five random seeds, and we report the mean performance on the official competition test set.

\begin{figure}[h]
\centering
\includegraphics[width=\linewidth]{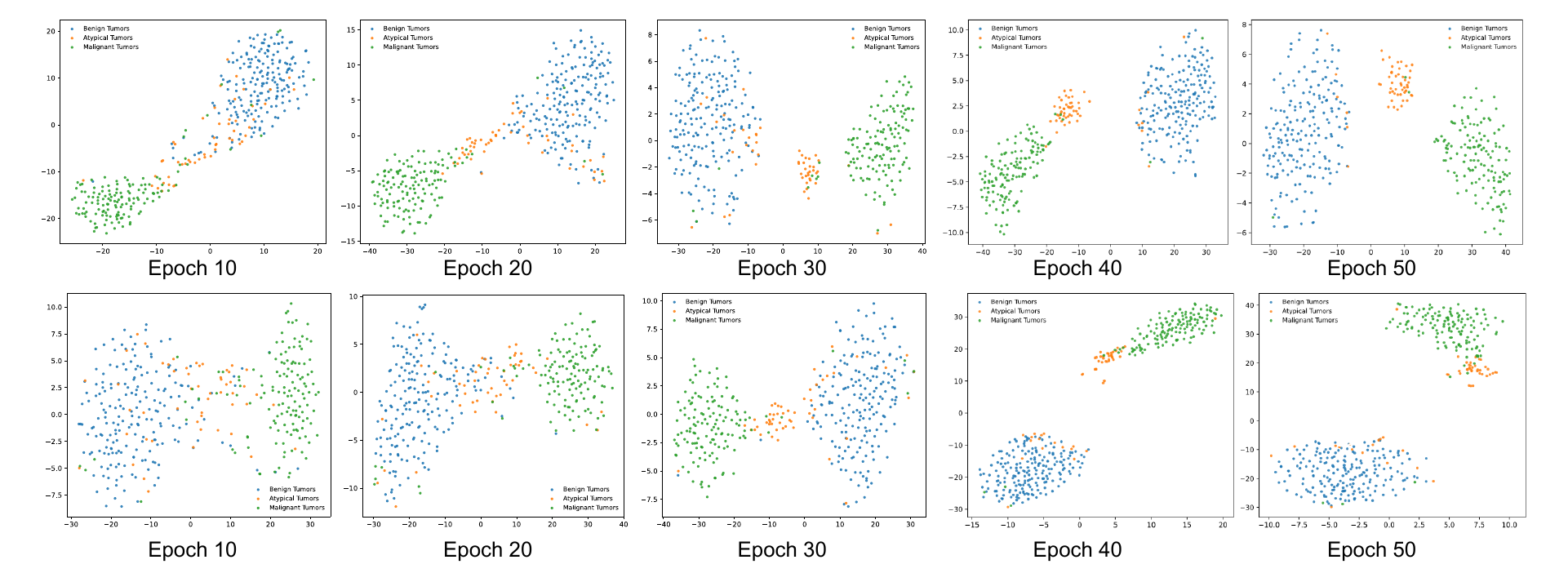}
\vspace{-0.8cm}
\caption{T-SNE embeddings of ASMIL bag-level features on the BRACS training set across training epochs. \textbf{Top:} with the anchor model; \textbf{Bottom:} without the anchor model.}
\label{fig:TsneASMIL_Bracs}
\end{figure}
\begin{figure}[h]
\centering
\includegraphics[width=\linewidth]{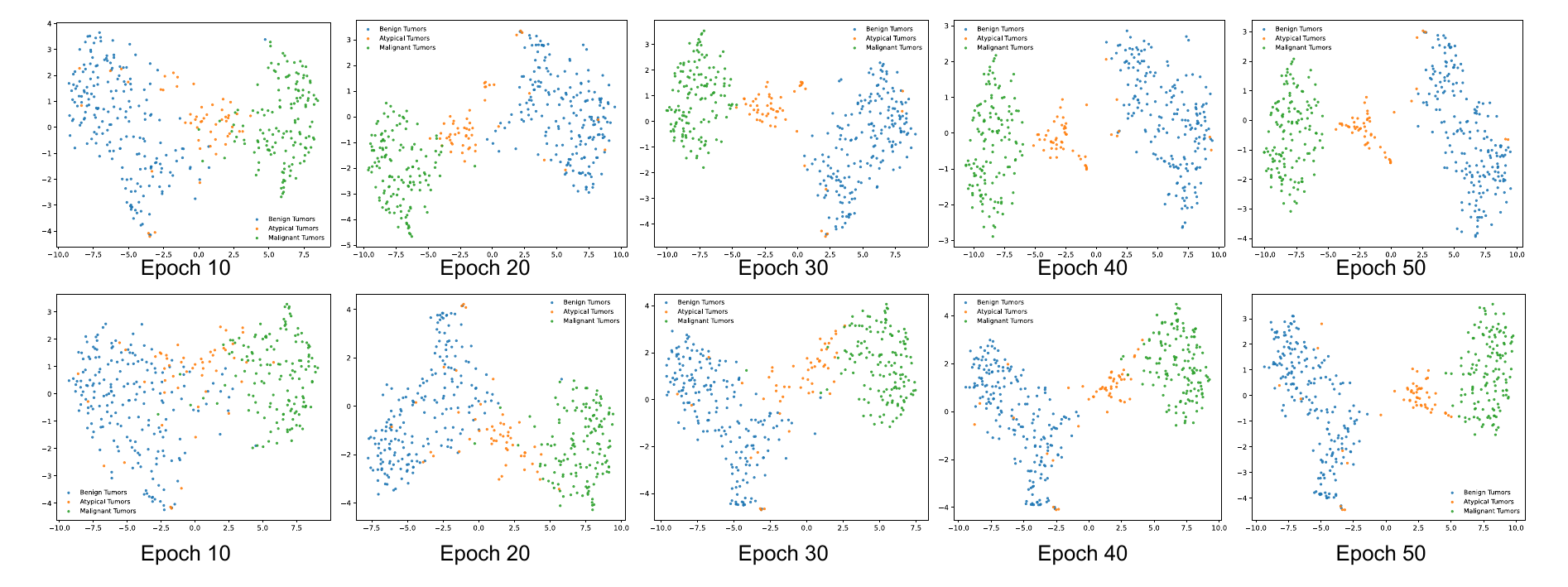}
\vspace{-0.8cm}
\caption{T-SNE embeddings of TransMIL bag-level features on the BRACS training set across training epochs. \textbf{Top:} with the anchor model; \textbf{Bottom:} without the anchor model.}
\label{fig:TsneTranMIL_Bracs}
\end{figure}
\section{T-SNE Visualization of Bag-Level Features}
\label{Sec:TSNE_bag_feature}
To assess how the anchor model stabilizes attention during training, we visualize the bag-level representations learned by ASMIL using t-SNE~\cite{maaten2008visualizing}; see \Cref{fig:TsneASMIL_Bracs}. Compared to ASMIL without the anchor, the model with an anchor forms more distinct clusters and exhibits clearer inter-class boundaries across training epochs, indicating faster convergence and more discriminative features. We observe a similar trend for TransMIL in \Cref{fig:TsneTranMIL_Bracs}.

\section{Macro AUC and Macro F1 score under Class Imbalance}
\label{appendix:F1andAUC}

Since all datasets considered in this work are class-imbalanced, we report \emph{macro-averaged} variants of the area under the ROC curve (AUC) and the F1 score as our primary summary metrics. Macro-averaging assigns equal weight to each class and therefore prevents majority classes from dominating the overall score.

\paragraph{Setup.}
Let $\mathcal{Y}=\{1,\dots,K\}$ denote the set of classes. For a sample $x$ with true label $y \in \mathcal{Y}$, let $s_k(x)\in\mathbb{R}$ be the model score for class $k$. Define one-vs-rest binary indicators $y_k=\mathbb{1}[y=k]$ for each class $k$, and the corresponding confusion-matrix counts $(\mathrm{TP}_k,\mathrm{FP}_k,\mathrm{FN}_k,\mathrm{TN}_k)$ computed by treating class $k$ as “positive’’ and all others as “negative’’.

\subsection[Macro-F1]{Macro-$F1$}
For class $k$, precision and recall are
\begin{align}
\mathrm{Precision}_k=\frac{\mathrm{TP}_k}{\mathrm{TP}_k+\mathrm{FP}_k},\qquad
\mathrm{Recall}_k=\frac{\mathrm{TP}_k}{\mathrm{TP}_k+\mathrm{FN}_k}.
\end{align}
The per-class $F1$ is the harmonic mean of precision and recall:
\begin{align}
F1_k \;=\; \frac{2\,\mathrm{Precision}_k\,\mathrm{Recall}_k}{\mathrm{Precision}_k+\mathrm{Recall}_k}.
\end{align}
The \emph{macro-$F1$} averages the per-class values uniformly:
\begin{align}
\mathrm{Macro}\text{-}F1 \;=\; \frac{1}{K}\sum_{k=1}^{K} F_{1,k}.
\end{align}
As a thresholded, decision-level metric, $F1_{k}$ (and thus macro-$F1$) depends on the classification threshold applied to scores $s_k(x)$. We use a threshold of $0.5$ for all experiments. The same definition applies to multilabel settings by averaging over labels.

\subsection{Macro-AUC (ROC)}
For class $k$, the ROC curve plots the true positive rate against the false positive rate as the threshold on $s_k(x)$ varies:
\begin{align}
\mathrm{TPR}_k=\frac{\mathrm{TP}_k}{\mathrm{TP}_k+\mathrm{FN}_k}, \qquad
\mathrm{FPR}_k=\frac{\mathrm{FP}_k}{\mathrm{FP}_k+\mathrm{TN}_k}.
\end{align}
The per-class AUC, $\mathrm{AUC}_k\in[0,1]$, is the area under this curve; equivalently, it is the probability that a randomly chosen positive example (for class $k$) receives a higher score than a randomly chosen negative example. The \emph{macro-AUC} is the uniform average across classes:
\begin{align}
\mathrm{Macro}\text{-}\mathrm{AUC} \;=\; \frac{1}{K}\sum_{k=1}^{K} \mathrm{AUC}_k.
\end{align}
Unlike $F1$, AUC is threshold-agnostic and measures the ranking quality of scores.

% \paragraph{Why macro-averaging?}
% Under class imbalance, micro-averaging (which aggregates all $\mathrm{TP}$, $\mathrm{FP}$, $\mathrm{FN}$, $\mathrm{TN}$ before computing a single score) is dominated by majority classes. Macro-averaging treats each class equally, yielding an evaluation that is more sensitive to minority-class performance, which is critical in our setting.

\section[Proof of Theorem~1]{Proof of \Cref{thm:nsf_vs_smx}} \label{app:NSF_proof}
\begin{proof}
We proceed in two parts.

\textbf{Part A: NSF bounds.}
Let $s_i=\sigma(z_i)$ and $S=\sum_{j=1}^{N}\sigma(z_j)$, so $\alpha_i^{\mathrm{nsf}}=s_i/S$.

\emph{Equalization among highs.}
For $i,h'\in \mathcal{H}$,
\begin{align}
\frac{\alpha_i^{\mathrm{nsf}}}{\alpha_{h'}^{\mathrm{nsf}}}
= \frac{s_i}{s_{h'}}
= \frac{\sigma(z_i)}{\sigma(z_{h'})}.
\end{align}
Since $\sigma$ is strictly increasing and $z_i,z_{h'}\in[\tau,\tau+\gamma]$,
\begin{align}
\frac{\sigma(z_i)}{\sigma(z_{h'})}
\ \le\ \frac{\sigma(\tau+\gamma)}{\sigma(\tau)}
\ =\ \frac{1+e^{-\tau}}{1+e^{-(\tau+\gamma)}}
\ \le\ 1+e^{-\tau}.
\end{align}
\emph{Suppression of lows.}
For any $j\in \mathcal{L}$ we have $z_j\le -\tau$. Using monotonicity and the identity
\begin{align}
\sigma(-t)=e^{-t}\,\sigma(t)\qquad\text{for all }t\in\mathbb{R},
\label{eq:sig_identity}
\end{align}
we get $\sigma(z_j)\le \sigma(-\tau)=e^{-\tau}\sigma(\tau)$. Meanwhile
\begin{align}
S=\sum_{i=1}^{N}\sigma(z_i)\ \ge\ \sum_{i\in \mathcal{H}}\sigma(z_i)\ \ge\ h\,\sigma(\tau),
\end{align}
since $z_i\ge \tau$ for $i\in \mathcal{H}$. Hence
\begin{align}
\alpha_j^{\mathrm{nsf}}=\frac{\sigma(z_j)}{S}
\ \le\ \frac{e^{-\tau}\sigma(\tau)}{h\,\sigma(\tau)}
\ =\ \frac{e^{-\tau}}{h}.
\end{align}
For completeness, \eqref{eq:sig_identity} follows from
\(
\sigma(-t)=\frac{1}{1+e^{t}}
=\frac{e^{-t}}{1+e^{-t}}
=e^{-t}\sigma(t).
\)

\textbf{Part B: Softmax temperature constraints.}
Fix $T>0$ and $\boldsymbol{z}\in\mathcal{S}(\tau,\gamma,\mathcal{H},\mathcal{L})$.

\emph{Equalization among highs.}
For any $i,h'\in\mathcal{H}$,
\begin{align}
\frac{\alpha_i^{\mathrm{smx}}}{\alpha_{h'}^{\mathrm{smx}}}
=\frac{e^{z_i/T}}{e^{z_{h'}/T}}
=e^{(z_i-z_{h'})/T}.
\end{align}
Over $\mathcal{S}(\tau,\gamma,\mathcal{H},\mathcal{L})$, the worst high to high ratio occurs at $z_i=\tau+\gamma$ and $z_{h'}=\tau$, so
\begin{align}
\frac{\max_{i\in \mathcal{H}}\alpha_i^{\mathrm{smx}}}{\min_{h'\in \mathcal{H}}\alpha_{h'}^{\mathrm{smx}}}
\ \ge\ e^{\gamma/T}.
\end{align}
Therefore, the uniform bound
\(
\frac{\max_{i\in \mathcal{H}}\alpha_i^{\mathrm{smx}}}{\min_{h'\in \mathcal{H}}\alpha_{h'}^{\mathrm{smx}}}\ \le\ \kappa
\)
for all $\boldsymbol{z}\in\mathcal{S}(\tau,\gamma,\mathcal{H},\mathcal{L})$ implies
\begin{align}
T\ \ge\ \frac{\gamma}{\log \kappa}.
\label{eq:T_lower_eq}
\end{align}
\emph{Suppression of lows.}
Fix $j\in \mathcal{L}$. For a given $T$, the quantity $\alpha_j^{\mathrm{smx}}(\boldsymbol{z};T)$ is maximized over $\mathcal{S}(\tau,\gamma,\mathcal{H},\mathcal{L})$ by taking
\(
z_j=-\tau,\ z_i=\tau\ \forall i\in \mathcal{H},\ z_k\to -\infty\ \text{for }k\notin \mathcal{H}\cup\{j\},
\)
which minimizes the denominator subject to the constraints. Thus
\begin{align}
\sup_{\boldsymbol{z}\in\mathcal{S}(\tau,\gamma,\mathcal{H},\mathcal{L})} \alpha_j^{\mathrm{smx}}(\boldsymbol{z};T)
=\frac{e^{-\tau/T}}{h\,e^{\tau/T} + e^{-\tau/T}}
=\frac{1}{h\,e^{2\tau/T}+1}.
\label{eq:low_sup_full}
\end{align}
Consequently, the uniform suppression requirement $\alpha_j^{\mathrm{smx}}(\boldsymbol{z};T)\le \varepsilon$ for all $\boldsymbol{z}\in\mathcal{S}(\tau,\gamma,\mathcal{H},\mathcal{L})$ forces
\begin{align}
\frac{1}{h\,e^{2\tau/T}+1}\ \le\ \varepsilon
\quad\Longleftrightarrow\quad
h\,e^{2\tau/T}\ \ge\ \frac{1}{\varepsilon}-1
\quad\Longleftrightarrow\quad
T\ \le\ \frac{2\tau}{\log\!\bigl(\tfrac{1}{\varepsilon}-1\bigr)-\log h}.
\label{eq:T_upper_supp}
\end{align}
Combining \eqref{eq:T_lower_eq} and \eqref{eq:T_upper_supp} yields the simultaneous constraints
\(
T\ \le\ \frac{2\tau}{\log\!\bigl(\tfrac{1}{\varepsilon}-1\bigr)-\log h},
\;
T\ \ge\ \frac{\gamma}{\log \kappa}.
\)
If
\begin{align}
\frac{\gamma}{\log \kappa}\ >\ \frac{2\tau}{\log\!\bigl(\tfrac{1}{\varepsilon}-1\bigr)-\log h},
\end{align}
no $T$ can satisfy both.

\emph{Instantiating NSF targets.}
Set $\varepsilon=\varepsilon_{\mathrm{nsf}}=e^{-\tau}/h$ and $\kappa=\kappa_{\mathrm{nsf}}=\frac{1+e^{-\tau}}{1+e^{-(\tau+\gamma)}}$. Then
\begin{align}
\log\!\Bigl(\tfrac{1}{\varepsilon_{\mathrm{nsf}}}-1\Bigr)-\log h
&= \log\!\Bigl(\frac{1}{e^{-\tau}/h}-1\Bigr)-\log h
= \log\!\bigl(h e^{\tau}-1\bigr)-\log h
= \log\!\bigl(e^{\tau}-h^{-1}\bigr),
\end{align}
so the right side of the incompatibility condition equals
\begin{align}
\frac{2\tau}{\log\!\bigl(e^{\tau}-h^{-1}\bigr)}\ \xrightarrow[\tau\to\infty]{}\ 2.
\end{align}
Meanwhile,
\begin{align}
\log \kappa_{\mathrm{nsf}}
&= \log(1+e^{-\tau})-\log\bigl(1+e^{-(\tau+\gamma)}\bigr) \\
&= \log\!\Bigl(1+\frac{e^{-\tau}\,(1-e^{-\gamma})}{1+e^{-(\tau+\gamma)}}\Bigr)
\ \sim\ e^{-\tau}\,(1-e^{-\gamma})\quad(\tau\to\infty),
\end{align}
hence
\begin{align}
\frac{\gamma}{\log \kappa_{\mathrm{nsf}}}\ \xrightarrow[\tau\to\infty]{}\ \infty.
\end{align}
Therefore, for any fixed $\gamma>0$, the incompatibility condition holds for all sufficiently large $\tau$, so no single softmax temperature can match NSF uniformly on $\mathcal{S}(\tau,\gamma,\mathcal{H},\mathcal{L})$.
\end{proof}

\begin{remark}[Middle scores]
Allowing additional scores in $(-\tau,\tau)$ only strengthens the NSF suppression bound because the denominator $S$ increases, and it does not weaken the softmax lower bound \eqref{eq:T_lower_eq} on the high to high ratio since that ratio is independent of other coordinates. The softmax low suppression supremum \eqref{eq:low_sup_full} is still attained by driving all non-high and non-$j$ scores to $-\infty$, so the temperature constraints remain necessary.
\end{remark}

\begin{figure}[ht]
\centering
\includegraphics[width=\linewidth]{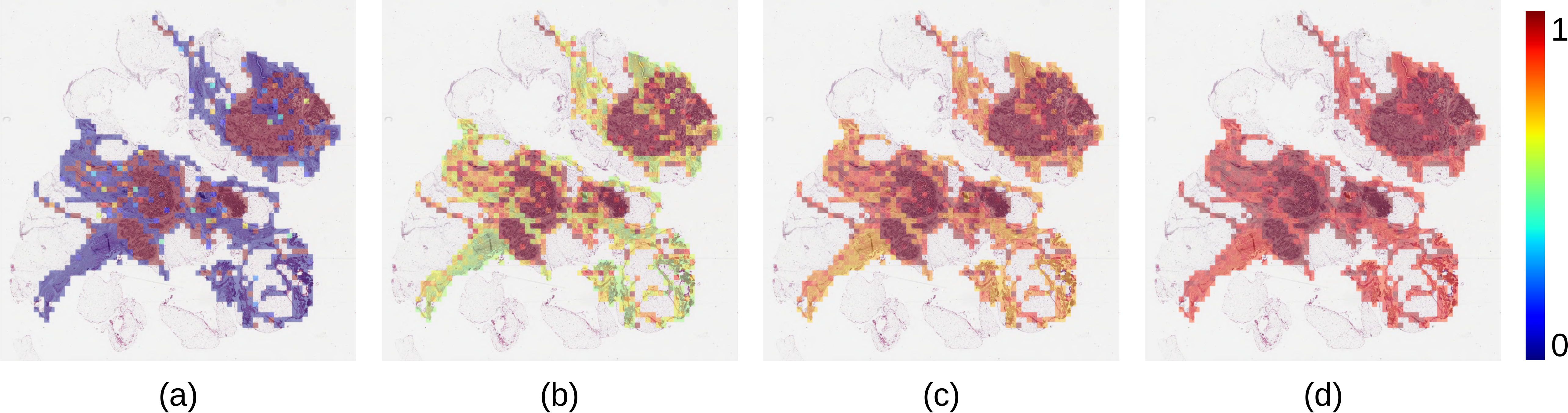}
\vspace{-0.8cm}
\caption{Ablation study of applying the softmax function with temperature scaling to the attention scores: (a) attention distribution of the proposed ASMIL, (b) softmax with  $T=2$ applied to the anchor model, (c)  softmax with $T=4$ applied to the anchor model, (d)  softmax with $T=8$ applied to the anchor model. }
\label{fig:AblationSoftMaXTemp}
\end{figure}

\section{Alternative to NSF in Anchor Model}
\label{appendix:NSFAlter} 
\subsection{Softmax With Temperature Scaling}
\label{appendix:anchorTemperature} 
A straightforward approach to mitigating over-concentration is to apply softmax with temperature scaling \citep{hinton2015distilling, ye2024bayesconditionaldistributionestimation, yang2309conditional, 10619241, 10900607}. This can indeed yield less concentrated attention distribution; however, as we observe in this section, a large temperature produces an overly smooth distribution, approaching a uniform distribution. This makes all tiles nearly indistinguishable, effectively reducing the operation to mean pooling and compromising interpretability.
To illustrate this, we conduct experiments on the BRACS dataset using the same training protocol as in \Cref{Sec:Experiments}, summarize the results in \Cref{Tab:softmax_anchor_results}, and visualize the attention maps in \Cref{fig:AblationSoftMaXTemp}. 
\begin{figure}[ht]
\includegraphics[width=\linewidth]{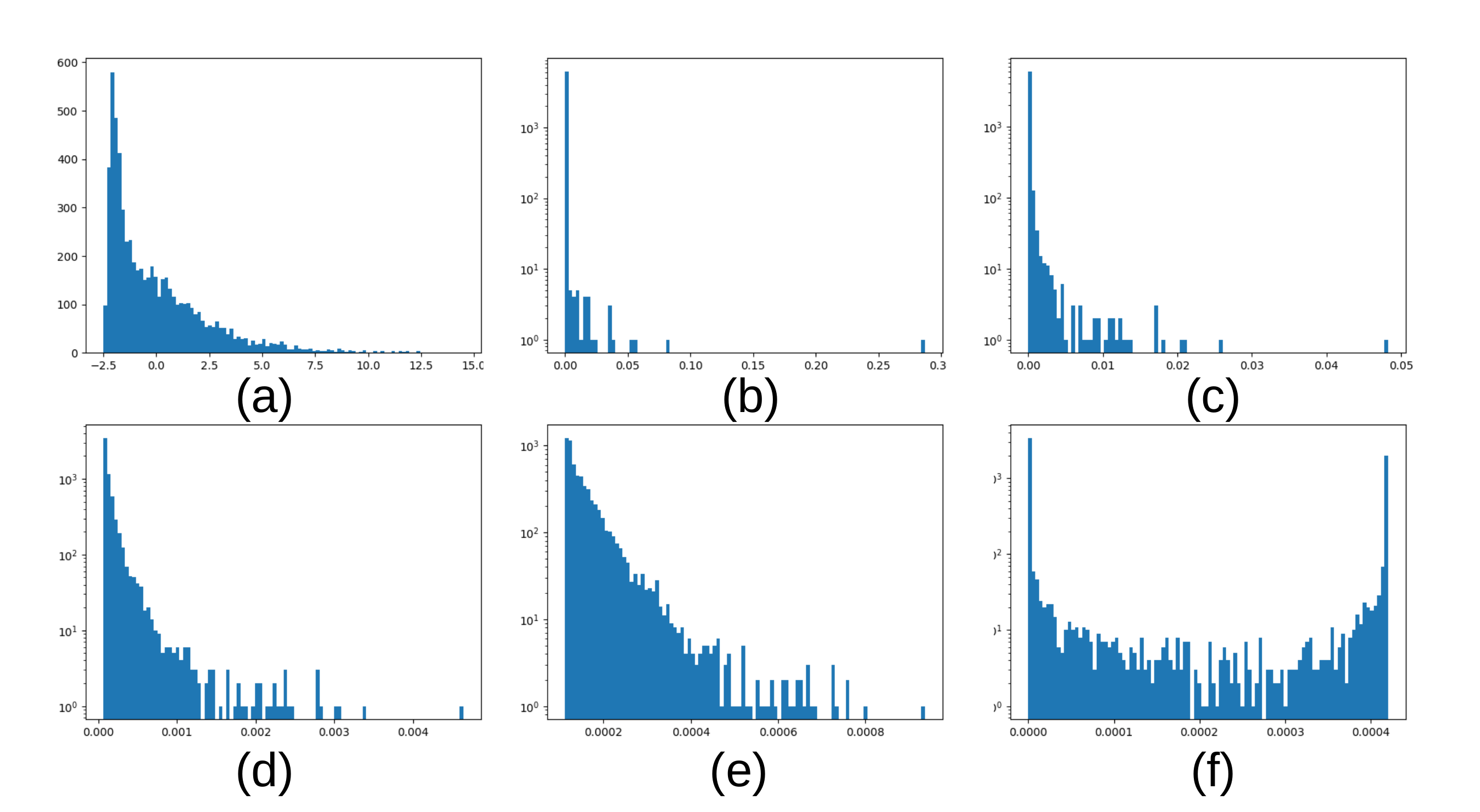}
\vspace{-0.8cm}
\caption{Histograms of (a) raw attention scores, (b) attention distribution obtained by the softmax function with temperature $T=1$, (c) $T=2$, (d) $T=4$, (e) $T=8$, and \textbf{(f) attention distribution computed using an NSF}. The Y-axis is displayed on a logarithmic scale for better visualization.}
\label{fig:softmax_sigmax_mapping}
\end{figure}

\begin{table}[ht]
\centering
\caption{Subtyping performance on BRACS, when we apply softmax with temperature scaling to the anchor model.}
\label{Tab:softmax_anchor_results}
\resizebox{\textwidth}{!}{\begin{tabular}{cccccccc}
\toprule
\rowcolor{lightgray} \multicolumn{8}{c}{BRACS} \\ \hline
\multicolumn{2}{c}{\begin{tabular}[c]{@{}c@{}}Normalized\\ Sigmoid\end{tabular}} & \multicolumn{2}{c}{\begin{tabular}[c]{@{}c@{}}Softmax\\ T=2\end{tabular}} & \multicolumn{2}{c}{\begin{tabular}[c]{@{}c@{}}Softmax\\ T=4\end{tabular}} & \multicolumn{2}{c}{\begin{tabular}[c]{@{}c@{}}Softmax\\ T=8\end{tabular}} \\ \hline
F1 score $\uparrow$ & AUC $\uparrow$ & F1 score $\uparrow$& AUC $\uparrow$ & F1 score $\uparrow$& AUC $\uparrow$ & F1 score $\uparrow$& AUC $\uparrow$ \\ \hline
$0.781\scriptscriptstyle\pm 0.042$ & $0.914\scriptscriptstyle\pm 0.014$ & $0.667\scriptscriptstyle\pm 0.049$ & $0.860\scriptscriptstyle\pm 0.027$ & $0.712\scriptscriptstyle\pm 0.029$ & $0.876\scriptscriptstyle\pm 0.012$ & $0.688\scriptscriptstyle\pm 0.037$ & $0.858\scriptscriptstyle\pm 0.031$ \\ \hline
\end{tabular}}
\end{table}

Furthermore, to clarify the differences between the NSF and softmax, we plot the histograms of the attention scores—(a) outputs from the NSF and softmax with various temperature scalings—in \Cref{fig:softmax_sigmax_mapping}. As shown, the saturation property of the NSF suppresses excessively large values.

\subsection{Entmax}

\begin{table}[ht]
\centering
\caption{ASMIL performance when replacing NSF with entmax on CAMELYON-16.}
\label{Tab:sparsemax_res}
\resizebox{\textwidth}{!}{\begin{tabular}{ccclclc}
\toprule
\rowcolor{lightgray} \multicolumn{7}{c}{CAMELYON-16} \\ \hline
\multicolumn{1}{c|}{Metric} & NSF & $\text{entmax}_{\alpha= 2}$ (sparsemax) & \multicolumn{1}{c}{$\text{entmax}_{\alpha= 1.75}$} & $\text{entmax}_{\alpha= 1.5}$ (entmax-15) & \multicolumn{1}{c}{$\text{entmax}_{\alpha= 1.25}$} & $\text{entmax}_{\alpha= 1}$ (softmax) \\ \hline
\multicolumn{1}{c|}{F1 score $\uparrow$} & $0.965\scriptscriptstyle\pm 0.020$ & $0.938\scriptscriptstyle\pm 0.031$ & $0.927\scriptscriptstyle\pm 0.034$ & $0.937\scriptscriptstyle\pm 0.014$ & $0.910\scriptscriptstyle\pm 0.026$ & $0.942\scriptscriptstyle\pm 0.0147$ \\
\multicolumn{1}{c|}{AUC $\uparrow$} & $0.985\scriptscriptstyle\pm 0.017$ & $0.964\scriptscriptstyle\pm 0.012$ & $0.959\scriptscriptstyle\pm 0.017$ & $0.960\scriptscriptstyle\pm 0.017$ & $0.925\scriptscriptstyle\pm 0.031$ & $0.963\scriptscriptstyle\pm 0.020$ \\
\multicolumn{1}{c|}{Time per epoch $\downarrow$} & 6.340s & 8.451s & 8.452s & 8.451s & 8.450s & 6.336s\\ \hline
\end{tabular}}
\end{table}
\begin{table}[ht]
\centering
\caption{Ablation study on the impact of applying the normalized sigmoid (NS) function to both the online and anchor models. \ding{51} indicates that NSF is applied to both models, while \ding{55} denotes the default setting where NSF is applied only to the anchor model. Subtyping performance is evaluated on the CAMELYON-16 and BRACS datasets using F1 score and AUC. A significant performance drop is observed on CAMELYON-16 when NSF is applied to both models.}
\label{Tab:applyNSoveronline}
\begin{tabular}{c|cc}
\toprule
\rowcolor{lightgray} Dataset   & \multicolumn{2}{c}{CAMELYON-16}                                         \\ \hline
Online NSF       & F1 score $\uparrow$& AUC $\uparrow$ \\ \hline
\ding{51} & $0.920\scriptscriptstyle\pm0.020 $ & $0.936\scriptscriptstyle\pm 0.021$ \\
\ding{55} & $0.965\scriptscriptstyle\pm 0.020$ & $0.985\scriptscriptstyle\pm 0.017$ \\ \toprule
\rowcolor{lightgray} Dataset   & \multicolumn{2}{c}{BRACS}                                               \\ \hline
Online NSF       & F1 score $\uparrow$& AUC $\uparrow$\\ \hline
\ding{51} & $0.726\scriptscriptstyle\pm 0.014$ & $0.865\scriptscriptstyle\pm 0.017$ \\
\ding{55} & $0.781\scriptscriptstyle\pm 0.042$ & $0.914\scriptscriptstyle\pm 0.014$ \\ \hline
\end{tabular}
\end{table}

Entmax is a family of mappings that convert a score vector $z\in \mathbb{R}^d$ into a probability vector $p \in \Delta^d$ by maximizing a linear score plus Tsallis-$\alpha$ entropy \cite{tsallis1988possible} $H^T_\alpha$: 
\begin{align}
    \text{entmax}_{\alpha}(z) = {argmax}_{p\in\Delta^d} z^T p + H^T_\alpha(p),
\end{align}
The solution admits a closed form
\begin{align}
    \boldsymbol{\alpha}_i = \Big[ \frac{\alpha-1}{\alpha} (z_i - \tau) \Big]_+^{\frac{1}{{\alpha-1}}}, \text{with }\sum_i \boldsymbol{\alpha}_i = 1, \label{eq:entmaxeq}
\end{align}
where $\tau$ is a threshold chosen so that the probabilities sum to one. As limiting cases, $\alpha \rightarrow 1$, yields softmax, and while $\alpha = 2$ yields sparsemax \citep{pmlr-v48-martins16}.

While entmax offers controllable sparsity, two drawbacks are pertinent to MIL-based WSI analysis: \textbf{($i$) Lack of selective flattening},  entmax is monotone in $z$ on its active support and does not explicitly equalize top-probability entries.
\textbf{($ii$) Higher computational cost}. Computing $\tau$ in \Cref{eq:entmaxeq} requires the bisection method, which adds non-trivial overhead vs. NSF’s fully closed-form normalization. 
These differences matter for MIL on WSIs, where multiple correlated tumor foci can be present: we prefer mechanisms that both discourage over-peaky attention and keep computation predictable. We replaced NSF with $entmax_{\alpha}$ inside ASMIL and swept $\alpha \in \{2,1.75,1.5,1.25,1\}$. For $\alpha=1$,  we used PyTorch softmax; for $\alpha>1$, we solved for $\tau$ via bisection. The implementation follows the reference code from DeepSPIN\footnote{\url{https://github.com/deep-spin/entmax}}. All other hyperparameters, model, and data pipeline were kept fixed. We report results on CAMELYON-16 in \Cref{Tab:sparsemax_res}. 
As seen, across $\alpha$, entmax underperforms NSF on both F1 and AUC and incurs a $33.5\%$ increase in epoch time vs. NSF.

\section{Applying Normalized Sigmoid to the Online Model}
\label{appendix:onlineNormalizedSigmoid} 

One might question the rationale behind applying the NSF to the anchor model while using the softmax function for the online model during training. To investigate this design choice, we experiment with applying the NSF to both the online and anchor models and evaluate the model's subtyping performance on the CAMELYON-16 and BRACS datasets. The results, presented in \Cref{Tab:applyNSoveronline}, reveal a F1 score drop of over 6\% on the BRACS dataset. We attribute this degradation to the inherent characteristics of the sigmoid function: when it saturates, its gradients diminish, leading to vanishing gradients in the attention mechanism and thereby impairing the learning process.

\begin{table}[]
\centering
{
\caption{Comparison between ASMIL trained with the standard softmax function in the online model (ASMIL w. Softmax) and with the mixed attention function defined in \Cref{Eq:MixSigma} (ASMIL w. Mixture). The more flexible trainable mapping does not yield improvements over the simpler softmax baseline.}
\label{Tab:MixNSFSMX}
\resizebox{\textwidth}{!}{\begin{tabular}{c|cc|cc|cc}
\toprule
\rowcolor{lightgray} Dataset & \multicolumn{2}{c|}{CAMELYON-16}                                      & \multicolumn{2}{c|}{CAMELYON-17}                                       & \multicolumn{2}{c}{BRACS}                                             \\ \midrule
Mertic                                        & F1 score                          & AUC                               & F1 score                          & AUC                                & F1 score                          & AUC                               \\ \hline
ASMIL W. SoftMax                              & $0.965\scriptscriptstyle\pm0.020$ & $0.985\scriptscriptstyle\pm0.017$ & $0.689\scriptscriptstyle\pm0.005$ & $0.898\scriptscriptstyle\pm0.010$  & $0.781\scriptscriptstyle\pm0.042$ & $0.914\scriptscriptstyle\pm0.014$ \\
ASMIL W. Mixture                              & $0.953\scriptscriptstyle\pm0.023$ & $0.972\scriptscriptstyle\pm0.030$ & $0.686\scriptscriptstyle\pm0.012$ & $0.889\scriptscriptstyle\pm0.009 $ & $0.774\scriptscriptstyle\pm0.054$ & $0.910\scriptscriptstyle\pm0.067$ \\ \hline
$\zeta$  in \Cref{Eq:MixSigma}   & \multicolumn{2}{c|}{0.9952} & \multicolumn{2}{c|}{0.9894} & \multicolumn{2}{c}{0.9963} \\ \bottomrule
\end{tabular}}}
\end{table}

{
To further investigate the potential of applying NSF in the online model, we consider the following mixed attention variant:
\begin{align}
    \alpha_i'(z) = \zeta\, \alpha_i^{\text{SMX}}(z) + (1-\zeta)\, \alpha_i^{\text{NSF}}(z), \label{Eq:MixSigma}
\end{align}
where $\zeta = \sigma(\xi)$ and $\xi$ is a trainable scalar that balances the contributions of the softmax and NSF mappings, initialized with $\xi = 0$. We evaluate this variant on CAMELYON-16, CAMELYON-17, and BRACS, and report the results in \Cref{Tab:MixNSFSMX}. The mixed mapping does not outperform the default softmax, and the learned $\zeta$ consistently converges to values close to one, indicating that the online model prefers softmax, which does not suffer from gradient-vanishing issues.
}

\section{Alternative Stabilization Methods and Why the Anchor is Preferable}
\label{app:stability_alternatives}

Let \(\boldsymbol{\alpha}_t(x)\in\Delta^{N}\) denote the attention distribution for slide \(x\) at epoch \(t\), obtained from scores \(\boldsymbol{z}_t(x)\in\mathbb{R}^{N}\).
We diagnose instability by the Jensen-Shannon divergence
\begin{align}
\mathrm{JSD}_t(x)\;=\;\mathrm{JSD}\!\bigl(\boldsymbol{\alpha}_t(x)\|\boldsymbol{\alpha}_{t-1}(x)\bigr),
\end{align}
which we empirically find remains high when training attention-based MIL with only bag-level labels. We present a natural alternative that targets this instability and explain why the anchor model is preferred.

\subsection{Alternative: Per-slide temporal ensembling of attention}
\label{app:temporal_ema}
Maintain a per slide exponential moving average (EMA) of past attentions and penalize deviation from it:
\begin{align}
\tilde{\boldsymbol{\alpha}}_t(x)\;=\;\rho\,\tilde{\boldsymbol{\alpha}}_{t-1}(x)\;+\;(1-\rho)\,\boldsymbol{\alpha}_t(x), ~\rho\in(0,1);
\qquad
\mathcal{L}_{\text{AS}}(x)\;=\mathsf{KL}\!\Bigl(\boldsymbol{\alpha}_t(x)\|\texttt{sg}\bigl(\tilde{\boldsymbol{\alpha}}_t(x)\bigr)\Bigr).
\end{align}
% \textit{Why it can help.} 
The EMA target changes slowly when \(\rho\) is close to one, which directly shrinks epoch-to-epoch drift of \(\boldsymbol{\alpha}_t\) and reduces \(\mathsf{JSD}(\boldsymbol{\alpha}_t\| \boldsymbol{\alpha}_{t-1})\).
\textit{However,} 

($i$) It has to maintain a length-\(N\) vector per slide. For \(S\) slides and average \(\bar{N}\) tiles, memory is \(O(S\bar{N})\) floats, which can be substantial for gigapixel WSIs and prevent scaling to larger datasets.
($ii$) The EMA target still uses softmax normalization, which cannot achieve selective flattening across informative tokens; see Theorem~\ref{thm:nsf_vs_smx}.

\subsection{Why ASMIL’s anchor is preferable}
\label{app:why_anchor_better}

We highlight two main reasons for using an anchor model to stabilize the attention distribution rather than relying on temporal ensembling.

\textbf{NSF provides selective flattening that softmax cannot match.}

Replacing softmax with the normalized sigmoid function (NSF) in the anchor yields \(\boldsymbol{\alpha}^{\mathrm{nsf}}(x)\), which equalizes probabilities among truly high-score tiles while suppressing low-score ones. By Theorem~\ref{thm:nsf_vs_smx}, no single softmax temperature can realize both behaviors across a broad class of score vectors. Consequently, methods that retain softmax-based targets inherit these limitations.

\textbf{Memory and implementation simplicity.}

The anchor-based approach adds only one extra forward pass and maintains an exponential moving average (EMA) of the anchor parameters during training. It does not require storing per-slide attention distributions, making the approach scalable to large WSI datasets.

Thus, an anchor model is preferable for scalable training on large MIL datasets and for preventing attention over-concentration.

\section{Why Matching the Teacher (Anchor) model's Softmax Feature Vector Cannot Stabilize the Attention distribution}
\label{appendix:MHIMNotGood}

\begin{table}[ht]
\centering
\caption{Ratio of affinely dependent feature bags in the CAMELYON-16, CAMELYON-17, and BRACS datasets; most bags are affinely dependent.}
\label{Tab:VerAffineDep}
\begin{tabular}{c|ccc}
\toprule
\rowcolor{lightgray} Dataset                                                                 & CAMELYON-16 & CAMELYON-17 & BRACS   \\ \hline
\begin{tabular}[c]{@{}c@{}}The ratio of affine \\ dependent feature bags\end{tabular} & 99.24\%     & 99.80\%     & 96.08\% \\ \hline
\end{tabular}
\end{table}

In this section, we show why matching the softmax of the bag-level feature is a suboptimal strategy for stabilizing attention distributions. To this end, we prove that recovering the attention vector \(\boldsymbol{\alpha}\) by matching
\(\operatorname{softmax}(\alpha^T X)\) is, in general, ill-posed: the map \(f:\Delta^{K}\to\Delta^{d}\), defined by \(f(\alpha)=\operatorname{softmax}(\alpha^T X)\)
with \(X\in\mathbb{R}^{K\times d}\), fails to be injective when the feature matrix \(X\) is affinely dependent.

\begin{proof}
Assume the rows \(x_1,\dots,x_K\in\mathbb{R}^d\) of \(X\) are affinely dependent. By definition there exists a nonzero vector \(\psi\in\mathbb{R}^K\) such that
\[
\sum_{i=1}^K \psi_i = 0 \quad\text{and}\quad \sum_{i=1}^K \psi_i x_i = 0.
\]
Let \(\alpha\in\Delta^K\) be any probability vector and choose \(\epsilon>0\) small enough that \(\alpha'=\alpha+\epsilon\psi\) satisfies \(\alpha'_i\ge0\) for every \(i\). Note \(\sum_i \alpha'_i = \sum_i \alpha_i + \epsilon\sum_i\psi_i = 1\), so \(\alpha'\in\Delta^K\). Since \(\sum_{i=1}^K \psi_i x_i=0\) we have
\[
(\alpha')^T X = \alpha^T X + \epsilon \psi^T X = \alpha^T X.
\]
Therefore
\[
f(\alpha')=\operatorname{softmax}((\alpha')^T X)=\operatorname{softmax}(\alpha^T X)=f(\alpha).
\]
Because \(\psi\neq 0\) and \(\epsilon\neq0\) we have \(\alpha'\neq\alpha\), hence \(f\) is not injective.
\end{proof}

Thus, matching the softmax of the bag feature cannot reliably recover or stabilize the attention distributions when the feature bag is affinely dependent. \Cref{Tab:VerAffineDep} confirms that most feature bags extracted by VIT-S \citep{kang2022benchmarking} from WSI datasets are indeed affinely dependent.

\section{Applying ASMIL to Features Extracted by a WSI Foundation Model}
\label{Appendix:BetterFeatureExtractor}
In recent years, foundation models have enabled strong open-source feature extractors that markedly improve the performance of computational-pathology systems. To assess the generalizability of our approach, we apply ASMIL to features produced by two such extractors, UNI \cite{chen2024uni} and PATHGEN-clip \cite{sun2025pathgenm}, for the subtyping task on the CAMELYON-16 and CAMELYON-17 datasets. As reported in \Cref{Tab:FMresults}, ASMIL consistently outperforms all baseline methods when used with features extracted by foundation models, yielding the best F1 and AUC.

\begin{table}[]
\centering
\caption{The F1 score and AUC of different MIL approaches on two WSI subtyping datasets.}
% We use \textbf{bold} number and asterisk (*) to denote the best and second best results, respectively.}
\label{Tab:FMresults}
\begin{tabular}{ccccc}
\toprule
\rowcolor{lightgray} \multicolumn{5}{c}{PathGen-Clip-VIT-L} \\ \hline
 \multicolumn{1}{c|}{Dataset}  & \multicolumn{2}{c|}{CAMELYON-16}               & \multicolumn{2}{c}{CAMELYON-17} \\ \hline
\multicolumn{1}{c|}{Method}   & F1 score  $\uparrow$      & \multicolumn{1}{c|}{AUC $\uparrow$}         & F1 score $\uparrow$          & AUC $\uparrow$           \\ \hline
\multicolumn{1}{c|}{Clam-SB}  & $0.941\scriptscriptstyle\pm0.014$ & \multicolumn{1}{c|}{$0.960\scriptscriptstyle\pm0.015$} & $0.622\scriptscriptstyle\pm0.031$    & $0.899\scriptscriptstyle\pm0.012$    \\
\multicolumn{1}{c|}{TransMIL} & $0.951\scriptscriptstyle\pm0.024$ & \multicolumn{1}{c|}{$0.968\scriptscriptstyle\pm0.028$} & $0.656\scriptscriptstyle\pm0.021$    & $0.892\scriptscriptstyle\pm0.014$    \\
\multicolumn{1}{c|}{DSMIL}    & $0.895\scriptscriptstyle\pm0.038$ & \multicolumn{1}{c|}{$0.949\scriptscriptstyle\pm0.017$} & $0.582\scriptscriptstyle\pm0.062$    & $0.887\scriptscriptstyle\pm0.013$    \\
\multicolumn{1}{c|}{IBMIL}    & $0.935\scriptscriptstyle\pm0.014$ & \multicolumn{1}{c|}{$0.953\scriptscriptstyle\pm0.009$} & $0.629\scriptscriptstyle\pm0.027$    & $0.884\scriptscriptstyle\pm0.016$    \\
\multicolumn{1}{c|}{MHIM-MIL} & $0.946\scriptscriptstyle\pm0.33$  & \multicolumn{1}{c|}{$0.984\scriptscriptstyle\pm0.016$} & $0.594\scriptscriptstyle\pm0.090$    & $0.912\scriptscriptstyle\pm0.009$    \\
\multicolumn{1}{c|}{ABMIL}    & $0.953\scriptscriptstyle\pm0.018$ & \multicolumn{1}{c|}{$0.972\scriptscriptstyle\pm0.010$} & $0.610\scriptscriptstyle\pm0.025$    & $0.864\scriptscriptstyle\pm0.017$    \\
\multicolumn{1}{c|}{AEM}      & $0.967\scriptscriptstyle\pm0.025$ & \multicolumn{1}{c|}{$0.988\scriptscriptstyle\pm0.013$} & $0.688\scriptscriptstyle\pm0.016$    & $0.905\scriptscriptstyle\pm0.005$    \\
\multicolumn{1}{c|}{ASMIL}    & $0.974\scriptscriptstyle\pm0.021$ & \multicolumn{1}{c|}{$0.990\scriptscriptstyle\pm0.014$} & $0.699\scriptscriptstyle\pm0.020$    & $0.929\scriptscriptstyle\pm0.016$    \\ \toprule
\rowcolor{lightgray} \multicolumn{5}{c}{UNI-VIT-L} \\ \hline
\multicolumn{1}{c|}{Method}   & F1 score $\uparrow$       & \multicolumn{1}{c|}{AUC $\uparrow$}         & F1 score $\uparrow$          & AUC $\uparrow$           \\ \hline
% \multicolumn{1}{c|}{Clam-SB}  &             & \multicolumn{1}{c|}{}            &                &                \\
% \multicolumn{1}{c|}{TransMIL} &             & \multicolumn{1}{c|}{}            &                &                \\
% \multicolumn{1}{c|}{DSMIL}    &             & \multicolumn{1}{c|}{}            &                &                \\
% \multicolumn{1}{c|}{IBMIL}    &             & \multicolumn{1}{c|}{}            &                &                \\
% \multicolumn{1}{c|}{MHIM-MIL} &             & \multicolumn{1}{c|}{}            &                &                \\
\multicolumn{1}{c|}{ABMIL}    & $0.968\scriptscriptstyle\pm0.011$ & \multicolumn{1}{c|}{$0.996\scriptscriptstyle\pm0.003$} & $0.605\scriptscriptstyle\pm0.047$    & $0.885\scriptscriptstyle\pm0.015$    \\
\multicolumn{1}{c|}{AEM}      & $0.975\scriptscriptstyle\pm0.003$ & \multicolumn{1}{c|}{$0.998\scriptscriptstyle\pm0.003$} & $0.633\scriptscriptstyle\pm0.024$    & $0.863\scriptscriptstyle\pm0.017$    \\
\multicolumn{1}{c|}{ASMIL}    & $0.980\scriptscriptstyle\pm0.004$ & \multicolumn{1}{c|}{$0.998\scriptscriptstyle\pm0.002$} &$ 0.672\scriptscriptstyle\pm0.035$    & $0.866\scriptscriptstyle\pm0.014$    \\ \hline
\end{tabular}
\end{table}

% \section{\textbf{Survival }Prediction}
% \label{appendix:Survival}

\section{Ablation study}
\label{Appendix:ablation}

\subsection{Ablation of the Coefficient \texorpdfstring{$\beta$}{beta}}
\label{Appendix:AblationBeta}
The coefficient $\beta>0$ in \Cref{eq:totalloss} balances the stabilization and classification objectives. To assess its impact on final performance, we sweep $\beta \in \{0, 0.1, 0.25, 0.5, 0.75, 1.0, 1.5, 2.0, 2.5, 4, 5\}$ on the CAMELYON-16 and BRACS datasets. Except for $\beta$, all experimental settings are identical to those in \Cref{Sec:SubtypingPerformance}. We report F1 score and AUC in \Cref{fig:AblationBeta,fig:AblationBeta_brasc}; results are averaged over five random seeds.
\begin{figure}[ht]
\centering
\includegraphics[]{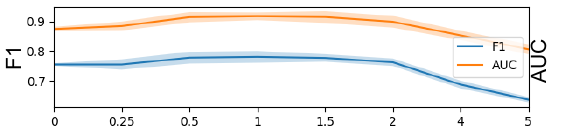}
% \vspace{-0.5cm}
\caption{Ablation study on the coefficient $\beta$, on CAMELYON-16 dataset. }
\label{fig:AblationBeta}
\end{figure}
\begin{figure}[ht]
\centering
\includegraphics[]{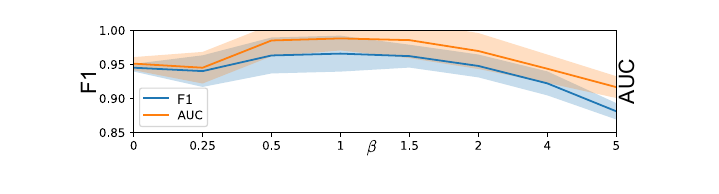}
% \vspace{-0.5cm}
\caption{Ablation study on the coefficient $\beta$, on BRACS dataset. }
\label{fig:AblationBeta_brasc}
\end{figure}
Overall, model performance is relatively insensitive to the choice of $\beta$: both F1 score and AUC plateau for $\beta \in [0.5, 1.5]$. Accordingly, we set $\beta=1$ as the default for all experiments.

\subsection{Ablation study on Number of trainable FEAT tokens}

\begin{table}[]
\centering
\caption{Ablation results on the number of tokens on different WSI datasets.}
\label{Tab:AblationOnNTokens}
\begin{tabular}{ccccc}
\toprule
\rowcolor{lightgray} \multicolumn{5}{c}{CAMELYON-16} \\ \hline
\multicolumn{1}{c|}{\# FEAT tokens} & 2      & 4      & 8      & 16     \\ \hline
\multicolumn{1}{c|}{F1 score $\uparrow$} & $0.930\scriptscriptstyle\pm0.012$ & $0.946\scriptscriptstyle\pm0.009$ & $0.965\scriptscriptstyle\pm0.012$ & $0.960\scriptscriptstyle\pm0.006$ \\
\multicolumn{1}{c|}{AUC $\uparrow$} & $0.932\scriptscriptstyle\pm0.017$ & $0.973\scriptscriptstyle\pm0.011$ & $0.985\scriptscriptstyle\pm0.013$ & $0.981\scriptscriptstyle\pm0.009$ \\ \hline
\rowcolor{lightgray} \multicolumn{5}{c}{CAMELYON-17}                                        \\ \toprule
\multicolumn{1}{c|}{F1 score $\uparrow$} & $0.556\scriptscriptstyle\pm0.012$ & $0.610\scriptscriptstyle\pm0.009$ & $0.674\scriptscriptstyle\pm0.016$ & $0.689\scriptscriptstyle\pm0.005$ \\
\multicolumn{1}{c|}{AUC $\uparrow$} & $0.784\scriptscriptstyle\pm0.019$ & $0.833\scriptscriptstyle\pm0.011$ & $0.879\scriptscriptstyle\pm0.024$ & $0.898\scriptscriptstyle\pm0.010$ \\ \toprule
\rowcolor{lightgray} \multicolumn{5}{c}{BRACS} \\ \hline
\multicolumn{1}{c|}{F1 score $\uparrow$} & $0.721\scriptscriptstyle\pm0.009$ & $0.766\scriptscriptstyle\pm0.012$ & $0.781\scriptscriptstyle\pm0.004$ & $0.782\scriptscriptstyle\pm0.004$ \\
\multicolumn{1}{c|}{AUC $\uparrow$} & $0.871\scriptscriptstyle\pm0.004$ & $0.903\scriptscriptstyle\pm0.014$ & $0.914\scriptscriptstyle\pm0.004$ & $0.912\scriptscriptstyle\pm0.026$ \\ \hline
\end{tabular}
\end{table}

In this section, we investigate how varying the number of trainable tokens influences model performance. To this end, we sweep a number of trainable tokens in the range of $[2,4,8,16]$, and report the corresponding accuracy on  CAMELYON-16, CAMELYON-17, and BRACS in \Cref{Tab:AblationOnNTokens}
In the experiment, we apply 8 trainable tokens for CAMELYON-16 and BRACS, and 16 trainable tokens on the CAMELYON-17 dataset.

\subsection{Ablation on anchor model update}

\subsubsection{Effect of Anchor Model Update Frequency}
\label{Sec:AnchorUpdateFreq}
\begin{table}[!ht]
\centering
\caption{Ablation study on anchor model update frequency, where batch-wise updates consistently outperform epoch-wise updates in both F1 score and AUC on BRACS and CAMELYON-16.} \label{tab:anchor_update}
\begin{tabular}{c|cc}
\toprule
\rowcolor{lightgray} Dataset & \multicolumn{2}{c}{BRACS}       \\ \hline
Update  & F1 score $\uparrow$          & AUC $\uparrow$           \\
Epoch   & $0.742\scriptscriptstyle\pm 0.015$    & $0.871\scriptscriptstyle\pm 0.003$    \\
Batch   & $0.781\scriptscriptstyle\pm 0.042$    & $0.914\scriptscriptstyle\pm 0.014$    \\ \toprule
\rowcolor{lightgray} Dataset & \multicolumn{2}{c}{CAMELYON-16} \\ \hline
Update  & F1 score $\uparrow$          & AUC $\uparrow$           \\
Epoch   & $0.920\scriptscriptstyle\pm 0.020$    & $0.936\scriptscriptstyle\pm 0.021$    \\
Batch   & $0.965\scriptscriptstyle\pm 0.020$    & $0.984\scriptscriptstyle\pm 0.017$    \\ \hline
\end{tabular}
\end{table}

To assess the impact of anchor update frequency, we compare epoch-wise and batch-wise update strategies on BRACS and CAMELYON-16 (\Cref{tab:anchor_update}). The results show that batch-wise updates consistently deliver superior performance. On BRACS, batch-wise updates improve the F1 score by 3.9\% and the AUC by 4.9\%. On CAMELYON-16, the improvement is even more substantial, with the F1 score increasing by 4.9\% and the AUC by 5.1\%. These gains confirm that frequent updates enable the anchor model to provide a stable and closely aligned attention reference for the online model, leading to better performance.

\subsection{Impact of the Random Drop Rate}
\label{Sec:RandomDropRate}
\begin{figure}[ht]
\centering
\includegraphics[width=\linewidth]{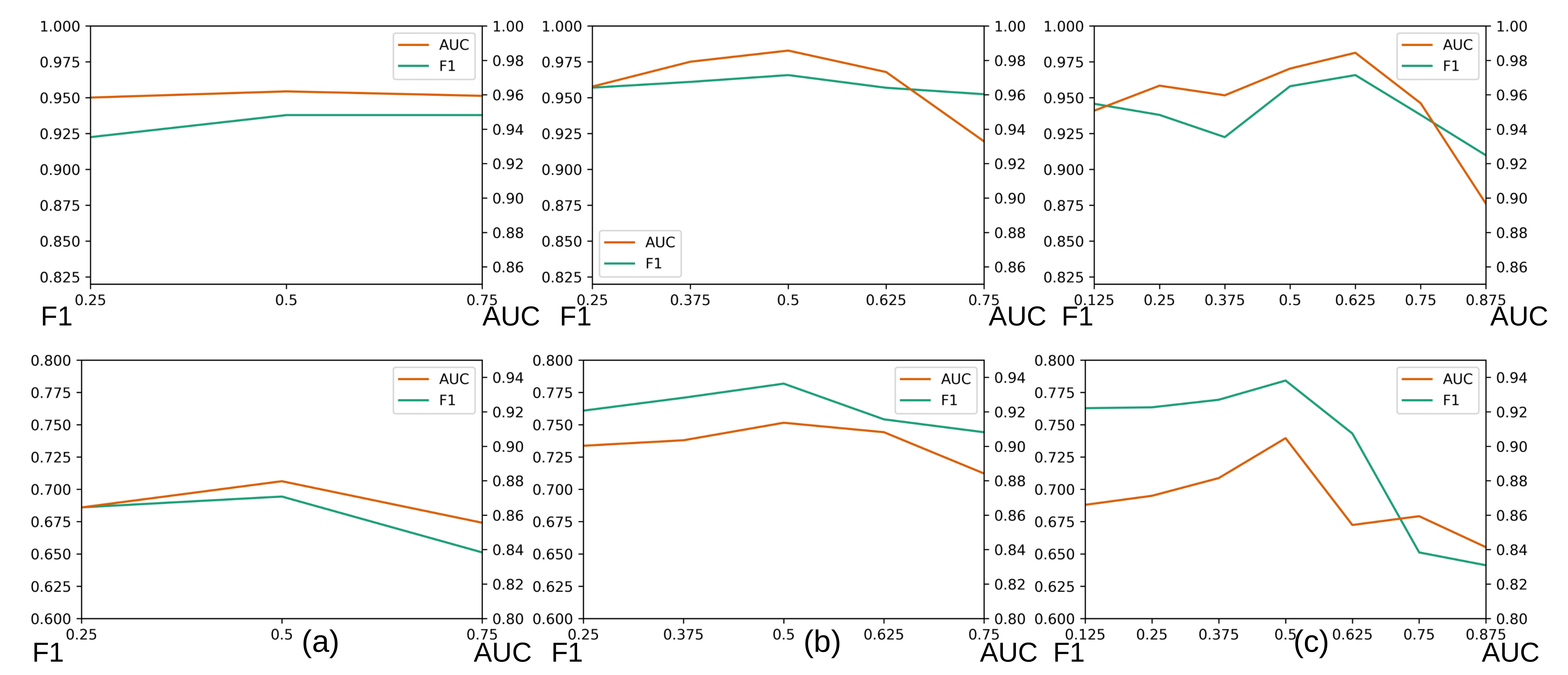}
\vspace{-0.7cm}
\caption{Ablation study of random drop probability ($B$) vs. model F1 score and AUC on CAMELYON-16 (top row) and BRACS (bottom row). Across both datasets and trainable-token settings (a) 4 tokens, (b) 8 tokens, and (c) 16 tokens, the test F1 score and AUC consistently peak around $B = 0.5$.}
\vspace{-0.2cm}
\label{fig:Ablation_drop_ratio}
\end{figure}

We evaluated the effect of random token dropping on model performance using CAMELYON-16 and BRACS, measuring both F1 and AUC across several trainable-token budgets. Results in \Cref{fig:Ablation_drop_ratio} show a consistent trend: performance rises from low $B$, peaks around $B=0.5$, then degrades for larger values. This pattern holds across datasets and capacities, indicating a stable trade-off between regularization and information loss.

Mechanistically, moderate token dropping (0.4–0.7) provides useful regularization, encouraging robustness to missing context and reducing overfitting to redundant or spurious tiles, while excessive dropping increases the chance of discarding diagnostically critical patches and thus harms recall and ranking. We therefore recommend tuning $B$ in the range of~ $0.4-0.6$. In \Cref{Appendix:MitigateOVefitting} we plot test F1 score and AUC across training epochs to demonstrate that random token dropping mitigates overfitting.

\begin{figure*}[ht]
  \centering 
  \includegraphics[width=0.48\linewidth]{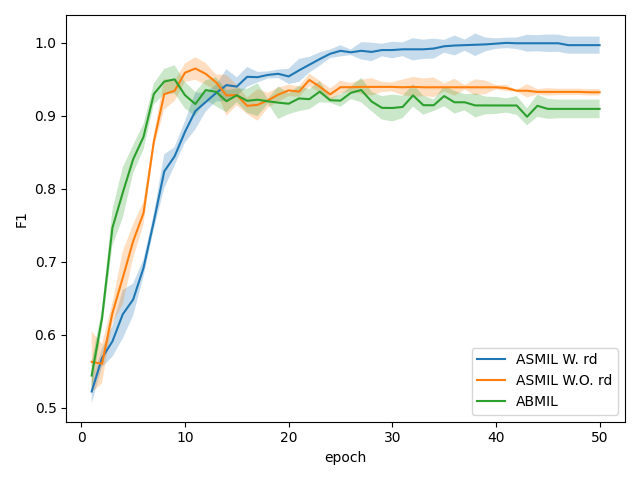}
 \includegraphics[width=0.48\linewidth]{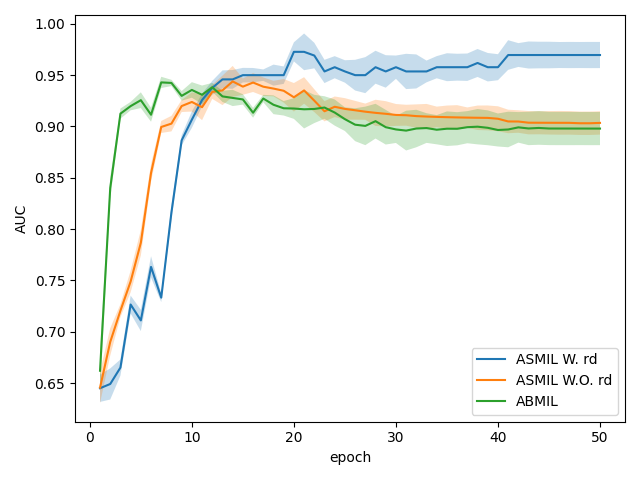}
 \vspace{-0.3cm}
  \caption{Performance comparison between ABMIL \citep{ilse2018attention}, ASMIL with random drop (ASMIL W. rd), and ASMIL without random drop (ASMIL w/o rd). Both ABMIL and ASMIL w/o rd show signs of overfitting, as their F1 score and AUC peak and then decline. In contrast, ASMIL with random drop maintains stable performance across training, demonstrating that random drop effectively mitigates overfitting.}
  \vspace{-0.5cm}
\label{fig:MitigateOVefitting}
\end{figure*}
\subsection{Random Drop Mitigates Overfitting}
\label{Appendix:MitigateOVefitting}
To verify that random drop is an efficient regularizer for attention-based MIL on WSIs, we trained three variants on \textsc{CAMELYON-16}: ($i$) ABMIL, ($ii$) ASMIL without random drop, and ($iii$) ASMIL with random drop (ours) with $B=0.5$. The figure reports validation F1 and AUC over training epochs. 

As shown in the \Cref{fig:MitigateOVefitting}, both ABMIL and ASMIL without random drop exhibit overfitting: F1 score and AUC rise early, peak, and then decline with continued training. In contrast, ASMIL with random drop maintains high and stable F1/AUC throughout later epochs, with noticeably reduced run-to-run variability (shaded regions). These trajectories empirically validate that random drop curbs the late-epoch degradation that accompanies weak supervision on CAMELYON-16. This observation aligns with our analysis that overfitting is a recurring failure mode for attention-based MIL on WSI datasets.

\begin{table}[t]
\centering
{
\caption{Statistical comparison of ASMIL with and without the anchor model.
We report the mean performance over $10$ random seeds along with p-values from
DeLong tests for AUC and permutation tests for F1.}
\label{tab:anchor_significance}
\begin{tabular}{l l c c c c}
\toprule
\rowcolor{lightgray} Dataset & Model & AUC  & F1  & $p_{\text{AUC}}$ & $p_{\text{F1}}$ \\
\midrule
\multirow{2}{*}{CAMELYON-16}
& w/o anchor & 0.942 & 0.979 & \multirow{2}{*}{0.013} & \multirow{2}{*}{0.024} \\
& w/ anchor  & 0.967 & 0.983 &                       &                     \\
\midrule
\multirow{2}{*}{CAMELYON-17}
& w/o anchor & 0.642 & 0.879 & \multirow{2}{*}{0.024} & \multirow{2}{*}{0.035}  \\
& w/ anchor   & 0.693 & 0.899 &                       &                     \\
\midrule
\multirow{2}{*}{BRACS}
& w/o anchor & 0.729 & 0.866 &  \multirow{2}{*}{0.012} & \multirow{2}{*}{0.009} \\
& w/ anchor  & 0.784 & 0.916 &                       &                     \\
\bottomrule
\end{tabular}}
\end{table}

{\subsection{Significance test on the effect of online model}
To assess whether the performance gains from the anchor model are statistically meaningful, we perform paired significance tests between ASMIL with and without the anchor over multiple 10 seeds. For AUC, we apply DeLong’s test, and for F1, we use a non-parametric permutation test. Across CAMELYON-16, CAMELYON-17, and BRACS, the anchor-augmented ASMIL consistently achieves higher AUC and F1 than its non-anchor counterpart, and these improvements are statistically significant ($p < 0.05$) for both metrics on each dataset (see Table~\ref{tab:anchor_significance}).
}

\section{Quantitative localization results and Additional Visualization}
\label{appendix:local_vis}

Predicted masks are generated as follows. For attention-based methods (CLAM \citep{lu2021data}, TransMIL \citep{shao2021transmil}, DTFD-MIL \citep{zhang2022dtfd}, DSMIL \citep{li2021dual} and CAMIL \cite{fourkioti2024camil}), we use the tile-level attention distribution. For ASMIL, the per-tile attention distribution is computed by averaging the attention distributions from all FEAT tokens to that tile. Unless otherwise noted, we rescale all per-tile scores to $[0,1]$ and threshold at $0.5$ to produce binary masks across all methods.

For tumor localization on CAMELYON-16, we follow the official challenge protocol and report the lesion-level Free-Response ROC (FROC) \citep{miller1969froc, bunch1978free, zhang2025widget2code}. Concretely, model outputs are converted to point detections; a detection is counted as a true positive if it lies within 75 µm of any ground-truth tumor region (implemented in the official script via a distance-transform “evaluation mask”), otherwise it is a false positive. We then sweep the detection score threshold to trace sensitivity versus the average number of false positives per normal WSI, and compute the standard CAMELYON-16 FROC score as the mean sensitivity at {0.25, 0.5, 1, 2, 4, 8} FP/WSI. 

Quantitative results for FROC, Dice, and specificity are reported in \Cref{tab:LocRes},  ASMIL achieves the best FROC and Dice on cancerous slides and higher specificity on normal slides, yielding fewer false positives and more contiguous lesion maps compared to baselines.

\begin{figure}[!ht]
\centering
\includegraphics[width=0.85\linewidth]{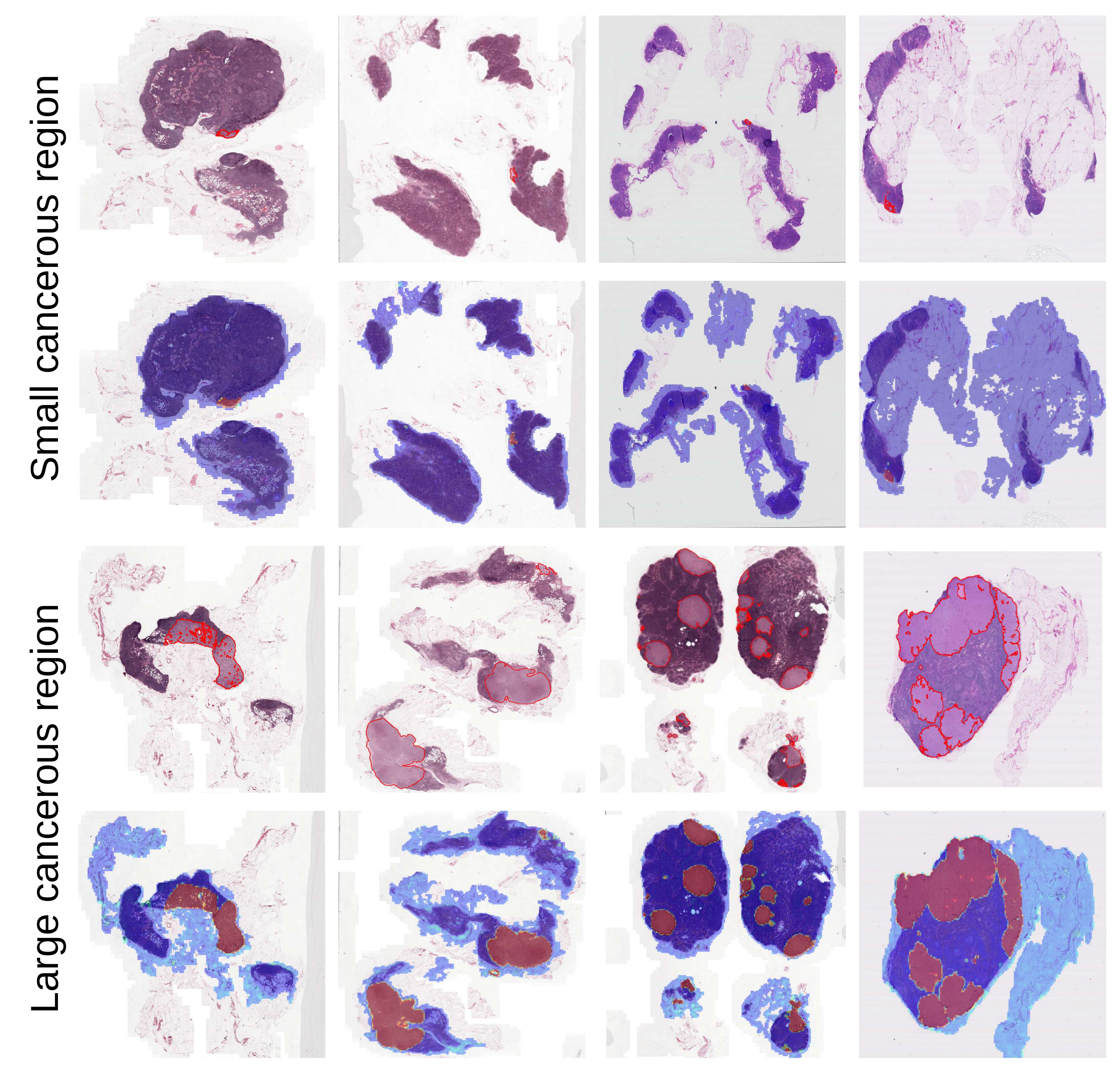}
\vspace{-0.5cm}
\caption{Additional qualitative examples of tumor regions and ASMIL attention maps. Rows 1 and 3 show the ground-truth tumor annotations (cancerous regions outlined in red), and rows 2 and 4 show the corresponding ASMIL attention maps.}
\label{fig:morelocal}
\end{figure}

\Cref{fig:morelocal} presents additional visualizations on the CAMELYON-16 dataset. It shows ASMIL attention maps for tumor slides containing both small and large cancerous regions; rows 1 and 3 provide the ground-truth annotations, and rows 2 and 4 show the corresponding attention maps.

% \begin{table}
% \centering
% \caption{Localization results on CAMELYON-16.} \label{tab:LocRes}
% \resizebox{0.4\textwidth}{!}{
% \begin{tabular}{cccc}
% \toprule
% \rowcolor{lightgray} Method   & Dice         & Specificity & FROC \\ \hline
% CLAM-SB  & $0.459\scriptscriptstyle\pm0.037$  & $0.987\scriptscriptstyle\pm0.008$ \\
% TransMIL & $0.103\scriptscriptstyle\pm0.004$  & $0.999\scriptscriptstyle\pm0.001$ \\
% DTFD-MIL & $0.525\scriptscriptstyle\pm0.033$ & $0.999\scriptscriptstyle\pm0.001$ \\
% DSMIL    & $0.259\scriptscriptstyle\pm0.083$  & $0.863\scriptscriptstyle\pm0.043$ \\
% CAMIL    & $0.515\scriptscriptstyle\pm0.083$  & $0.980\scriptscriptstyle\pm0.043$ \\
% ASMIL    & $0.586\scriptscriptstyle\pm0.045$  & $0.999\scriptscriptstyle\pm0.005$ \\ \hline
% \end{tabular}}
% \end{table}

\begin{table}
\centering
\caption{Localization results on CAMELYON-16.} \label{tab:LocRes}
\begin{tabular}{cccc}
\toprule
\rowcolor{lightgray} Method   & Dice $\uparrow$ & Specificity  $\uparrow$& FROC $\uparrow$ \\ \hline
CLAM-SB  & $0.459$  & $0.987$ &  $0.4257$ \\
TransMIL & $0.103$  & $0.999$ &  $0.0866$ \\
DTFD-MIL & $0.525$  & $0.999$ &  $0.4712$ \\
DSMIL    & $0.259$  & $0.863$ &  $0.4506$ \\
CAMIL    & $0.515$  & $0.980$ &  $0.4612$ \\
ASMIL    & $0.586$  & $0.999$ &  $0.4941$ \\ \hline
\end{tabular}
\end{table}

\section{Computational Cost}

This section reports the computational cost of ASMIL, as well as the additional cost incurred when integrating the anchor model into the baseline methods.

\subsection{Comparison of the computational cost between ASMIL and baseline methods}
\label{Appendix:CcostASMILVsOthers}
\begin{table}
\centering
\caption{Computational cost on BRACS (lower is better). We report training time and peak memory per epoch, and inference FLOPs, latency, and memory. ASMIL (ours) delivers efficient inference, cutting compute by $30.6\%$, latency by $29.2\%$, and memory by $20.3\%$ compared with TransMIL, while requiring $4\times$ less training memory than MHIM-MIL.}
\label{Tab:ASMILComputationalCost}
\begin{tabular}{c|ccccc}
\toprule
\rowcolor{lightgray} BRACS  & \multicolumn{5}{c}{Training}                   \\ \hline
Method & CLAM-SB & ABMIL & TransMIL & MHIM-MIL & ASMIL  \\ \hline
Time   & 2.26s   & 0.95s & 5.99s    & 19.4s    & 7.49s  \\
Memory & 94MB    & 90MB  & 340MB   & 2178MB   & 570MB \\ \toprule
\rowcolor{lightgray} BRACS  & \multicolumn{5}{c}{Inference}                  \\ \hline
FLOPs  & 162M    & 164M  & 781M     & 345M     & 542M   \\
Time   & 0.45s  & 0.37s & 0.74s   & 0.40s   & 0.52s \\
Memory & 69MB    & 39MB  & 246MB    & 61MB     & 196MB \\ \hline
\end{tabular}
\end{table}

\begin{table}[!ht]
\centering
\caption{Inference FLOPs, training time per epoch (Time), and memory usage (Memory) for four well-known methods, CLAM-SB, TransMIL, DSMIL, and ABMIL, with and without the anchor model. The anchor model incurs only minor computational overhead. FLOPs are measured using a fixed bag size of 2000 instances.}
\label{Tab:addtionaltimeAnchor}
\begin{tabular}{c|cc|ccc}
\toprule
\rowcolor{lightgray} BRACS & \multicolumn{2}{c|}{Training} & \multicolumn{3}{c}{Inference} \\ \hline
Method   & Time          & Memory        & FLOPs   & Time      & Memory   \\ \hline
\begin{tabular}[c]{@{}c@{}}CLAM-SB ~~~ w/o anchor\end{tabular}  & 2.26s      & 94MB       & 162M    & 0.45s   & 69MB     \\
\begin{tabular}[c]{@{}c@{}}CLAM-SB W. anchor\end{tabular}    & 2.69s      & 120MB      & 162M    & 0.45s   & 69MB     \\ \hline
\begin{tabular}[c]{@{}c@{}}TransMIL ~~~ w/o anchor\end{tabular} & 5.99s      & 340MB      & 781M    & 0.74s   & 246MB    \\
\begin{tabular}[c]{@{}c@{}}TransMIL W. anchor\end{tabular}   & 7.27s      & 443MB      & 781M    & 0.74s   & 246MB    \\ \hline
\begin{tabular}[c]{@{}c@{}}DSMIL ~~~  w/o anchor\end{tabular}    & 0.57s      & 60 MB       & 103M    & 1.09s   & 113 MB   \\
\begin{tabular}[c]{@{}c@{}}DSMIL W. anchor\end{tabular}      & 0.58s      & 145MB       & 103M    & 1.09s   & 113 MB   \\ \hline
\begin{tabular}[c]{@{}c@{}}ABMIL ~~~ w/o anchor\end{tabular}    & 0.95s      & 90MB       & 164M    & 0.37s   & 39MB     \\
\begin{tabular}[c]{@{}c@{}}ABMIL W. anchor\end{tabular}      & 1.17s      & 162MB      & 164M    & 0.37s   & 39MB     \\ \hline
\end{tabular}
\end{table}

We conducted a detailed evaluation of the computational overhead introduced by our proposed ASMIL framework, focusing on three primary metrics: floating-point operations (FLOPs), training time per epoch, and peak memory consumption. All experiments were executed under uniform hardware conditions, specifically a single NVIDIA RTX 5000 GPU coupled with an Intel Xeon W-2265 CPU and 64 GB of RAM, ensuring a fair comparison across methods.

During training, ASMIL demonstrates a competitive balance between efficiency and computational demand. On average, ASMIL requires 542M FLOPs per batch, which is lower than MHIM-MIL. The training time per epoch for ASMIL is 7.49s, substantially faster than MHIM-MIL (19.4s) and comparable to TransMIL (5.99s), while remaining higher than ABMIL and CLAM-SB. In terms of peak memory usage, ASMIL consumes 570 MB, markedly lower than MHIM-MIL (2178 MB). These results indicate that ASMIL maintains a favorable computational profile, offering a scalable alternative to more resource-intensive methods.

In inference, ASMIL continues to show strong efficiency. It requires 542M FLOPs, substantially fewer than TransMIL and comparable to MHIM-MIL. Inference time for ASMIL is 0.52s per epoch, slightly slower than CLAM-SB (0.45s) but faster than TransMIL. Peak memory usage during inference is 196 MB, markedly lower than TransMIL, highlighting ASMIL’s efficient memory footprint relative to its computational performance.
Overall, ASMIL delivers high-performance multiple-instance learning while keeping computational cost affordable.

\subsection{Additional Computational Cost Introduced by Anchor Model}
\label{Appendix:AdditionalComputationalCostIntroducedbyAnchorModel}

We conducted a detailed evaluation of the computational overhead introduced by integrating the anchor model into four widely used MIL methods, namely CLAM-SB, TransMIL, DSMIL, and ABMIL, all measured on the BRACS dataset. The results are summarized in \Cref{Tab:addtionaltimeAnchor}. 

Because no gradients are computed through the anchor model, and only the attention layer is updated during training, the computational overhead is small. As shown in \Cref{Tab:addtionaltimeAnchor}, integrating the anchor model into CLAM-SB, TransMIL, DSMIL, and ABMIL introduces only a modest increase in training time and memory usage, while the FLOPs remain unchanged. For example, training time for CLAM-SB increases from 2.26s to 2.69s and memory usage from 94 MB to 120 MB, with larger models like TransMIL showing slightly higher overhead. Importantly, during inference, the anchor model is discarded, resulting in identical FLOPs, execution time, and memory consumption compared to the baseline methods. These results demonstrate that the anchor model provides performance benefits during training with minimal computational cost and does not affect deployment efficiency, making it an effective and practical addition to existing MIL frameworks.

\begin{table}[]
\centering
{
\caption{C-index for WSI-based survival prediction using vision-only MIL models.}
\label{tab:SPR}
\begin{tabular}{cccccc}
\toprule
\rowcolor{lightgray}  Method&  BLCA         & BRCA         & GBMLGG       & LUAD         & UCEC \\ \midrule
ABMIL \textcolor{mygray}{\tiny ICML \citeyear{ilse2018attention}} & $0.5581\scriptscriptstyle\pm0.031$ & $0.5825\scriptscriptstyle\pm0.035$ & $0.7935\scriptscriptstyle\pm0.032$ & $0.6121\scriptscriptstyle\pm0.050$ & $0.6667\scriptscriptstyle\pm0.033$  \\
TransMIL \textcolor{mygray}{\tiny NeurIPS \citeyear{shao2021transmil}} & $0.5885\scriptscriptstyle\pm0.055$ & $0.6140\scriptscriptstyle\pm0.060$ & $0.7956\scriptscriptstyle\pm0.015$ & $0.5708\scriptscriptstyle\pm0.050$ & $0.6380\scriptscriptstyle\pm0.067$  \\
ILRA \textcolor{mygray}{\tiny ICLR \citeyear{xiang2023exploring}} & $0.5549\scriptscriptstyle\pm0.053$ & $0.5705\scriptscriptstyle\pm0.067$ & $0.7742\scriptscriptstyle\pm0.014$ & $0.5179\scriptscriptstyle\pm0.081$ & $0.6503\scriptscriptstyle\pm0.064$  \\
$\text{R}^2\text{T-MIL }$ \textcolor{mygray}{\tiny CVPR \citeyear{tang2024feature}} & $0.5775\scriptscriptstyle\pm0.024$ & $0.5476\scriptscriptstyle\pm0.095$ & $0.7757\scriptscriptstyle\pm0.024$ & $0.5711\scriptscriptstyle\pm0.076$ & $0.6510\scriptscriptstyle\pm0.087$  \\
DeepAttnMISL \textcolor{mygray}{\tiny MIA \citeyear{yao2020whole}}& $0.5646\scriptscriptstyle\pm0.035$ & $0.5346\scriptscriptstyle\pm0.036$ & $0.6750\scriptscriptstyle\pm0.048$ & $0.4678\scriptscriptstyle\pm0.039$ & $0.6259\scriptscriptstyle\pm0.086$  \\
Patch-GCN  \textcolor{mygray}{\tiny MICCAI \citeyear{chen2021whole}} & $0.6124\scriptscriptstyle\pm0.031$ & $0.6375\scriptscriptstyle\pm0.033$ & $0.7999\scriptscriptstyle\pm0.021$ & $0.5922\scriptscriptstyle\pm0.053$ & $0.7212\scriptscriptstyle\pm0.025$  \\
\texttt{ASMIL} (Ours)              & $0.6133\scriptscriptstyle\pm0.047$ & $0.6396\scriptscriptstyle\pm0.044$ & $0.8036\scriptscriptstyle\pm0.018$ & $0.6001\scriptscriptstyle\pm0.093$ & $0.7243\scriptscriptstyle\pm0.0488$ \\ \bottomrule
\end{tabular}}
\end{table}

\section{Survival Prediction}
\label{Appendix:SP}
{To assess whether ASMIL is also beneficial for prognosis, we extend ASMIL from slide-level classification to discrete-time overall survival prediction on histopathology WSIs. Following \citep{liu2025interpretable}, we apply an incidence-based discrete survival formulation, \textit{i.e.}, the survival times are mapped to $C$ non-overlapping time intervals, and the model outputs a discrete distribution over first-event times.}

{We follow the experimental setup of \citep{liu2025interpretable}, and evaluate on five TCGA datasets, namely BLCA, BRCA, LUAD, and UCEC. We use the concordance index (C-index) to evaluate the model's performance; specifically, it measures how often the model assigns a higher risk score to a patient who experiences the event earlier. Formally, with a little abuse of notations, let $t_i,\delta_i,\hat{R}_i$ denote the observed time, event indicator, and predicted risk for patient $i$, the C-index is defined as
\begin{align}
    \text{CI} = \frac{\sum_{i,j}\mathbf{1}[t_i<t_j]\mathbf{1}[\hat{R}_i>\hat{R}_j]\delta_i}{{\sum_{i,j}\mathbf{1}[t_i<t_j]\delta_i}},
\end{align}
where $\mathbf{1}[\cdot]$ is the indicator function. A value of $\text{CI}=0.5$ corresponds to a random ranking, and larger values indicate better risk discrimination \cite{yang2024markov, hamidi2024adversarialtrainingadaptiveknowledge, hamidi2025distributed}. Following \citet{liu2025interpretable}, we compare ASMIL against six vision-only WSI survival prediction methods, namely ABMIL \citep{ilse2018attention}, TransMIL \citep{shao2021transmil}, ILRA \cite{xiang2023exploring}, $\text{R}^2\text{T-MIL}$ \citep{tang2024feature}, DeepAttnMISL \citep{yao2020whole}, and Patch-GCN \citep{chen2021whole}, all implemented on top of the same CONCH-derived patch features \citep{Lu_2023_CVPR}.}

{\Cref{tab:SPR} reports the C-index on each TCGA dataset. ASMIL achieves the highest mean C-index among all vision-only baselines. These results indicate that stabilizing slide-level attention not only improves weakly supervised classification but also yields stronger prognostic discrimination in survival analysis.
}

\section{Evaluate ASMIL Over Non-WSI Dataset}
\label{Appendix:ASMILOtherDataset}
\begin{table}[h!]
\centering
\caption{MIL dataset statistics.}
\label{tab:mil-stats}
\begin{tabular}{l l r r r}
\toprule
\rowcolor{lightgray} Dataset & Domain & Bags (pos/neg) & Total instances & Dim./inst. \\
\midrule
MUSK1      & Drug activity               & 92  (47/45)   & 476   & 166 \\
MUSK2      & Drug activity               & 102 (39/63)   & 6598 & 166 \\
TIGER      & Images (Blobworld segments) & 200 (100/100) & 1220 & 230 \\
FOX        & Images (Blobworld segments) & 200 (100/100) & 1320 & 230 \\
ELEPHANT   & Images (Blobworld segments) & 200 (100/100) & 1391 & 230 \\
\bottomrule
\end{tabular}
\end{table}

To demonstrate ASMIL’s applicability beyond WSI, we evaluate it on five classic multiple-instance learning (MIL) benchmarks: \emph{MUSK1} \citet{musk_(version_1)_74} and \emph{MUSK2} \citet{musk_(version_2)_75}, where each bag is a molecule and instances are its low-energy 3D conformations described by 166 attributes (a bag is positive if at least one conformation is active); and the image MIL datasets \emph{TIGER}, \emph{FOX}, and \emph{ELEPHANT} \citet{NIPS2002_3e6260b8}, where each bag is a Corel image segmented into “Blobworld’’ regions (instances) with 230-D color/texture/shape features (a bag is positive if at least one segment contains the named animal). Standard size statistics are reported in \Cref{tab:mil-stats}.

\begin{table}[h!]
\centering
\caption{Results on the small MIL benchmark datasets.}
\label{Tab:MUCKresults}
\begin{tabular}{cccccc}
\toprule
\rowcolor{lightgray} Methods  & MUSK1 & MUSK2 & FOX & TIGER & ELEPHANT \\ \midrule
ABMIL \textcolor{mygray}{\tiny ICML \citeyear{ilse2018attention}}&$0.916\scriptscriptstyle\pm 0.118$&$0.928\scriptscriptstyle\pm 0.109$&  $0.952\scriptscriptstyle\pm 0.051$ & $0.953\scriptscriptstyle\pm 0.042$ &$0.969\scriptscriptstyle\pm 0.036$ \\
DSMIL \textcolor{mygray}{\tiny CVPR \citeyear{li2021dual}}& $0.959\scriptscriptstyle\pm 0.053$ & $0.952\scriptscriptstyle\pm 0.066$ & $0.939\scriptscriptstyle\pm 0.060$  & $0.951\scriptscriptstyle\pm 0.053$  & $\mathbf{0.989\scriptscriptstyle\pm 0.023}$ \\
TransMIL \textcolor{mygray}{\tiny NeurIPS \citeyear{shao2021transmil}} & $0.927\scriptscriptstyle\pm 0.093$ &  $0.877\scriptscriptstyle\pm 0.127$  & $0.944\scriptscriptstyle\pm 0.050$ & $0.963\scriptscriptstyle\pm 0.042$ & $0.979\scriptscriptstyle\pm 0.030$ \\
DEMIL \textcolor{mygray}{\tiny NeurIPS \citeyear{tang2023disambiguated}} & $0.963\scriptscriptstyle\pm 0.073$ & $0.961\scriptscriptstyle\pm 0.057$ & $0.941\scriptscriptstyle\pm 0.047$ & $0.965\scriptscriptstyle\pm 0.035$ & $0.969\scriptscriptstyle\pm 0.034$\\
RGMIL \textcolor{mygray}{\tiny Neurips\citeyear{du2023rgmil}}    & $0.968\scriptscriptstyle\pm 0.060$ & $0.963\scriptscriptstyle\pm 0.048$ & $0.954\scriptscriptstyle\pm 0.048$ & $0.949\scriptscriptstyle\pm 0.047$ & $0.965\scriptscriptstyle\pm 0.032$ \\
PSMIL  \textcolor{mygray}{\tiny ICLR\citeyear{du2025rethinking}}  & $0.968\scriptscriptstyle\pm 0.053$ & $\mathbf{0.968\scriptscriptstyle\pm 0.052}$ & $0.942\scriptscriptstyle\pm 0.054$ & $0.947\scriptscriptstyle\pm 0.047$ &$0.985\scriptscriptstyle\pm 0.030$ \\
\texttt{ASMIL} (Ours)   & $\mathbf{0.971\scriptscriptstyle\pm 0.060}$  & $0.968\scriptscriptstyle\pm 0.058$ & $\mathbf{0.961\scriptscriptstyle\pm 0.025}$ & $\mathbf{0.969\scriptscriptstyle\pm 0.037}$ & $0.985\scriptscriptstyle\pm 0.025$ \\ \hline
\end{tabular}
\end{table}

Since these datasets are relatively balanced, following \citet{du2025rethinking}, we report accuracy as the primary metric. We train for 40 epochs with the Adam optimizer \citep{kingma2014adam} and a learning rate of 0.0005.

We compare ASMIL against six MIL methods—ABMIL \citep{ilse2018attention}, DSMIL \citep{li2021dual}, TransMIL \citep{shao2021transmil}, DEMIL \citep{tang2023disambiguated}, RGMIL \citep{du2023rgmil}, and PSMIL \citep{du2025rethinking}—and report accuracies in \Cref{Tab:MUCKresults}. As shown, ASMIL outperforms all baselines on 4 of 5 datasets, demonstrating strong performance on non-WSI benchmarks.

\section{Attention Dynamics of different MIL methods on Various Datasets}
\label{Appendix:AttentionDynamics}
In this section, we illustrate that the issue of attention convergence on the WSI dataset is not unique to the ABMIL and CAMELYON-16 datasets. To this end, similar to the method we describe in \Cref{fig:ABMIL_fluctuation}, we plot the JSD of two attention distributions between two consecutive epochs. 
\clearpage
\subsection{CAMELYON-16 Dataset}
\begin{figure}[!ht]
\begin{center}
%\framebox[4.0in]{$\;$}
\includegraphics[width=\linewidth]{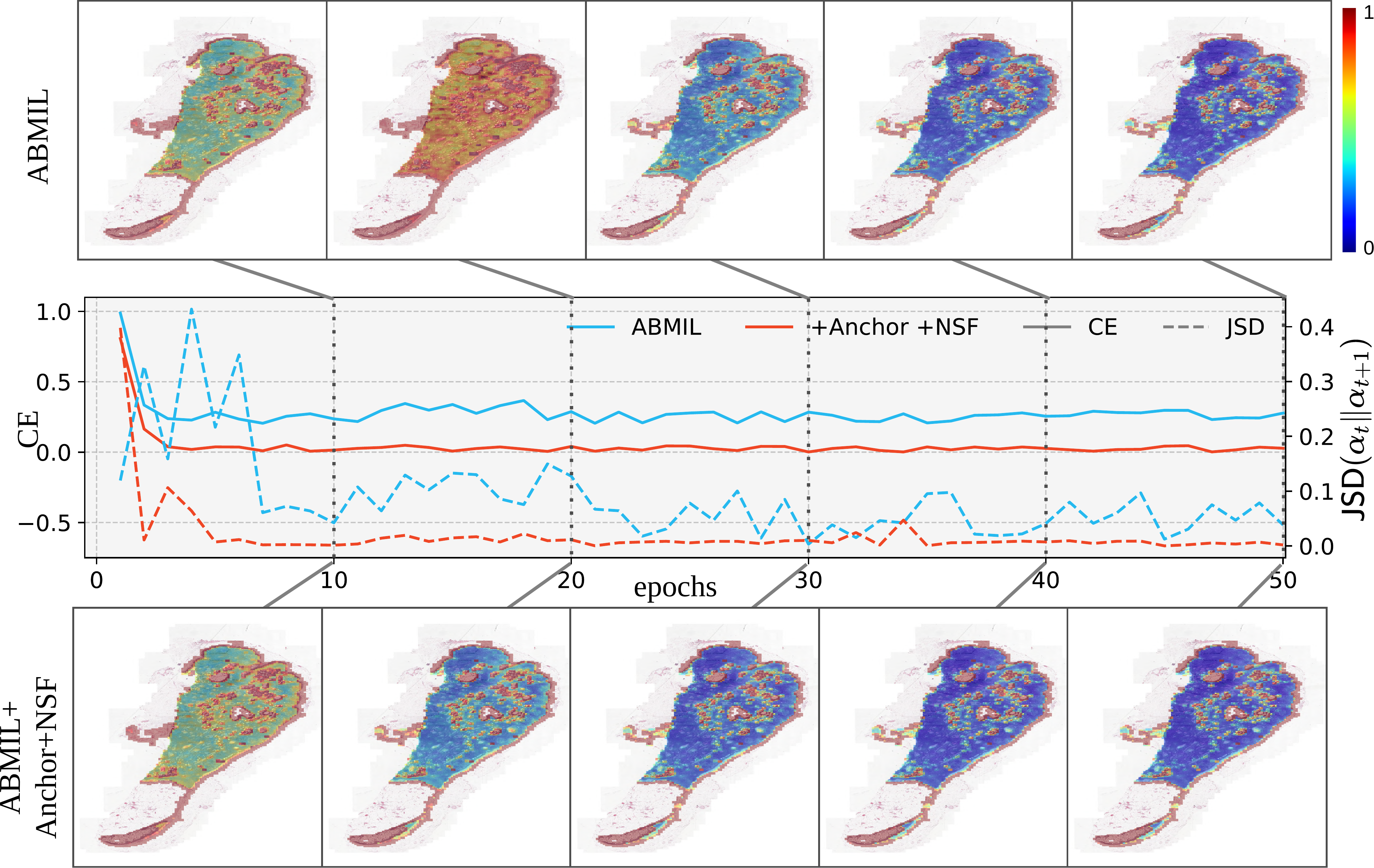}
\end{center}
\caption{Visualization of attention dynamics on a normal WSI for ABMIL vs. ABMIL + anchor + NSF.}
\end{figure}

\begin{figure}[!ht]
\begin{center}
%\framebox[4.0in]{$\;$}
\includegraphics[width=\linewidth]{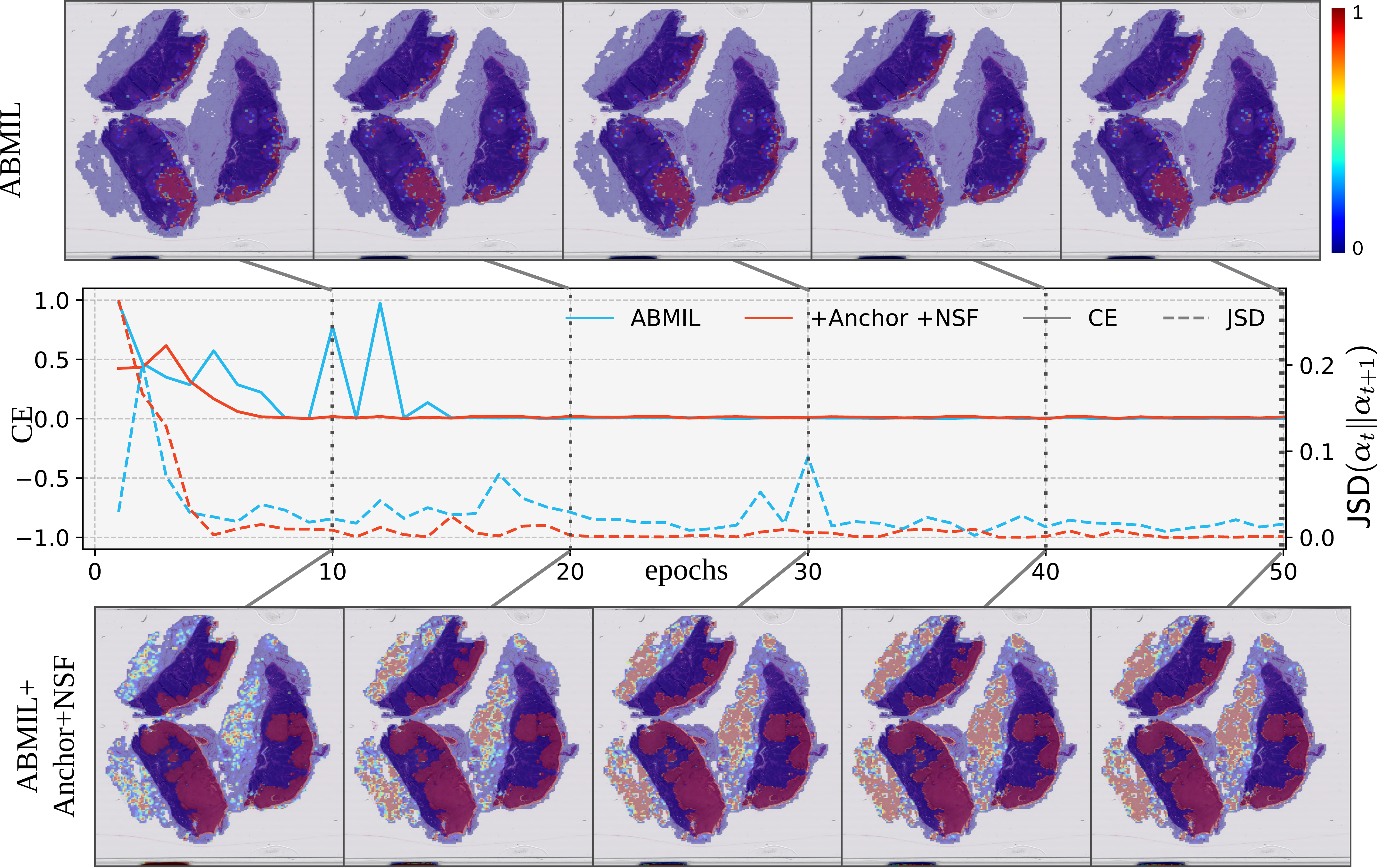}
\end{center}
\caption{Visualization of attention dynamics on a tumor WSI for ABMIL vs. ABMIL + anchor + NSF.}
\end{figure}

\begin{figure}[!ht]
\begin{center}
%\framebox[4.0in]{$\;$}
\includegraphics[width=\linewidth]{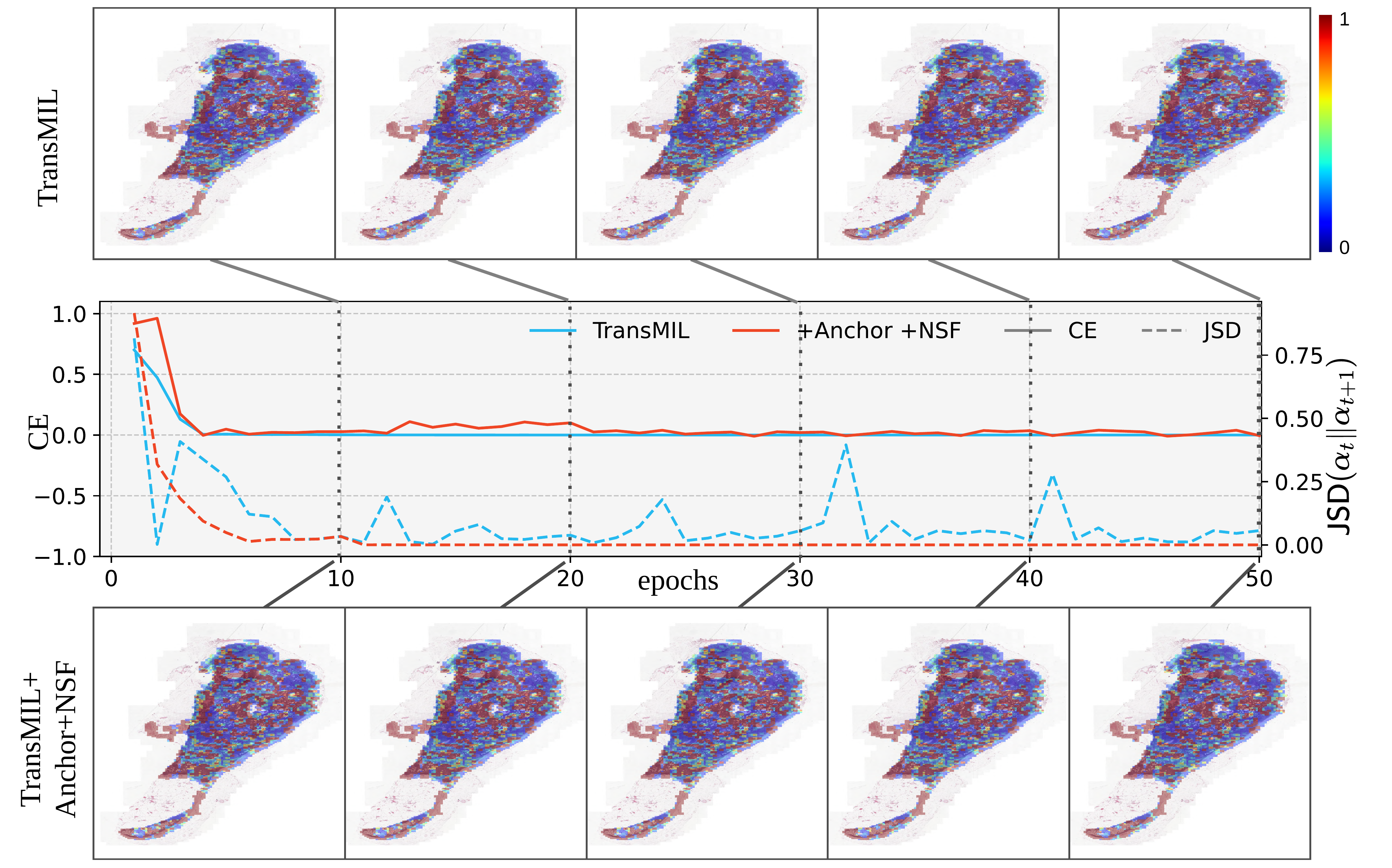}
\end{center}
\caption{ Visualization of attention dynamics on a normal WSI for TransMIL vs. TransMIL + anchor + NSF.}
\end{figure}

\begin{figure}[!ht]
\begin{center}
%\framebox[4.0in]{$\;$}
\includegraphics[width=\linewidth]{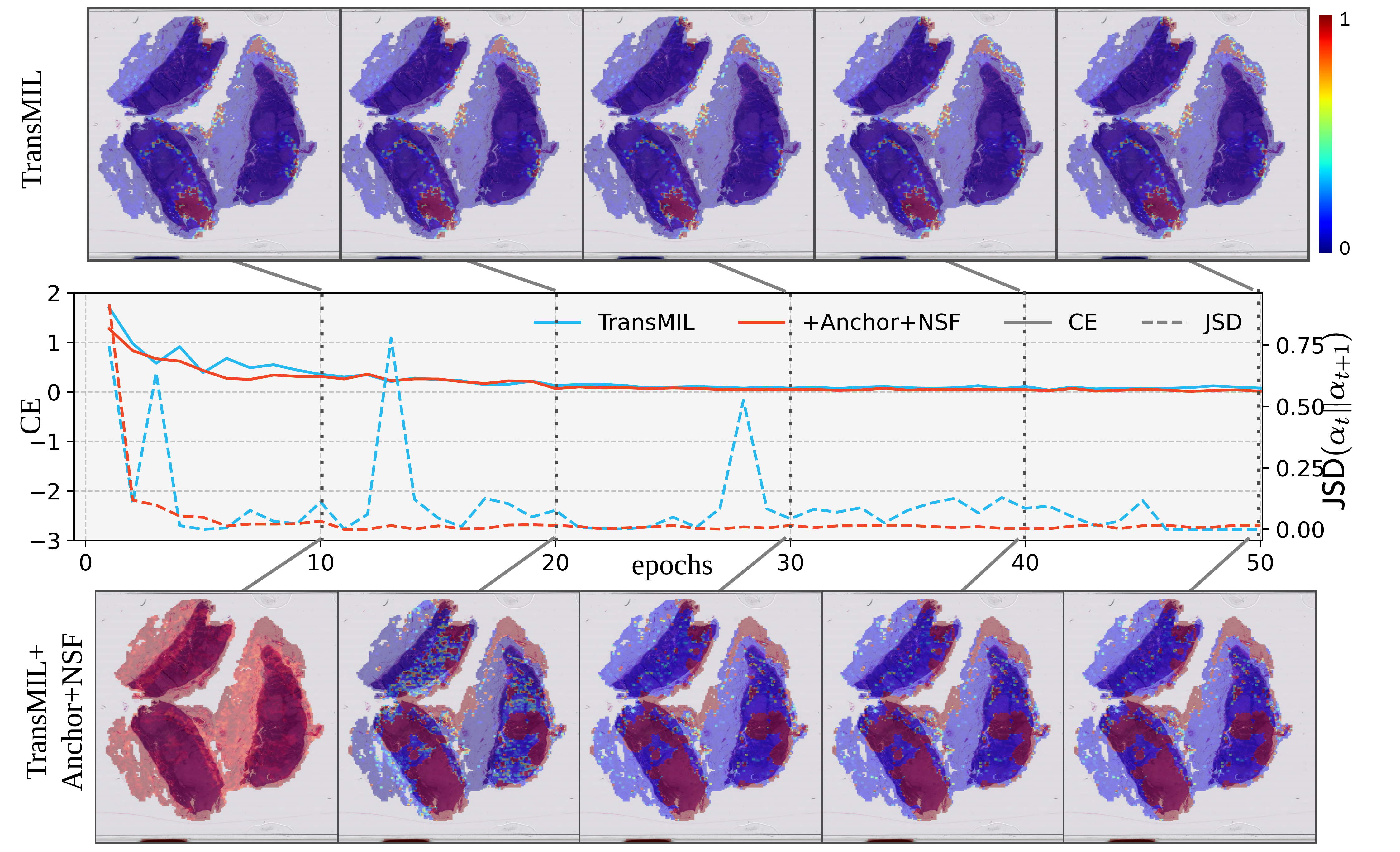}
\end{center}
\caption{ Visualization of attention dynamics on a tumor WSI for TransMIL vs. TransMIL + anchor + NSF.}
\end{figure}

\begin{figure}[!ht]
\begin{center}
%\framebox[4.0in]{$\;$}
\includegraphics[width=\linewidth]{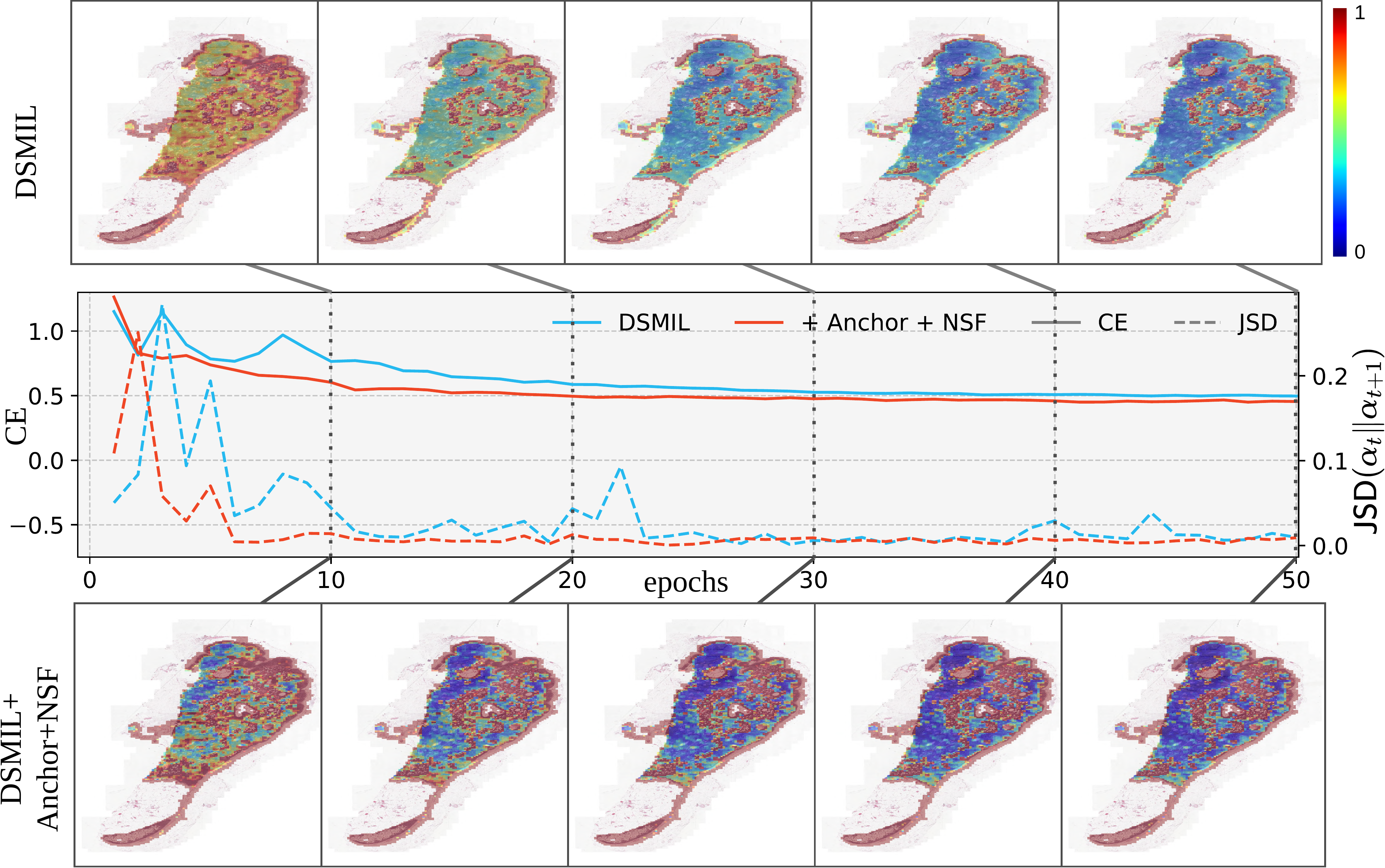}
\end{center}
\caption{ Visualization of attention dynamics on a normal WSI for DSMIL vs.  DSMIL + anchor + NSF.}
\end{figure}

\begin{figure}[!ht]
\begin{center}
%\framebox[4.0in]{$\;$}
\includegraphics[width=\linewidth]{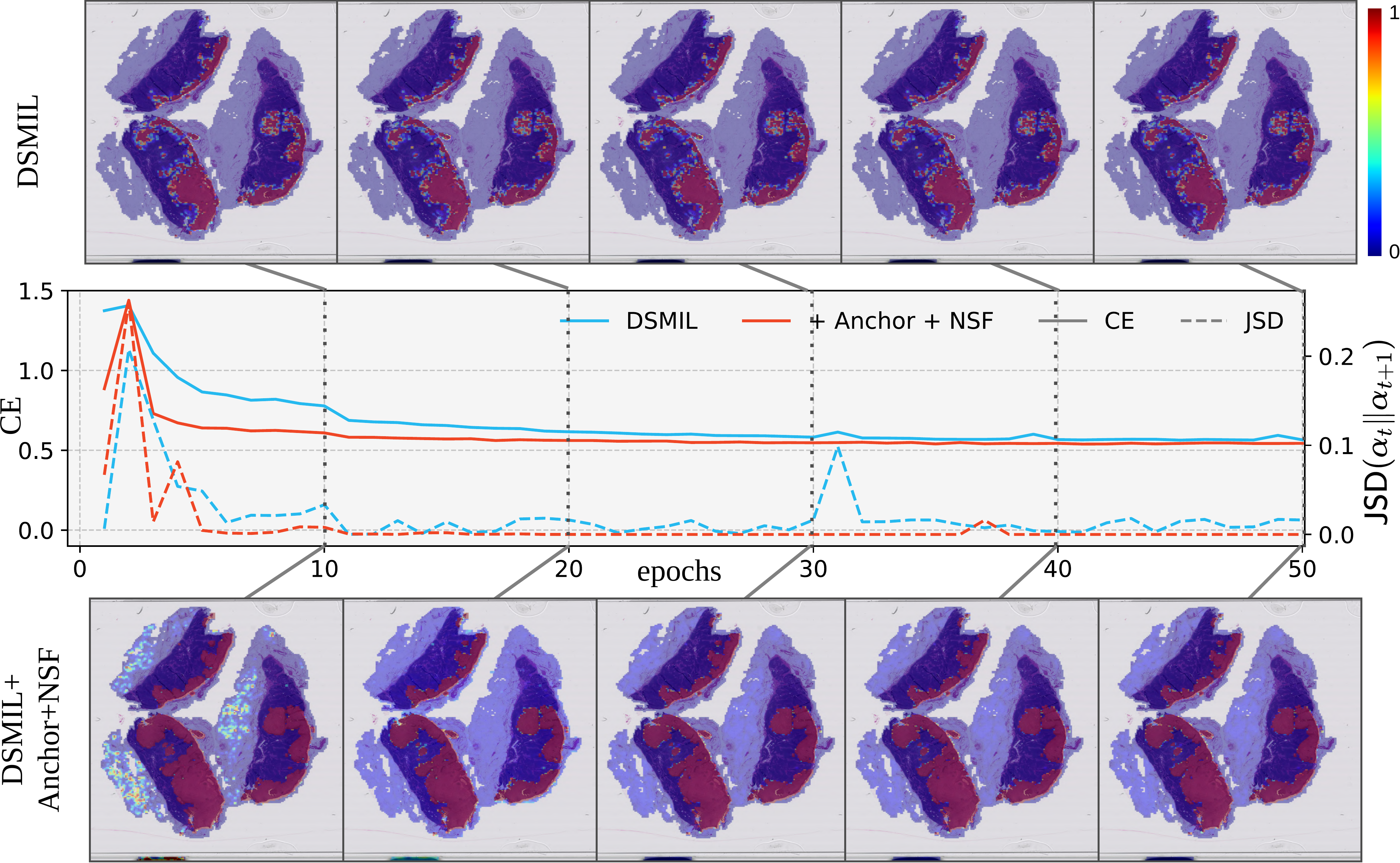}
\end{center}
\caption{ Visualization of attention dynamics on a tumor WSI for DSMIL vs. DSMIL + anchor + NSF.}
\end{figure}

\clearpage
\subsection{BRACS dataset}
\begin{figure}[!ht]
\begin{center}
%\framebox[4.0in]{$\;$}
\includegraphics[width=\linewidth]{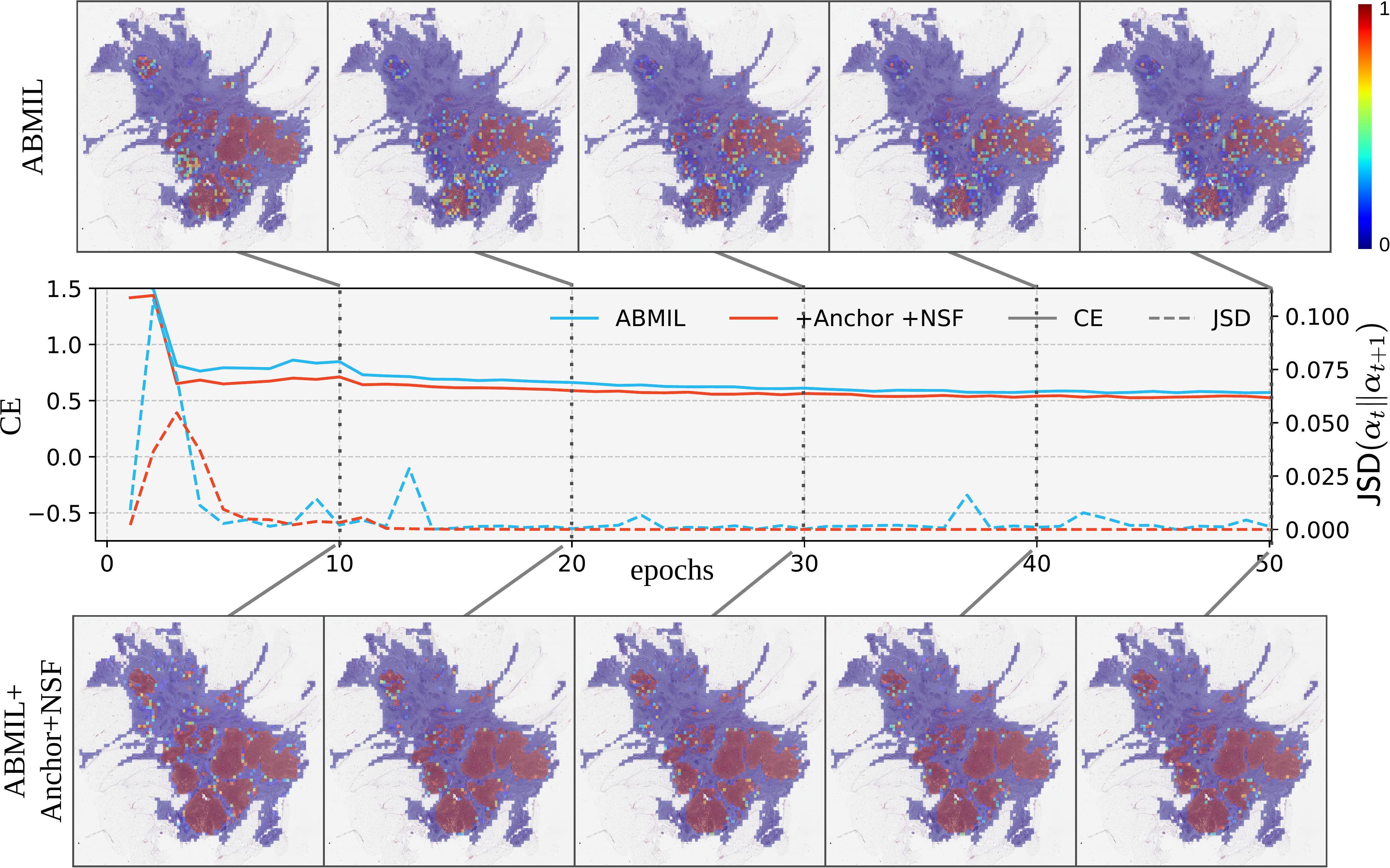}
\end{center}
\caption{ Visualization of attention dynamics on a normal WSI for ABMIL vs. ABMIL + anchor + NSF.}
\end{figure}

\begin{figure}[!ht]
\begin{center}
%\framebox[4.0in]{$\;$}
\includegraphics[width=\linewidth]{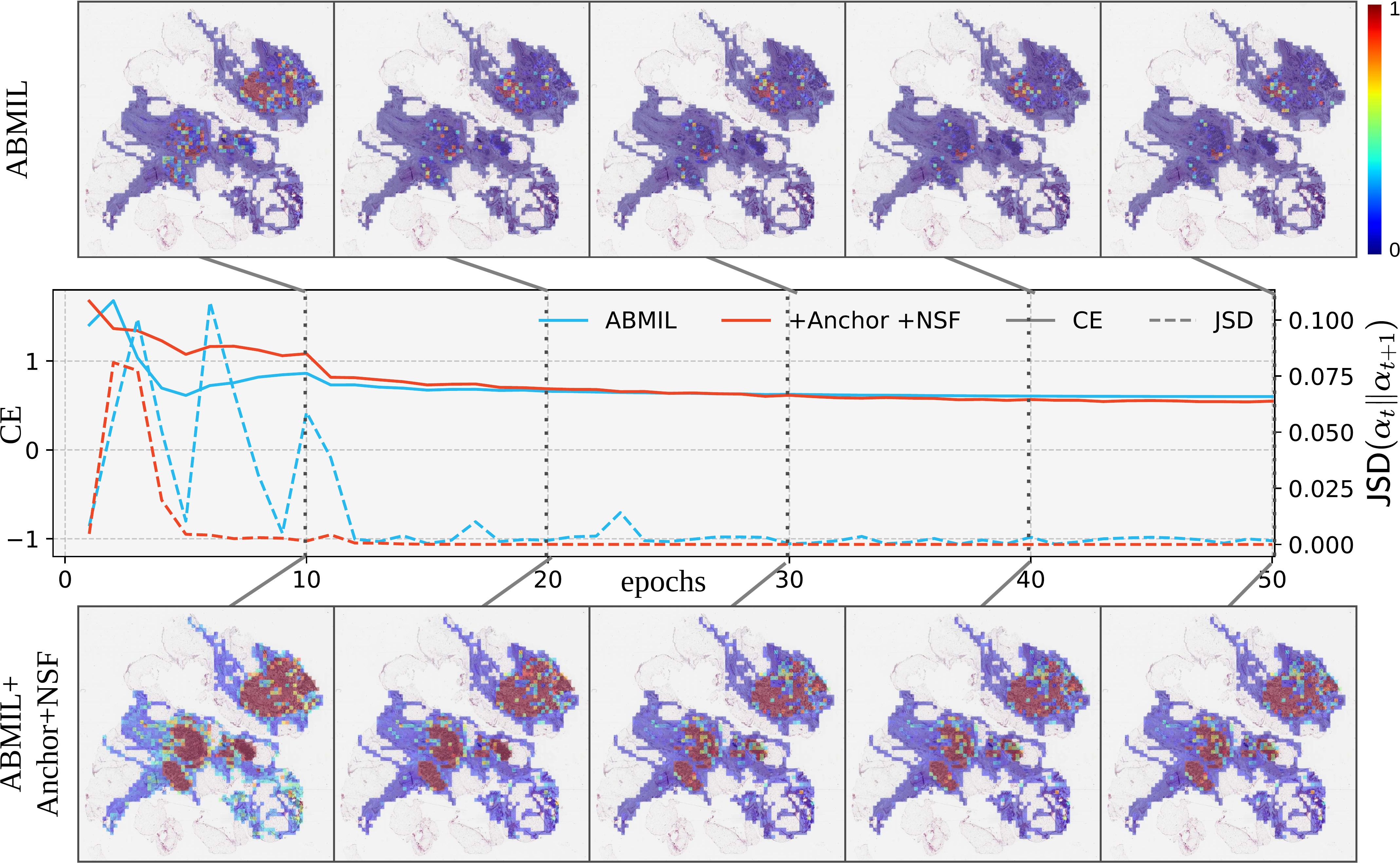}
\end{center}
\caption{ Visualization of attention dynamics on a tumor WSI for ABMIL vs. ABMIL + anchor + NSF.}
\end{figure}

\begin{figure}[!ht]
\begin{center}
%\framebox[4.0in]{$\;$}
\includegraphics[width=\linewidth]{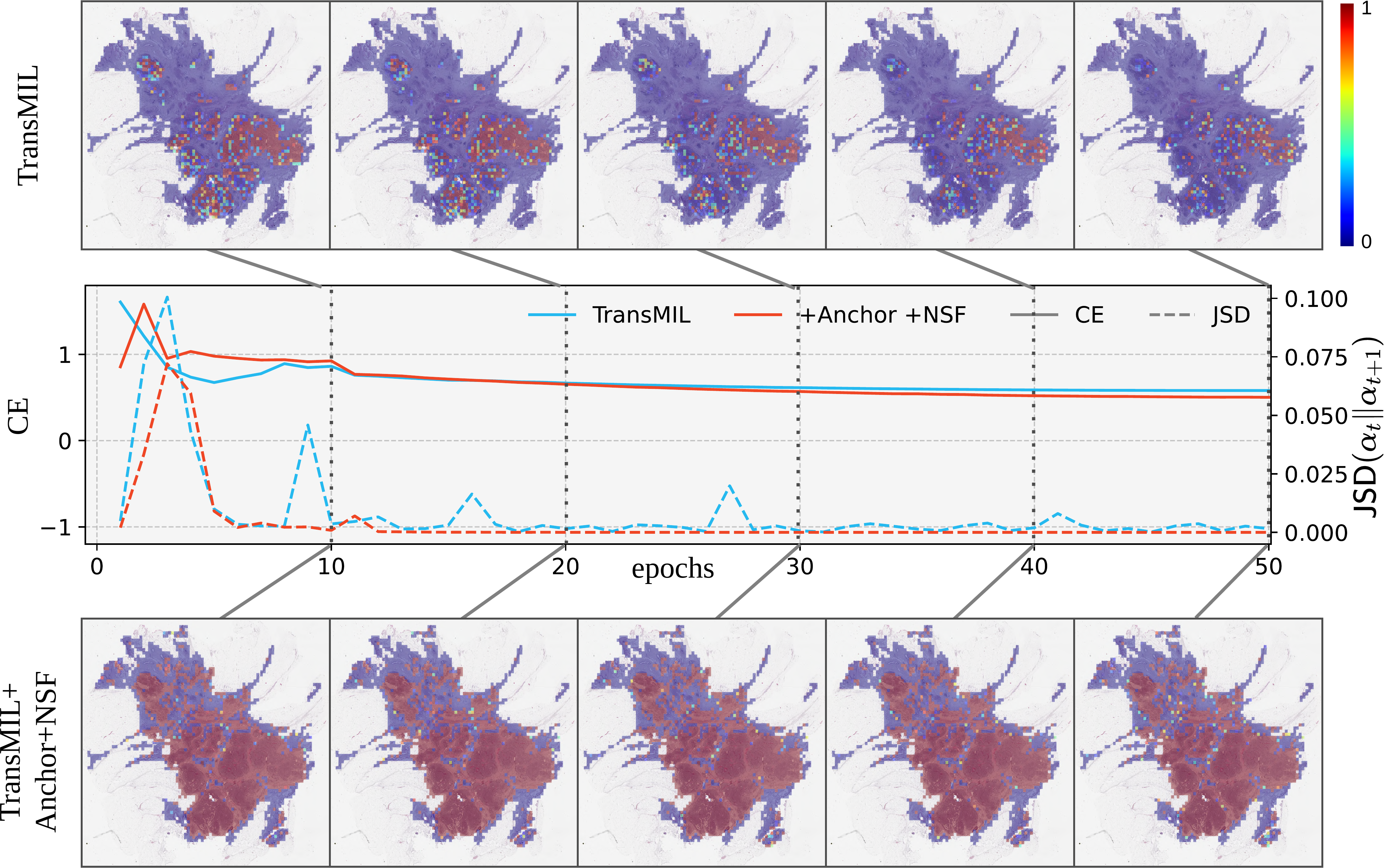}
\end{center}
\caption{ Visualization of attention dynamics on a tumor WSI for TransMIL vs. TransMIL + anchor + NSF.}
\end{figure}

\begin{figure}[!ht]
\begin{center}
%\framebox[4.0in]{$\;$}
\includegraphics[width=\linewidth]{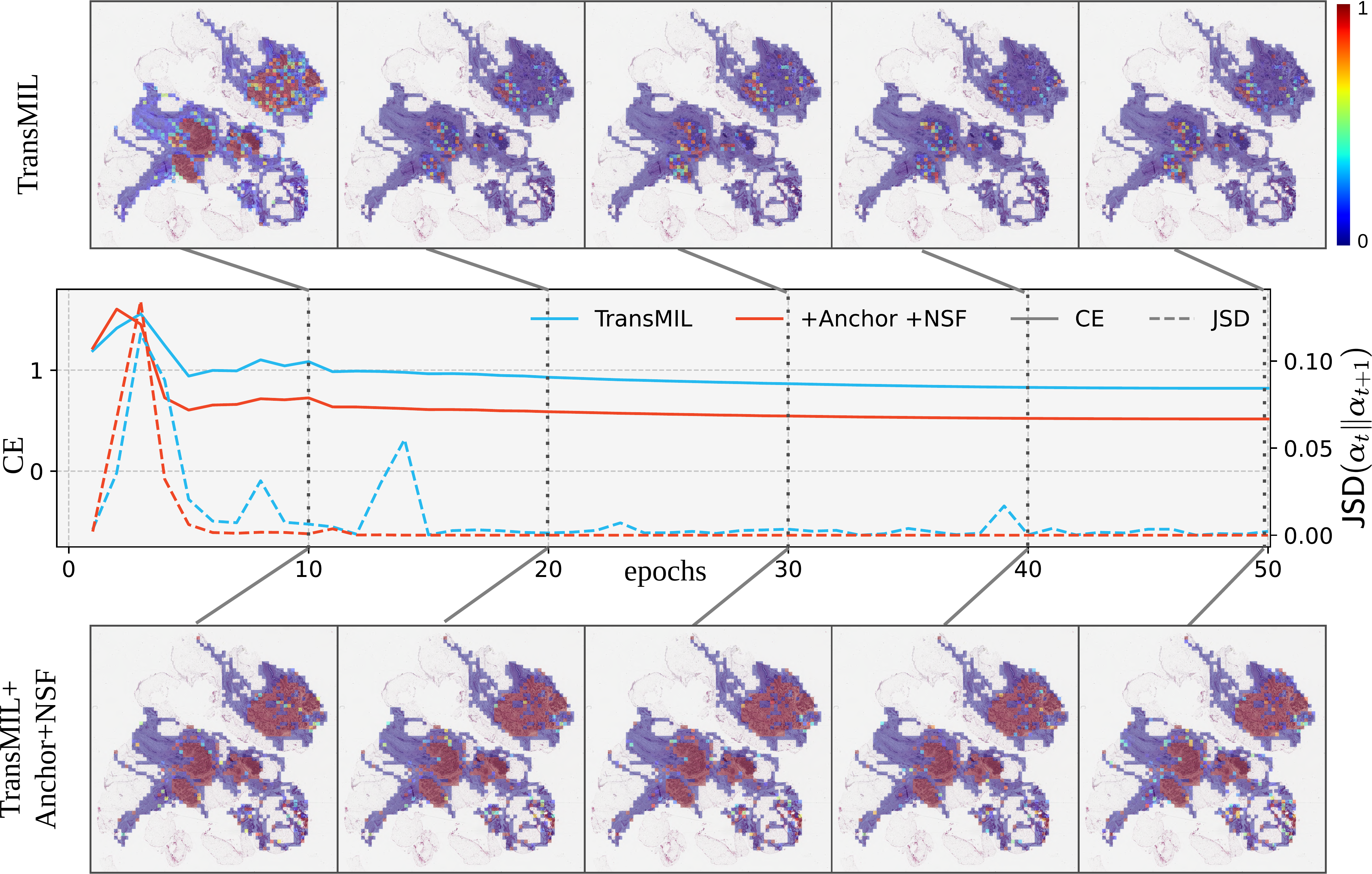}
\end{center}
\caption{ Visualization of attention dynamics on a tumor WSI for TransMIL vs. TransMIL + anchor + NSF.}
\end{figure}

\begin{figure}[!ht]
\begin{center}
%\framebox[4.0in]{$\;$}
\includegraphics[width=\linewidth]{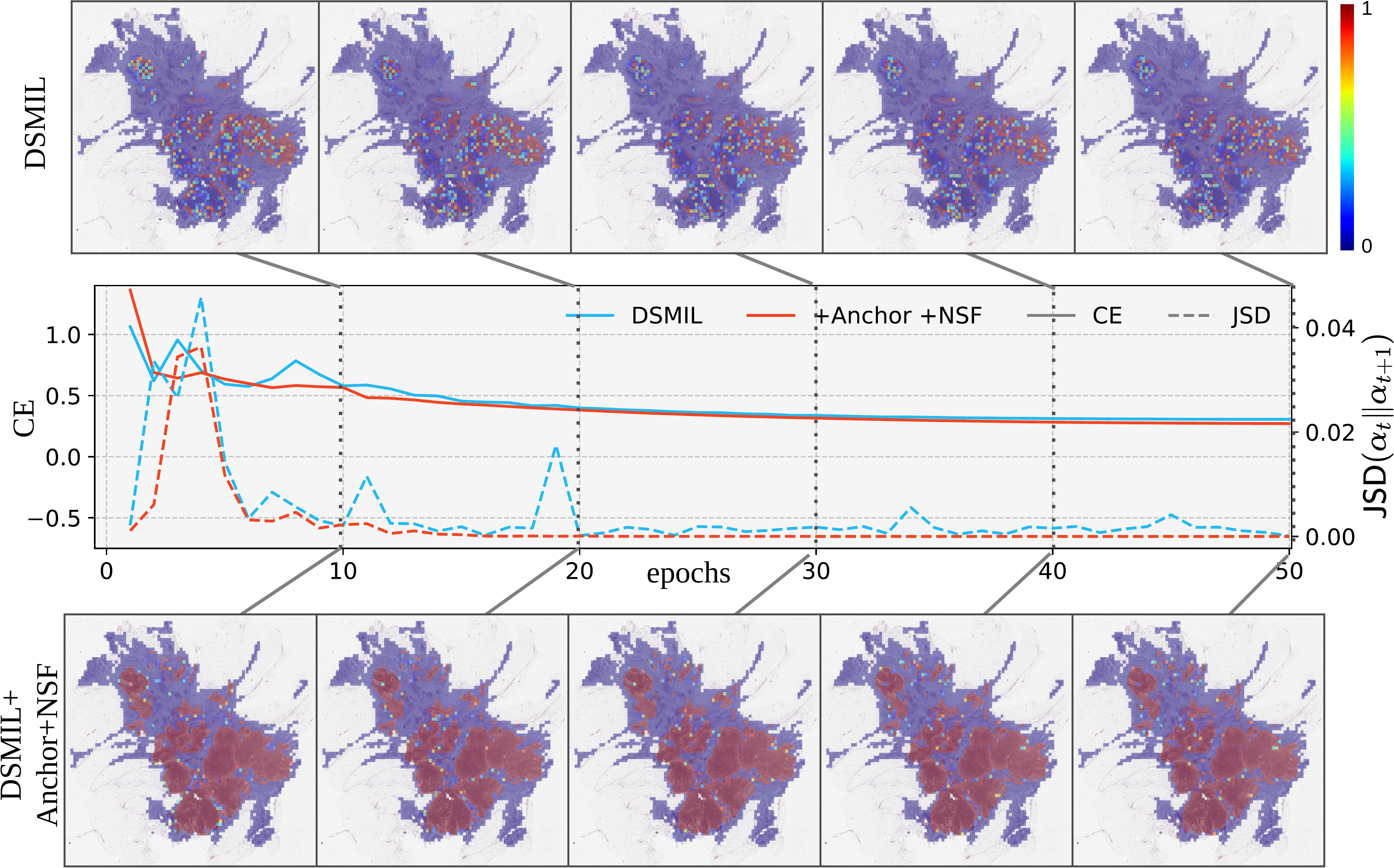}
\end{center}
\caption{ Visualization of attention dynamics on a tumor WSI for DSMIL vs. DSMIL + anchor + NSF.}
\end{figure}

\begin{figure}[!ht]
\begin{center}
%\framebox[4.0in]{$\;$}
\includegraphics[width=\linewidth]{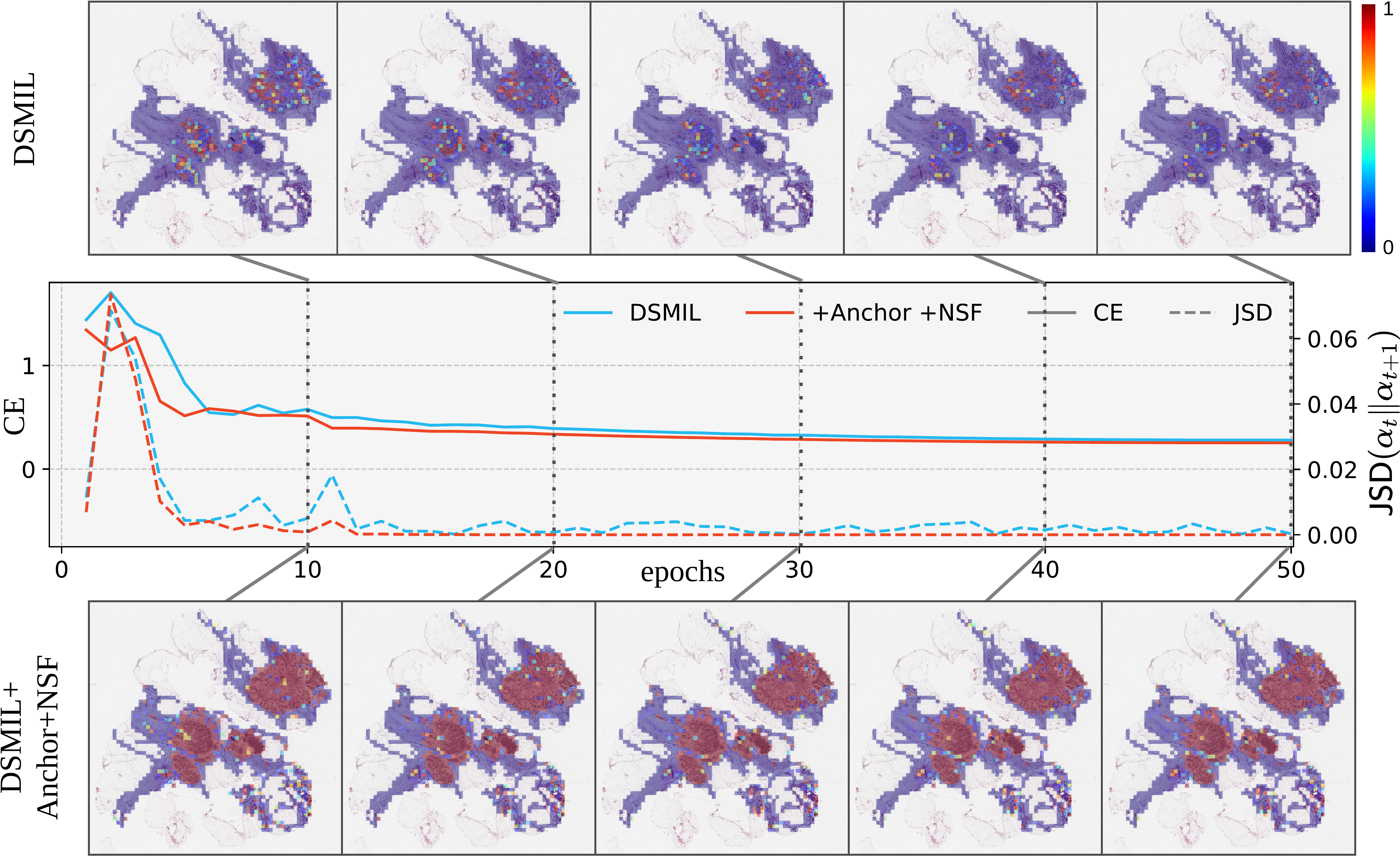}
\end{center}
\caption{ Visualization of attention dynamics on a tumor WSI for DSMIL vs. DSMIL + anchor + NSF.}
\end{figure}

Despite these advances, several avenues remain open for future investigation: 

ASMIL employs an EMA‐updated anchor model to stabilize attention dynamics, but this introduces additional computational overhead. An important direction is the development of intrinsic training strategies, such as regularization, that achieve comparable stability without auxiliary modules, thereby improving efficiency in large‐scale WSI applications.

\section{Limitations and Future Work}
\label{Appendix:LimitationsFutureWorks}
\begin{wrapfigure}[30]{r}{0.45\textwidth}
\vspace{1.4cm}
\includegraphics[width=\linewidth]{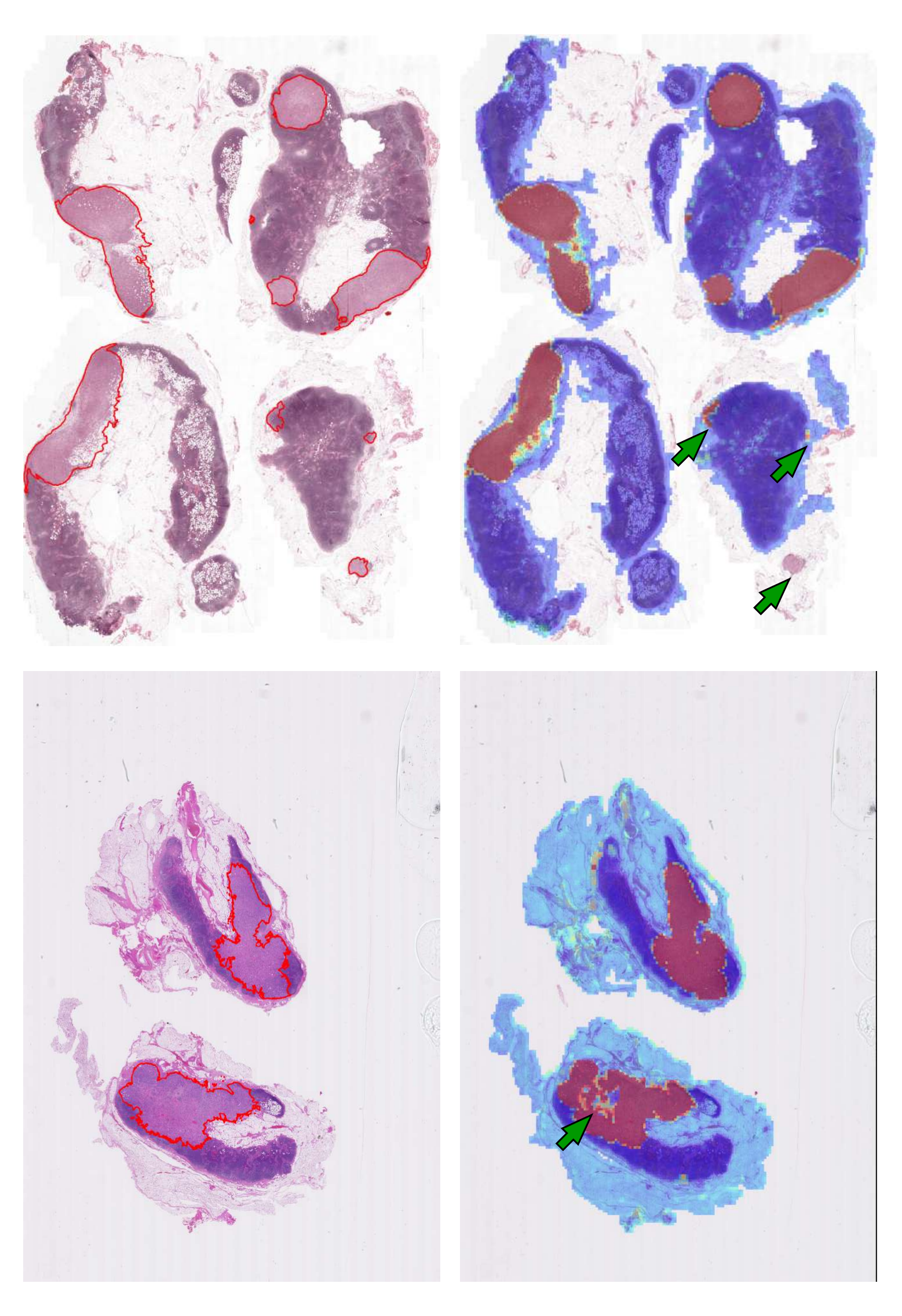}
\caption{Left: annotated WSI. Right: attention map generated by ASMIL, which fails to assign high attention to all tumor regions, as highlighted by the green arrow.}
\label{fig:fail}
\vspace{1.4cm}
\end{wrapfigure}
{Meanwhile, as more advanced regularization techniques such as MIL-dropout \citep{zhu2025how} continue to emerge, integrating them into the ASMIL framework represents a highly promising direction for future work. Such enhancements could further improve the model’s generalization ability while yielding more faithful and stable attention distributions.}

\begin{table}[]
\centering
{\caption{Rate of cancerous WSIs without missed regions on the CAMELYON-16 dataset.}
\label{Tab:NoMissing}
\begin{tabular}{ccccccc}
\toprule
\rowcolor{lightgray} Method & Clam    & TransMIL & DTFD-MIL & DSMIL   & CAMIL   & ASMIL   \\ \midrule
Rate   & 46.93\% & 3.246\%  & 50.34\%  & 48.22\% & 49.78\% & 54.63\% \\ \bottomrule
\end{tabular}}
\end{table}

Furthermore, a limitation of our approach is that ASMIL can fail by assigning low attention to tiny foci and small tumor regions (see \Cref{fig:fail}), particularly when large and small cancerous regions coexist within a single WSI. {We fix the attention threshold at 0.5 and count a cancerous WSI as successfully localized if all regions inside the tumor annotation exceed this threshold; the success rates are reported in \Cref{Tab:NoMissing}. ASMIL achieves the highest success rate, which we attribute to the NSF in the anchor model that mitigates over-concentrated attention.} This indicates room for improvement. Nevertheless, compared with published baselines, ASMIL’s attention maps consistently achieve higher Dice and FROC scores. One avenue to further enhance localization performance is to bootstrap training with a mixture of synthetic data and real WSI data. 
These directions are beyond the scope of this work and will be investigated in future research \cite{wu2024test, chiadapting}.
\section{LLM Usage Statement}
LLM used only for grammar and wording edits; no generation of ideas, methods, analyses, results, or citations. The authors reviewed all edits and accept full responsibility.

\end{document}

%% file: math_commands.tex
\usepackage{amsmath,amsfonts,bm}

\def\eqref#1{equation~\ref{#1}}
\def\1{\bm{1}}

\DeclareMathAlphabet{\mathsfit}{\encodingdefault}{\sfdefault}{m}{sl}
\SetMathAlphabet{\mathsfit}{bold}{\encodingdefault}{\sfdefault}{bx}{n}

%% file: iclr2025_conference.bib
@inproceedings{
sun2025pathgenm,
title={PathGen-1.6M: 1.6 Million Pathology Image-text Pairs Generation through Multi-agent Collaboration},
author={Yuxuan Sun and Yunlong Zhang and Yixuan Si and Chenglu Zhu and Kai Zhang and Zhongyi Shui and Jingxiong Li and Xuan Gong and XINHENG LYU and Tao Lin and Lin Yang},
booktitle={The Thirteenth International Conference on Learning Representations},
year={2025},
url={https://openreview.net/forum?id=rFpZnn11gj}
}

@ARTICLE{10530149,
  author={Qu, Linhao and Ma, Yingfan and Luo, Xiaoyuan and Guo, Qinhao and Wang, Manning and Song, Zhijian},
  journal={IEEE Transactions on Circuits and Systems for Video Technology}, 
  title={Rethinking Multiple Instance Learning for Whole Slide Image Classification: A Good Instance Classifier Is All You Need}, 
  year={2024},
  volume={34},
  number={10},
  pages={9732-9744},
  doi={10.1109/TCSVT.2024.3400876}}

@InProceedings{Lu_2023_CVPR,
    author    = {Lu, Ming Y. and Chen, Bowen and Zhang, Andrew and Williamson, Drew F. K. and Chen, Richard J. and Ding, Tong and Le, Long Phi and Chuang, Yung-Sung and Mahmood, Faisal},
    title     = {Visual Language Pretrained Multiple Instance Zero-Shot Transfer for Histopathology Images},
    booktitle = {Proceedings of the IEEE/CVF Conference on Computer Vision and Pattern Recognition (CVPR)},
    month     = {June},
    year      = {2023},
    pages     = {19764-19775}
}

@INPROCEEDINGS{7950690,
  author={Das, Kausik and Karri, Sri Phani Krishna and Guha Roy, Abhijit and Chatterjee, Jyotirmoy and Sheet, Debdoot},
  booktitle={2017 IEEE 14th International Symposium on Biomedical Imaging (ISBI 2017)}, 
  title={Classifying histopathology whole-slides using fusion of decisions from deep convolutional network on a collection of random multi-views at multi-magnification}, 
  year={2017},
  volume={},
  number={},
  pages={1024-1027},
  doi={10.1109/ISBI.2017.7950690}}

@INPROCEEDINGS{8363642,
  author={Das, Kausik and Conjeti, Sailesh and Roy, Abhijit Guha and Chatterjee, Jyotirmoy and Sheet, Debdoot},
  booktitle={2018 IEEE 15th International Symposium on Biomedical Imaging (ISBI 2018)}, 
  title={Multiple instance learning of deep convolutional neural networks for breast histopathology whole slide classification}, 
  year={2018},
  volume={},
  number={},
  pages={578-581},
  doi={10.1109/ISBI.2018.8363642}}

@article{li2019attention,
  title={An attention-based multi-resolution model for prostate whole slide imageclassification and localization},
  author={Li, Jiayun and Li, Wenyuan and Gertych, Arkadiusz and Knudsen, Beatrice S and Speier, William and Arnold, Corey W},
  journal={arXiv preprint arXiv:1905.13208},
  year={2019}
}

@article{lu2021data,
  title={Data-efficient and weakly supervised computational pathology on whole-slide images},
  author={Lu, Ming Y and Williamson, Drew FK and Chen, Tiffany Y and Chen, Richard J and Barbieri, Matteo and Mahmood, Faisal},
  journal={Nature biomedical engineering},
  volume={5},
  number={6},
  pages={555--570},
  year={2021},
  publisher={Nature Publishing Group UK London}
}

@INPROCEEDINGS{9578683,
  author={Li, Bin and Li, Yin and Eliceiri, Kevin W.},
  booktitle={2021 IEEE/CVF Conference on Computer Vision and Pattern Recognition (CVPR)}, 
  title={Dual-stream Multiple Instance Learning Network for Whole Slide Image Classification with Self-supervised Contrastive Learning}, 
  year={2021},
  volume={},
  number={},
  pages={14313-14323},
  doi={10.1109/CVPR46437.2021.01409}}

@INPROCEEDINGS{9522980,
  author={Zhang, Jingwei and Ma, Ke and Van Arnam, John and Gupta, Rajarsi and Saltz, Joel and Vakalopoulou, Maria and Samaras, Dimitris},
  booktitle={2021 IEEE/CVF Conference on Computer Vision and Pattern Recognition Workshops (CVPRW)}, 
  title={A Joint Spatial and Magnification Based Attention Framework for Large Scale Histopathology Classification}, 
  year={2021},
  volume={},
  number={},
  pages={3771-3779},
  doi={10.1109/CVPRW53098.2021.00418}}

@inproceedings{
shao2025do,
title={Do Multiple Instance Learning Models Transfer?},
author={Daniel Shao and Richard J. Chen and Andrew H. Song and Joel Runevic and Ming Y. Lu and Tong Ding and Faisal Mahmood},
booktitle={Forty-second International Conference on Machine Learning},
year={2025},
url={https://openreview.net/forum?id=hfLqdquVt3}
}

@inproceedings{lin2023interventional,
  title={Interventional bag multi-instance learning on whole-slide pathological images},
  author={Lin, Tiancheng and Yu, Zhimiao and Hu, Hongyu and Xu, Yi and Chen, Chang-Wen},
  booktitle={Proceedings of the IEEE/CVF Conference on Computer Vision and Pattern Recognition},
  pages={19830--19839},
  year={2023}
}

@inproceedings{zhang2024attention,
  title={Attention-challenging multiple instance learning for whole slide image classification},
  author={Zhang, Yunlong and Li, Honglin and Sun, Yunxuan and Zheng, Sunyi and Zhu, Chenglu and Yang, Lin},
  booktitle={European Conference on Computer Vision},
  pages={125--143},
  year={2024},
  organization={Springer}
}

@inproceedings{song2024morphological,
    title={Morphological Prototyping for Unsupervised Slide Representation Learning in Computational Pathology},
    author={Song, Andrew H and Chen, Richard J and Ding, Tong and Williamson, Drew FK and Jaume, Guillaume and Mahmood, Faisal},
    booktitle={Proceedings of the IEEE/CVF Conference on Computer Vision and Pattern Recognition},
    year={2024},
}

@inproceedings{guo2023higt,
  title={Higt: Hierarchical interaction graph-transformer for whole slide image analysis},
  author={Guo, Ziyu and Zhao, Weiqin and Wang, Shujun and Yu, Lequan},
  booktitle={International Conference on Medical Image Computing and Computer-Assisted Intervention},
  pages={755--764},
  year={2023},
  organization={Springer}
}

@ARTICLE{10849962,
  author={Tran, Manuel and Wagner, Sophia and Weichert, Wilko and Matek, Christian and Boxberg, Melanie and Peng, Tingying},
  journal={IEEE Transactions on Medical Imaging}, 
  title={Navigating Through Whole Slide Images With Hierarchy, Multi-Object, and Multi-Scale Data}, 
  year={2025},
  volume={44},
  number={5},
  pages={2002-2015},
  doi={10.1109/TMI.2025.3532728}}

@article{buzzard2024paths,
  title={PATHS: A Hierarchical Transformer for Efficient Whole Slide Image Analysis},
  author={Buzzard, Zak and Hemker, Konstantin and Simidjievski, Nikola and Jamnik, Mateja},
  journal={arXiv preprint arXiv:2411.18225},
  year={2024}
}

@inproceedings{
fourkioti2024camil,
title={{CAMIL}: Context-Aware Multiple Instance Learning for Cancer Detection and Subtyping in Whole Slide Images},
author={Olga Fourkioti and Matt De Vries and Chris Bakal},
booktitle={The Twelfth International Conference on Learning Representations},
year={2024},
url={https://openreview.net/forum?id=rzBskAEmoc}
}

@article{zhang2025AEM,
  title={AEM: Attention Entropy Maximization for Multiple Instance Learning based Whole Slide Image Classification},
  author={Zhang, Yunlong and Shui, Zhongyi and Sun, Yunxuan and Li, Honglin and Li, Jingxiong and Zhu, Chenglu and Yang, Lin},
  journal={International Conference on Medical Image Computing and Computer Assisted Intervention},
  year={2025}
}

@inproceedings{ilse2018attention,
  title={Attention-based deep multiple instance learning},
  author={Ilse, Maximilian and Tomczak, Jakub and Welling, Max},
  booktitle={International conference on machine learning},
  pages={2127--2136},
  year={2018},
  organization={PMLR}
}

@book{cover1999elements,
  title={Elements of information theory},
  author={Cover, Thomas M},
  year={1999},
  publisher={John Wiley \& Sons}
}

@article{brancati2022bracs,
  title={Bracs: A dataset for breast carcinoma subtyping in h\&e histology images},
  author={Brancati, Nadia and Anniciello, Anna Maria and Pati, Pushpak and Riccio, Daniel and Scognamiglio, Giosu{\`e} and Jaume, Guillaume and De Pietro, Giuseppe and Di Bonito, Maurizio and Foncubierta, Antonio and Botti, Gerardo and others},
  journal={Database},
  volume={2022},
  pages={baac093},
  year={2022},
  publisher={Oxford University Press UK}
}

@ARTICLE{8447230,
  author={Bándi, Péter and Geessink, Oscar and Manson, Quirine and Van Dijk, Marcory and Balkenhol, Maschenka and Hermsen, Meyke and Ehteshami Bejnordi, Babak and Lee, Byungjae and Paeng, Kyunghyun and Zhong, Aoxiao and Li, Quanzheng and Zanjani, Farhad Ghazvinian and Zinger, Svitlana and Fukuta, Keisuke and Komura, Daisuke and Ovtcharov, Vlado and Cheng, Shenghua and Zeng, Shaoqun and Thagaard, Jeppe and Dahl, Anders B. and Lin, Huangjing and Chen, Hao and Jacobsson, Ludwig and Hedlund, Martin and Çetin, Melih and Halıcı, Eren and Jackson, Hunter and Chen, Richard and Both, Fabian and Franke, Jörg and Küsters-Vandevelde, Heidi and Vreuls, Willem and Bult, Peter and van Ginneken, Bram and van der Laak, Jeroen and Litjens, Geert},
  journal={IEEE Transactions on Medical Imaging}, 
  title={From Detection of Individual Metastases to Classification of Lymph Node Status at the Patient Level: The CAMELYON17 Challenge}, 
  year={2019},
  volume={38},
  number={2},
  pages={550-560},
  doi={10.1109/TMI.2018.2867350}}

@article{CAMELYON16,
    author = {Ehteshami Bejnordi, Babak and Veta, Mitko and Johannes van Diest, Paul and van Ginneken, Bram and Karssemeijer, Nico and Litjens, Geert and van der Laak, Jeroen A. W. M. and and the CAMELYON16 Consortium},
    title = {Diagnostic Assessment of Deep Learning Algorithms for Detection of Lymph Node Metastases in Women With Breast Cancer},
    journal = {JAMA},
    volume = {318},
    number = {22},
    pages = {2199-2210},
    year = {2017},
    month = {12},
    issn = {0098-7484},
    doi = {10.1001/jama.2017.14585},
    url = {https://doi.org/10.1001/jama.2017.14585},
    eprint = {https://jamanetwork.com/journals/jama/articlepdf/2665774/jama\_ehteshami\_bejnordi\_2017\_oi\_170113.pdf},
}

@article{verghese2023computational,
  title={Computational pathology in cancer diagnosis, prognosis, and prediction--present day and prospects},
  author={Verghese, Gregory and Lennerz, Jochen K and Ruta, Danny and Ng, Wen and Thavaraj, Selvam and Siziopikou, Kalliopi P and Naidoo, Threnesan and Rane, Swapnil and Salgado, Roberto and Pinder, Sarah E and others},
  journal={The Journal of pathology},
  volume={260},
  number={5},
  pages={551--563},
  year={2023},
  publisher={Wiley Online Library}
}

@misc{evans2001method,
  author       = {James Bacus},
  title        = {Method and apparatus for acquiring and reconstructing magnified specimen images from a computer-controlled microscope },
  number       = {US20010050999A1},
  year         = {2001},
  month        = dec,
  day          = {13},
  note         = {US Patent Application},
  url          = {https://patents.google.com/patent/US20010050999A1/a}
}

@inproceedings{NIPS1990_e46de7e1,
 author = {Keeler, James and Rumelhart, David and Leow, Wee},
 booktitle = {Advances in Neural Information Processing Systems},
 editor = {R.P. Lippmann and J. Moody and D. Touretzky},
 pages = {},
 publisher = {Morgan-Kaufmann},
 title = {Integrated Segmentation and Recognition of Hand-Printed Numerals},
 url = {https://proceedings.neurips.cc/paper_files/paper/1990/file/e46de7e1bcaaced9a54f1e9d0d2f800d-Paper.pdf},
 volume = {3},
 year = {1990}
}

@InProceedings{Tang_2023_ICCV,
    author    = {Tang, Wenhao and Huang, Sheng and Zhang, Xiaoxian and Zhou, Fengtao and Zhang, Yi and Liu, Bo},
    title     = {Multiple Instance Learning Framework with Masked Hard Instance Mining for Whole Slide Image Classification},
    booktitle = {Proceedings of the IEEE/CVF International Conference on Computer Vision (ICCV)},
    month     = {October},
    year      = {2023},
    pages     = {4078-4087}
}

@inproceedings{
shao2021transmil,
title={Trans{MIL}: Transformer based Correlated Multiple Instance Learning for Whole Slide Image Classification},
author={Zhuchen Shao and Hao Bian and Yang Chen and Yifeng Wang and Jian Zhang and Xiangyang Ji and Yongbing Zhang},
booktitle={Advances in Neural Information Processing Systems},
editor={A. Beygelzimer and Y. Dauphin and P. Liang and J. Wortman Vaughan},
year={2021},
url={https://openreview.net/forum?id=LKUfuWxajHc}
}

@inproceedings{zhang2022dtfd,
  title={Dtfd-mil: Double-tier feature distillation multiple instance learning for histopathology whole slide image classification},
  author={Zhang, Hongrun and Meng, Yanda and Zhao, Yitian and Qiao, Yihong and Yang, Xiaoyun and Coupland, Sarah E and Zheng, Yalin},
  booktitle={Proceedings of the IEEE/CVF conference on computer vision and pattern recognition},
  pages={18802--18812},
  year={2022}
}

@inproceedings{xiong2021nystromformer,
  title={Nystr{\"o}mformer: A nystr{\"o}m-based algorithm for approximating self-attention},
  author={Xiong, Yunyang and Zeng, Zhanpeng and Chakraborty, Rudrasis and Tan, Mingxing and Fung, Glenn and Li, Yin and Singh, Vikas},
  booktitle={Proceedings of the AAAI conference on artificial intelligence},
  volume={35},
  pages={14138--14148},
  year={2021}
}

@article{cheng2021computational,
  title={Computational image analysis identifies histopathological image features associated with somatic mutations and patient survival in gastric adenocarcinoma},
  author={Cheng, Jun and Liu, Yuting and Huang, Wei and Hong, Wenhui and Wang, Lingling and Zhan, Xiaohui and Han, Zhi and Ni, Dong and Huang, Kun and Zhang, Jie},
  journal={Frontiers in Oncology},
  volume={11},
  pages={623382},
  year={2021},
  publisher={Frontiers Media SA}
}

@inproceedings{kang2022benchmarking,
    author    = {Kang, Mingu and Song, Heon and Park, Seonwook and Yoo, Donggeun and Pereira, Sérgio},
    title     = {Benchmarking Self-Supervised Learning on Diverse Pathology Datasets},
    booktitle = {Proceedings of the IEEE/CVF Conference on Computer Vision and Pattern Recognition (CVPR)},
    month     = {June},
    year      = {2023},
    pages     = {3344-3354}
}

@inproceedings{li2021dual,
  title={Dual-stream multiple instance learning network for whole slide image classification with self-supervised contrastive learning},
  author={Li, Bin and Li, Yin and Eliceiri, Kevin W},
  booktitle={Proceedings of the IEEE/CVF Conference on Computer Vision and Pattern Recognition},
  pages={14318--14328},
  year={2021}
}

@article{xu2014weakly,
  title={Weakly supervised histopathology cancer image segmentation and classification},
  author={Xu, Yan and Zhu, Jun-Yan and Eric, I and Chang, Chao and Lai, Maode and Tu, Zhuowen},
  journal={Medical image analysis},
  volume={18},
  number={3},
  pages={591--604},
  year={2014},
  publisher={Elsevier}
}

@article{kraus2016classifying,
  title={Classifying and segmenting microscopy images with deep multiple instance learning},
  author={Kraus, Oren Z and Ba, Jimmy Lei and Frey, Brendan J},
  journal={Bioinformatics},
  volume={32},
  number={12},
  pages={i52--i59},
  year={2016},
  publisher={Oxford University Press}
}

@article{hinton2015distilling,
  title={Distilling the knowledge in a neural network},
  author={Hinton, Geoffrey and Vinyals, Oriol and Dean, Jeff},
  journal={arXiv preprint arXiv:1503.02531},
  year={2015}
}

@article{TOURNIAIRE2023102763,
title = {MS-CLAM: Mixed supervision for the classification and localization of tumors in Whole Slide Images},
journal = {Medical Image Analysis},
volume = {85},
pages = {102763},
year = {2023},
issn = {1361-8415},
doi = {https://doi.org/10.1016/j.media.2023.102763},
url = {https://www.sciencedirect.com/science/article/pii/S1361841523000245},
author = {Paul Tourniaire and Marius Ilie and Paul Hofman and Nicholas Ayache and Hervé Delingette},
}

@InProceedings{pmlr-v48-martins16,
  title = 	 {From Softmax to Sparsemax: A Sparse Model of Attention and Multi-Label Classification},
  author = 	 {Martins, Andre and Astudillo, Ramon},
  booktitle = 	 {Proceedings of The 33rd International Conference on Machine Learning},
  pages = 	 {1614--1623},
  year = 	 {2016},
  editor = 	 {Balcan, Maria Florina and Weinberger, Kilian Q.},
  volume = 	 {48},
  series = 	 {Proceedings of Machine Learning Research},
  address = 	 {New York, New York, USA},
  month = 	 {20--22 Jun},
  publisher =    {PMLR},
  url = 	 {https://proceedings.mlr.press/v48/martins16.html},
}

@article{tsallis1988possible,
  title={Possible generalization of Boltzmann-Gibbs statistics},
  author={Tsallis, Constantino},
  journal={Journal of statistical physics},
  volume={52},
  number={1},
  pages={479--487},
  year={1988},
  publisher={Springer}
}

@article{tarvainen2017mean,
  title={Mean teachers are better role models: Weight-averaged consistency targets improve semi-supervised deep learning results},
  author={Tarvainen, Antti and Valpola, Harri},
  journal={Advances in neural information processing systems},
  volume={30},
  year={2017}
}

@inproceedings{caron2021emerging,
  title={Emerging Properties in Self-Supervised Vision Transformers},
  author={Caron, Mathilde and Touvron, Hugo and Misra, Ishan and J\'egou, Herv\'e  and Mairal, Julien and Bojanowski, Piotr and Joulin, Armand},
  booktitle={Proceedings of the International Conference on Computer Vision (ICCV)},
  year={2021}
}

@misc{oquab2023dinov2,
  title={DINOv2: Learning Robust Visual Features without Supervision},
  author={Oquab, Maxime and Darcet, Timothée and Moutakanni, Theo and Vo, Huy V. and Szafraniec, Marc and Khalidov, Vasil and Fernandez, Pierre and Haziza, Daniel and Massa, Francisco and El-Nouby, Alaaeldin and Howes, Russell and Huang, Po-Yao and Xu, Hu and Sharma, Vasu and Li, Shang-Wen and Galuba, Wojciech and Rabbat, Mike and Assran, Mido and Ballas, Nicolas and Synnaeve, Gabriel and Misra, Ishan and Jegou, Herve and Mairal, Julien and Labatut, Patrick and Joulin, Armand and Bojanowski, Piotr},
  journal={arXiv:2304.07193},
  year={2023}
}

@misc{grill2020bootstrap,
    title = {Bootstrap Your Own Latent: A New Approach to Self-Supervised Learning},
    author = {Jean-Bastien Grill and Florian Strub and Florent Altché and Corentin Tallec and Pierre H. Richemond and Elena Buchatskaya and Carl Doersch and Bernardo Avila Pires and Zhaohan Daniel Guo and Mohammad Gheshlaghi Azar and Bilal Piot and Koray Kavukcuoglu and Rémi Munos and Michal Valko},
    year = {2020},
    eprint = {2006.07733},
    archivePrefix = {arXiv},
    primaryClass = {cs.LG}
}

@article{chen2024uni,
  title={Towards a General-Purpose Foundation Model for Computational Pathology},
  author={Chen, Richard J and Ding, Tong and Lu, Ming Y and Williamson, Drew FK and Jaume, Guillaume and Chen, Bowen and Zhang, Andrew and Shao, Daniel and Song, Andrew H and Shaban, Muhammad and others},
  journal={Nature Medicine},
  publisher={Nature Publishing Group},
  year={2024}
}

@INPROCEEDINGS{resnet,
  author={He, Kaiming and Zhang, Xiangyu and Ren, Shaoqing and Sun, Jian},
  booktitle={2016 IEEE Conference on Computer Vision and Pattern Recognition (CVPR)}, 
  title={Deep Residual Learning for Image Recognition}, 
  year={2016},
  volume={},
  number={},
  pages={770-778},
  doi={10.1109/CVPR.2016.90}}

@article{ILSVRC15,
Author = {Olga Russakovsky and Jia Deng and Hao Su and Jonathan Krause and Sanjeev Satheesh and Sean Ma and Zhiheng Huang and Andrej Karpathy and Aditya Khosla and Michael Bernstein and Alexander C. Berg and Li Fei-Fei},
Title = {{ImageNet Large Scale Visual Recognition Challenge}},
Year = {2015},
journal   = {International Journal of Computer Vision (IJCV)},
doi = {10.1007/s11263-015-0816-y},
volume={115},
number={3},
pages={211-252}
}

@inproceedings{dong2025fast,
  title={Fast and Accurate Gigapixel Pathological Image Classification with Hierarchical Distillation Multi-Instance Learning},
  author={Dong, Jiuyang and Jiang, Junjun and Jiang, Kui and Li, Jiahan and Zhang, Yongbing},
  booktitle={Proceedings of the Computer Vision and Pattern Recognition Conference},
  pages={30818--30828},
  year={2025}
}

@article{maaten2008visualizing,
  title={Visualizing data using t-SNE},
  author={Maaten, Laurens van der and Hinton, Geoffrey},
  journal={Journal of machine learning research},
  volume={9},
  number={Nov},
  pages={2579--2605},
  year={2008}
}

@inproceedings{chen2020simple,
  title={A simple framework for contrastive learning of visual representations},
  author={Chen, Ting and Kornblith, Simon and Norouzi, Mohammad and Hinton, Geoffrey},
  booktitle={International conference on machine learning},
  pages={1597--1607},
  year={2020},
  organization={PmLR}
}

@inproceedings{10.5555/302528.302753,
author = {Maron, Oded and Lozano-P\'{e}rez, Tom\'{a}s},
title = {A framework for multiple-instance learning},
year = {1998},
isbn = {0262100762},
publisher = {MIT Press},
address = {Cambridge, MA, USA},
booktitle = {Proceedings of the 1997 Conference on Advances in Neural Information Processing Systems 10},
pages = {570–576},
numpages = {7},
location = {Denver, Colorado, USA},
series = {NIPS '97}
}

@article{miller1969froc,
  title={The FROC curve: A representation of the observer's performance for the method of free response},
  author={Miller, Harold},
  journal={The Journal of the Acoustical Society of America},
  volume={46},
  number={6B},
  pages={1473--1476},
  year={1969},
  publisher={Acoustical Society of America}
}

@article{bunch1978free,
  title={Free response approach to measurement and characterization of radiographic observer performance.},
  author={Bunch, PC},
  journal={AJR Am J Roentgenol},
  volume={130},
  number={2},
  pages={382},
  year={1978}
}

@inproceedings{
du2025rethinking,
title={Rethinking Multiple-Instance Learning From Feature Space to Probability Space},
author={Zhaolong Du and Shasha Mao and Xuequan Lu and Mengnan Qi and Yimeng Zhang and Jing Gu and Licheng Jiao},
booktitle={The Thirteenth International Conference on Learning Representations},
year={2025},
url={https://openreview.net/forum?id=torbeUlslS}
}

@article{kingma2014adam,
  title={Adam: A method for stochastic optimization},
  author={Kingma, Diederik P and Ba, Jimmy},
  journal={arXiv preprint arXiv:1412.6980},
  year={2014}
}

@inproceedings{NIPS2002_3e6260b8,
 author = {Andrews, Stuart and Tsochantaridis, Ioannis and Hofmann, Thomas},
 booktitle = {Advances in Neural Information Processing Systems},
 editor = {S. Becker and S. Thrun and K. Obermayer},
 pages = {},
 publisher = {MIT Press},
 title = {Support Vector Machines for Multiple-Instance Learning},
 url = {https://proceedings.neurips.cc/paper_files/paper/2002/file/3e6260b81898beacda3d16db379ed329-Paper.pdf},
 volume = {15},
 year = {2002}
}

@inproceedings{
du2023rgmil,
title={{RGMIL}: Guide Your Multiple-Instance Learning Model with Regressor},
author={Zhaolong Du and Shasha Mao and Yimeng Zhang and Shuiping Gou and Licheng Jiao and Lin Xiong},
booktitle={Thirty-seventh Conference on Neural Information Processing Systems},
year={2023},
url={https://openreview.net/forum?id=eGoE9CVRPc}
}

@inproceedings{
tang2023disambiguated,
title={Disambiguated Attention Embedding for Multi-Instance Partial-Label Learning},
author={Wei Tang and Weijia Zhang and Min-Ling Zhang},
booktitle={Thirty-seventh Conference on Neural Information Processing Systems},
year={2023},
url={https://openreview.net/forum?id=NYwbmCrrni}
}

@article{simeoni2025dinov3,
  title={Dinov3},
  author={Sim{\'e}oni, Oriane and Vo, Huy V and Seitzer, Maximilian and Baldassarre, Federico and Oquab, Maxime and Jose, Cijo and Khalidov, Vasil and Szafraniec, Marc and Yi, Seungeun and Ramamonjisoa, Micha{\"e}l and others},
  journal={arXiv preprint arXiv:2508.10104},
  year={2025}
}

@inproceedings{
zhu2025how,
title={How Effective Can Dropout Be in Multiple Instance Learning ?},
author={Wenhui Zhu and Peijie Qiu and Xiwen Chen and Zhangsihao Yang and Aristeidis Sotiras and Abolfazl Razi and Yalin Wang},
booktitle={Forty-second International Conference on Machine Learning},
year={2025},
url={https://openreview.net/forum?id=qsYHqLFCH5}
}

@misc{zhu2023pdl,
      title={PDL: Regularizing Multiple Instance Learning with Progressive Dropout Layers}, 
      author={Wenhui Zhu and Peijie Qiu and Xiwen Chen and Oana M. Dumitrascu and Yalin Wang},
      year={2023},
      eprint={2308.10112},
      archivePrefix={arXiv},
      primaryClass={cs.CV}
}

@inproceedings{
liu2025interpretable,
title={Interpretable Vision-Language Survival Analysis with Ordinal Inductive Bias for Computational Pathology},
author={Pei Liu and Luping Ji and Jiaxiang Gou and Bo Fu and Mao Ye},
booktitle={The Thirteenth International Conference on Learning Representations},
year={2025},
url={https://openreview.net/forum?id=trj2Jq8riA}
}

@inproceedings{tang2024feature,
  title={Feature re-embedding: Towards foundation model-level performance in computational pathology},
  author={Tang, Wenhao and Zhou, Fengtao and Huang, Sheng and Zhu, Xiang and Zhang, Yi and Liu, Bo},
  booktitle={Proceedings of the IEEE/CVF conference on computer vision and pattern recognition},
  pages={11343--11352},
  year={2024}
}

@article{yao2020whole,
  title={Whole slide images based cancer survival prediction using attention guided deep multiple instance learning networks},
  author={Yao, Jiawen and Zhu, Xinliang and Jonnagaddala, Jitendra and Hawkins, Nicholas and Huang, Junzhou},
  journal={Medical image analysis},
  volume={65},
  pages={101789},
  year={2020},
  publisher={Elsevier}
}

@inproceedings{chen2021whole,
  title={Whole slide images are 2d point clouds: Context-aware survival prediction using patch-based graph convolutional networks},
  author={Chen, Richard J and Lu, Ming Y and Shaban, Muhammad and Chen, Chengkuan and Chen, Tiffany Y and Williamson, Drew FK and Mahmood, Faisal},
  booktitle={International Conference on Medical Image Computing and Computer-Assisted Intervention},
  pages={339--349},
  year={2021},
  organization={Springer}
}

@inproceedings{
xiang2023exploring,
title={Exploring Low-Rank Property in Multiple Instance Learning for Whole Slide Image Classification},
author={Jinxi Xiang and Jun Zhang},
booktitle={The Eleventh International Conference on Learning Representations },
year={2023},
url={https://openreview.net/forum?id=01KmhBsEPFO}
}

@inproceedings{chiadapting,
  title={Adapting to Distribution Shift by Visual Domain Prompt Generation},
  author={Chi, Zhixiang and Gu, Li and Zhong, Tao and Liu, Huan and YU, YUANHAO and Plataniotis, Konstantinos N and Wang, Yang},
  booktitle={The Twelfth International Conference on Learning Representations},
  year={2024}
}

@inproceedings{wu2024test,
  title={Test-time domain adaptation by learning domain-aware batch normalization},
  author={Wu, Yanan and Chi, Zhixiang and Wang, Yang and Plataniotis, Konstantinos N and Feng, Songhe},
  booktitle={Proceedings of the AAAI Conference on Artificial Intelligence},
  volume={38},
  number={14},
  pages={15961--15969},
  year={2024}
}

@inproceedings{wu2023metagcd,
  title={Metagcd: Learning to continually learn in generalized category discovery},
  author={Wu, Yanan and Chi, Zhixiang and Wang, Yang and Feng, Songhe},
  booktitle={Proceedings of the IEEE/CVF International Conference on Computer Vision},
  pages={1655--1665},
  year={2023}
}

@article{zhang2025widget2code,
  title={Widget2Code: From Visual Widgets to UI Code via Multimodal LLMs},
  author={Zhang, Houston H and Zhang, Tao and Lin, Baoze and Xue, Yuanqi and Zhu, Yincheng and Liu, Huan and Gu, Li and Ye, Linfeng and Wang, Ziqiang and Zuo, Xinxin and others},
  journal={arXiv preprint arXiv:2512.19918},
  year={2025}
}

@misc{ye2024bayesconditionaldistributionestimation,
      title={Bayes Conditional Distribution Estimation for Knowledge Distillation Based on Conditional Mutual Information}, 
      author={Linfeng Ye and Shayan Mohajer Hamidi and Renhao Tan and En-Hui Yang},
      year={2024},
      eprint={2401.08732},
      archivePrefix={arXiv},
      primaryClass={cs.LG},
      url={https://arxiv.org/abs/2401.08732}, 
}

@article{yang2309conditional,
  title={Conditional mutual information constrained deep learning for classification (2023)},
  author={Yang, En-Hui and Hamidi, Shayan Mohajer and Ye, Linfeng and Tan, Renhao and Yang, Beverly},
  journal={URL https://arxiv. org/abs/2309.09123},
  volume={5}
}

@inproceedings{yang2024markov,
  title={Markov knowledge distillation: Make nasty teachers trained by self-undermining knowledge distillation fully distillable},
  author={Yang, En-hui and Ye, Linfeng},
  booktitle={European Conference on Computer Vision},
  pages={154--171},
  year={2024},
  organization={Springer}
}

@misc{hamidi2024adversarialtrainingadaptiveknowledge,
      title={Adversarial Training via Adaptive Knowledge Amalgamation of an Ensemble of Teachers}, 
      author={Shayan Mohajer Hamidi and Linfeng Ye},
      year={2024},
      eprint={2405.13324},
      archivePrefix={arXiv},
      primaryClass={cs.LG},
      url={https://arxiv.org/abs/2405.13324}, 
}

@article{
hamidi2025distributed,
title={Distributed Quasi-Newton Method for Fair and Fast Federated Learning},
author={Shayan Mohajer Hamidi and Linfeng Ye},
journal={Transactions on Machine Learning Research},
issn={2835-8856},
year={2025},
url={https://openreview.net/forum?id=KbteA50cni},
note={}
}

@INPROCEEDINGS{10619241,
  author={Yang, En-Hui and Hamidi, Shayan Mohajer and Ye, Linfeng and Tan, Renhao and Yang, Beverly},
  booktitle={2024 IEEE International Symposium on Information Theory (ISIT)}, 
  title={Conditional Mutual Information Constrained Deep Learning: Framework and Preliminary Results}, 
  year={2024},
  volume={},
  number={},
  pages={569-574},
  doi={10.1109/ISIT57864.2024.10619241}}

@ARTICLE{10900607,
  author={Yang, En-Hui and Mohajer Hamidi, Shayan and Ye, Linfeng and Tan, Renhao and Yang, Beverly},
  journal={IEEE Transactions on Neural Networks and Learning Systems}, 
  title={Conditional Mutual Information Constrained Deep Learning for Classification}, 
  year={2025},
  volume={36},
  number={8},
  pages={15436-15448},
  doi={10.1109/TNNLS.2025.3540014}}
